\newtheorem{theorem}{Theorem}
\newtheorem{lemma}{Lemma}
\newtheorem{proposition}{Proposition}
\newtheorem{fact}{Fact}
\renewcommand{\maketitle}{\bgroup\setlength{\parindent}{0pt}
\begin{center}
\LARGE 
  \textbf{\@title} \vspace{10pt}
  \end{center}
\begin{flushleft}
  \@author
\end{flushleft}
\begin{flushright}
\@date
\end{flushright}
\egroup
}
\begin{document}


\title{Probabilistic Integration: A Role in Statistical Computation?}

\author{\textbf{Fran\c{c}ois-Xavier Briol$^{1,2}$, Chris. J. Oates$^{3,4}$, Mark Girolami$^{2,4}$, \\
Michael A. Osborne$^{5}$ and Dino Sejdinovic$^{4,6}$} \vspace{10pt} \\ 
$^1$Department of Statistics, University of Warwick\\
$^2$Department of Mathematics, Imperial College London \\
$^3$School of Mathematics, Statistics and Physics, Newcastle University \\
$^4$The Alan Turing Institute for Data Science \\
$^5$Department of Engineering Science, University of Oxford  \\
$^6$Department of Statistics. University of Oxford }

%
%

\maketitle

\begin{abstract}
\textcolor{black}{
A research frontier has emerged in scientific computation, wherein discretisation error is regarded as a source of epistemic uncertainty that can be modelled. 
This raises several statistical challenges, including the design of statistical methods that enable the coherent propagation of probabilities through a (possibly deterministic) computational work-flow, in order to assess the impact of discretisation error on the computer output. 
This paper examines the case for probabilistic numerical methods in routine statistical computation. Our focus is on numerical integration, where a \emph{probabilistic integrator} is equipped with a full distribution over its output that reflects the fact that the integrand has been discretised. 
Our main technical contribution is to establish, for the first time, rates of posterior contraction for one such method. 
Several substantial applications are provided for illustration and critical evaluation, including examples from statistical modelling, computer graphics and a computer model for an oil reservoir. 
}
\end{abstract}


\section{Introduction}

This paper presents a statistical perspective on the theoretical and methodological issues pertinent to probabilistic numerical methods.
Our aim is to stimulate what we feel is an important discussion about these methods for use in contemporary and emerging scientific and statistical applications.

\subsection{Background}
Numerical methods, for tasks such as approximate solution of a linear system, integration, global optimisation and discretisation schemes to approximate the solution of differential equations, are core building blocks in modern scientific and statistical computation. 
These are typically considered as computational black-boxes that return a point estimate for a deterministic quantity of interest whose numerical error is then neglected.
Numerical methods are thus one part of statistical analysis for which uncertainty is not routinely accounted (although  analysis of errors and bounds on these are often available and highly developed). 
In many situations numerical error will be negligible and no further action is required.
However, if numerical errors are propagated through a computational pipeline and allowed to accumulate, then failure to properly account for such errors could potentially have drastic consequences on subsequent statistical inferences \citep{Mosbach2009,Oates2017hydrocyclones}.

The study of numerical algorithms from a statistical point of view, where uncertainty is formally due to discretisation, is known as {\it probabilistic numerics}. 
The philosophical foundations of probabilistic numerics were, to the best of our knowledge, first clearly exposed in the work of \cite{Larkin1972,Kadane1985b,Diaconis1988} and \cite{OHagan1992}. 
Theoretical support comes from the field of information-based complexity \citep{Traub1988}, where continuous mathematical operations are approximated by discrete and finite operations to achieve a prescribed accuracy level.
Proponents claim that this approach provides three important benefits:
Firstly, it provides a principled approach to quantify and propagate numerical uncertainty through computation, allowing for the possibility of errors with complex statistical structure.
Secondly, it enables the user to uncover key contributors to numerical error, using established statistical techniques such as analysis of variance, in order to better target computational resources. 
Thirdly, this dual perspective on numerical analysis as an inference task enables new insights, as well as the potential to critique and refine existing numerical methods.
On this final point, recent interest has led to several new and effective numerical algorithms in many areas, including differential equations, linear algebra and optimisation.
For an extensive bibliography, the reader is referred to the recent expositions of \cite{Hennig2015} and \cite{Cockayne2017}.

\subsection{Contributions}

\textcolor{black}{Our aim is to stimulate a discussion on the suitability of probabilistic numerical methods in statistical computation.}
A decision was made to focus on numerical integration due to its central role in computational statistics, including frequentist approaches such as bootstrap estimators \citep{Efron1994} and Bayesian approaches, such as computing marginal distributions \citep{Robert2013}.
In particular we focus on numerical integrals where the cost of evaluating the integrand forms a computational bottleneck.
To this end, let $\pi$ be a distribution on a state space $\mathcal{X}$.
The task is to compute (or, rather, to \emph{estimate}) integrals of the form
\begin{equation*}
\Pi[f] := \int f \; \mathrm{d}\pi,
\end{equation*}
where the integrand $f: \mathcal{X} \rightarrow \mathbb{R}$ is a function of interest.
Our motivation comes from settings where $f$ does not possess a convenient closed form so that, until the function is actually evaluated at an input $\bm{x}$, there is epistemic uncertainty over the actual value attained by $f$ at $\bm{x}$.
The use of a probabilistic model for this epistemic uncertainty has been advocated as far back as \cite{Larkin1972}. 
The probabilistic integration method that we focus on is known as \emph{Bayesian \textcolor{black}{cubature}} (BC). 
The method operates by evaluating the integrand at a set of states $\{\bm{x}_i\}_{i=1}^n \subset \mathcal{X}$, so-called {\it discretisation}, and returns a distribution over $\mathbb{R}$ that expresses belief about the true value of $\Pi[f]$.
\textcolor{black}{The computational cost associated with BQ is in general $O(n^3)$.}
As the name suggests, this distribution will be based on a prior that captures certain properties of $f$, and that is updated, via Bayes' rule, on the basis of evaluations of the integrand. 
The \emph{maximum a posteriori} (MAP) value acts as a point estimate of the integral, while the rest of the distribution captures uncertainty due to the fact that we can only evaluate the integrand at a finite number of inputs. 
\textcolor{black}{However, a theoretical investigation of this posterior\footnote{\textcolor{black}{in contrast to the MAP estimator, which has been well-studied.}} is, to the best of our knowledge, non-existent.}

\textcolor{black}{Our first contribution is therefore to investigate the claim that the BC posterior provides a coherent and honest assessment of the uncertainty due to discretisation of the integrand. 
This claim is shown to be substantiated by rigorous mathematical analysis of BC, building on analogous results from reproducing kernel Hilbert spaces, \emph{if} the prior is well-specified.
In particular, rates of posterior contraction to a point mass centred on the true value $\Pi[f]$ are established.
However, to check that a prior is well-specified for a given integration problem can be non-trivial.}

\textcolor{black}{Our second contribution is to explore the potential for the use of probabilistic integrators in the contemporary statistical context.}
In doing so, we have developed strategies for (i) model evidence evaluation via thermodynamic integration, where a large number of candidate models are to be compared, (ii) inverse problems arising in partial differential equation models for oil reservoirs, (iii) logistic regression models involving high-dimensional latent random effects, and (iv) spherical integration, as used in the rendering of virtual objects in prescribed visual environments.
\textcolor{black}{In each case results are presented ``as they are'' and the relative advantages and disadvantages of the probabilistic approach to integration are presented for critical assessment.}

\subsection{Outline}
The paper is structured as follows. Sec. \ref{sec:prob_integrators} provides background on BC and outlines an analytic framework in which the method can be studied.
Sec. \ref{sec:theoretical_results} describes our novel theoretical results.
Sec. \ref{sec:practical_issues} is devoted to a discussion of practical issues, including the important issue of prior elicitation.
Sec. \ref{sec:numerical_results} presents several novel applications of probabilistic integration for critical assessment\footnote{computer code to reproduce experiments reported in this paper can be downloaded from \url{http://www.warwick.ac.uk/fxbriol/probabilistic_integration}.}. 
Sec. \ref{sec:conclusion} concludes with an appraisal of the suitability of probabilistic numerical methods in the applied statistical context.


\section{Background}\label{sec:prob_integrators}

First we provide the reader with the relevant background.
Sec. \ref{BQ introduce sec} provides a formal description of BC.
Secs. \ref{duality sec} and \ref{duality sec 2} explain how the analysis of BC is dual to minimax analysis in nonparametric regression, and Sec. \ref{how to sample} relates these ideas to established sampling methods.


\paragraph{Set-Up:}

\textcolor{black}{Let $(\mathcal{X},\mathcal{B})$ be a measurable space, where $\mathcal{X}$ will either be a subspace of $\mathbb{R}^d$ or a more general manifold (e.g. the sphere $\mathbb{S}^d$),} in each case equipped with the Borel $\sigma$-algebra $\mathcal{B} = \mathcal{B}(\mathcal{X})$.
Let $\pi$ be a distribution on $(\mathcal{X},\mathcal{B})$.
Our integrand is assumed to be an integrable function $f:\mathcal{X} \rightarrow \mathbb{R}$ whose integral, $\Pi[f] = \int f \mathrm{d}\pi$, is the object of interest.

\paragraph{Notation:}

For functional arguments write $\langle f , g \rangle_2 = \int f g \; \mathrm{d}\pi$, $\|f\|_2 = \langle f , f \rangle_2^{1/2}$ and for vector arguments denote $\|\bm{u}\|_2 = (u_1^2 + \dots + u_d^2)^{1/2}$.
\textcolor{black}{For vector-valued functions $\bm{v} : \mathcal{X} \rightarrow \mathbb{R}^m$ we write $\Pi[\bm{v}]$ for the $m \times 1$ vector whose $i$th element is $\Pi[v_i]$.}
The notation $[u]_+ = \max\{0,u\}$ will be used. 
The relation $a_l \asymp b_l$ is taken to mean that there exist $0 < C_1,C_2 < \infty$ such that $C_1 a_l \leq b_l \leq C_2 a_l$.

A \textit{\textcolor{black}{cubature} rule} describes any functional $\hat{\Pi}$ of the form
\begin{equation}\label{eq:quadrature}
\hat{\Pi}[f] = \sum_{i=1}^n w_i f(\bm{x}_i),
\end{equation}
for some states $\{\bm{x}_i\}_{i=1}^n \subset \mathcal{X}$ and weights $\{w_i\}_{i=1}^n \subset \mathbb{R}$. 
The term \textit{quadrature rule} is sometimes preferred when the domain of integration is one-dimensional (i.e. $d = 1$). 
\textcolor{black}{The notation $\hat{\Pi}[f]$ is motivated by the fact that this expression can be re-written as the integral of $f$ with respect to an empirical measure $\hat{\pi} = \sum_{i=1}^n w_i \delta_{\bm{x}_i}$, where $\delta_{\bm{x}_i}$ is an atomic measure (i.e. for all $A \in \mathcal{B}$, $\delta_{\bm{x}_i}(A) = 1$ if $\bm{x}_i \in A$, $\delta_{\bm{x}_i}(A) = 0$ if $\bm{x}_i \notin A$). }
The weights $w_i$ can be negative and need not satisfy $\sum_{i=1}^n w_i = 1$. 

\subsection{Bayesian Cubature} \label{BQ introduce sec}

\textcolor{black}{Probabilistic integration begins by defining a probability space $(\Omega,\mathcal{F},\mathbb{P})$ and an associated stochastic process $g : \mathcal{X} \times \Omega \rightarrow \mathbb{R}$, such that for each $\omega \in \Omega$, $g(\cdot,\omega)$ belongs to a linear topological space $\mathcal{L}$.
For BC, \cite{Larkin1972} considered a Gaussian process (GP); this is a stochastic process such that the random variables $\omega \mapsto Lg(\cdot,\omega)$ are Gaussian for all $L \in \mathcal{L}^*$, where $\mathcal{L}^*$ is the topological dual of $\mathcal{L}$ \citep{Bogachev1998}.
In this paper, to avoid technical obfuscation, it is assumed that $\mathcal{L}$ contains only continuous functions. }
Let $\mathbb{E}_\omega$ denote expectation taken over $\omega \sim \mathbb{P}$.
A GP can be characterised by its mean function $m(\bm{x}) = \mathbb{E}_\omega[g(\bm{x},\omega)]$, and its covariance function $c(\bm{x},\bm{x}') = \mathbb{E}_\omega[(g(\bm{x},\omega) - m(\bm{x})) (g(\bm{x}',\omega) - m(\bm{x}'))]$ and we write $g \sim \mathcal{N}(m,c)$. 
\textcolor{black}{In this paper we assume without loss of generality that $m \equiv 0$.}
Note that other priors could also be used (e.g. a Student-t process affords heavier tails for values assumed by the integrand). 

\textcolor{black}{The next step is to consider the restriction of $\mathbb{P}$ to the set $\{\omega \in \Omega : g(\bm{x}_i,\omega) = f(\bm{x}_i), 1 \leq i \leq n\}$ to induce a posterior measure $\mathbb{P}_n$ over $\mathcal{L}$.
The fact that $\mathcal{L}$ contains only continuous functions ensures that $g(\bm{x}_i,\omega)$ is well-defined\footnote{\textcolor{black}{this would not have been the case if instead $\mathcal{L} = L^2(\pi)$.}}.
Moreover the restriction to a $\mathbb{P}$-null set is also well-defined\footnote{\textcolor{black}{since the canonical space of continuous processes is a Polish space and all Polish spaces are Borel spaces and thus admit regular conditional laws \citep[c.f. Theorem A1.2 and Theorem 6.3 of][]{Kallenberg2002}.}}. }
Then, for BC, $\mathbb{P}_n$ can be shown to be a GP, denoted $\mathcal{N}(m_n,c_n)$ \citep[see Chap. 2 of][]{Rasmussen2006}.

The final step is to produce a distribution on $\mathbb{R}$ by projecting the posterior $\mathbb{P}_n$ defined on $\mathcal{L}$ through the integration operator.
A sketch of the procedure is presented in Figure \ref{fig:sketch_BQ} and the relevant formulae are now provided.
Denote by $\mathbb{E}_n$, $\mathbb{V}_n$ the expectation and variance taken with respect to $\mathbb{P}_n$.
Write $\bm{f} \in \mathbb{R}^n$ for the vector of $f_i = f(\bm{x}_i)$ values, $X = \{\bm{x}_i\}_{i=1}^n$ and $\bm{c}(\bm{x},X) = \bm{c}(X,\bm{x})^\top $ for the $1 \times n$ vector whose $i$th entry is $c(\bm{x},\bm{x}_i)$ and $\bm{C}$ for the matrix with entries $C_{i,j} = c(\bm{x}_i,\bm{x}_j)$.
\begin{proposition} \label{prop: derive mean and var}
The induced distribution of $\Pi[g]$ is Gaussian with mean and variance
\begin{eqnarray}
\mathbb{E}_n [\Pi[g]] & = & \Pi[\bm{c}(\cdot,X)] \bm{C}^{-1} \bm{f} \label{eq:posterior_mean} \\
\mathbb{V}_n [\Pi[g]] & = & \Pi\Pi[c(\cdot,\cdot)] - \Pi[\bm{c}(\cdot,X)] \bm{C}^{-1} \Pi[\bm{c}(X,\cdot)]. \label{eq:posterior_var}
\end{eqnarray}
\end{proposition}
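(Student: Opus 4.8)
The plan is to combine two ingredients. First, conditioning the centred prior $g\sim\mathcal{N}(0,c)$ on the noiseless data $g(\bm{x}_i)=f(\bm{x}_i)$ produces a posterior that is again a GP, $\mathbb{P}_n=\mathcal{N}(m_n,c_n)$, with mean and covariance given by the standard Gaussian process regression identities
\begin{align*}
m_n(\bm{x}) &= \bm{c}(\bm{x},X)\,\bm{C}^{-1}\bm{f}, \\
c_n(\bm{x},\bm{x}') &= c(\bm{x},\bm{x}') - \bm{c}(\bm{x},X)\,\bm{C}^{-1}\,\bm{c}(X,\bm{x}').
\end{align*}
Second, the integration operator $\Pi[\cdot]=\int\cdot\,\mathrm{d}\pi$ is a continuous linear functional on $\mathcal{L}$. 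The latter is where the modelling choice $\mathcal{L}\subset C(\mathcal{X})$ pays off: because $\pi$ is a probability measure, integration against $\pi$ is bounded on continuous functions, so $\Pi\in\mathcal{L}^*$ (this is exactly what fails for pointwise-type functionals on $L^2(\pi)$, cf.\ the footnote above). The defining property of a GP then guarantees that $\omega\mapsto\Pi[g(\cdot,\omega)]$ is Gaussian under $\mathbb{P}_n$, which establishes the distributional claim and reduces the task to identifying the first two moments.

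For the mean, I would interchange $\Pi$ with the posterior expectation $\mathbb{E}_n$ and then use linearity of $\Pi$ to pull the constant vector $\bm{C}^{-1}\bm{f}$ outside the integral:
\begin{align*}
\mathbb{E}_n[\Pi[g]] = \Pi[\mathbb{E}_n[g]] = \Pi[m_n] = \Pi[\bm{c}(\cdot,X)]\,\bm{C}^{-1}\bm{f}.
\end{align*}
For the variance, writing the centred integral as $\Pi[g]-\mathbb{E}_n[\Pi[g]]=\Pi[g-m_n]$, expanding the square as a double integral over $\mathcal{X}\times\mathcal{X}$, and interchanging both integrals with $\mathbb{E}_n$ gives
\begin{align*}
\mathbb{V}_n[\Pi[g]] &= \mathbb{E}_n\big[\Pi[g-m_n]^2\big] \\
&= \int\!\!\int \mathbb{E}_n\big[(g(\bm{x})-m_n(\bm{x}))(g(\bm{x}')-m_n(\bm{x}'))\big]\,\mathrm{d}\pi(\bm{x})\,\mathrm{d}\pi(\bm{x}') \\
&= \Pi\Pi[c_n].
\end{align*}
Substituting the expression for $c_n$ and once more using linearity of $\Pi$ to extract $\bm{C}^{-1}$ recovers $\Pi\Pi[c]-\Pi[\bm{c}(\cdot,X)]\bm{C}^{-1}\Pi[\bm{c}(X,\cdot)]$, as claimed.

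The main obstacle is rigorously justifying these interchanges of the ($\pi$- or $\pi\times\pi$-) integral over $\mathcal{X}$ with the expectation over $\omega$, which are Fubini--Tonelli steps rather than mere linearity of expectation. What is needed is joint measurability of $(\bm{x},\omega)\mapsto g(\bm{x},\omega)$ and absolute integrability of the relevant integrands with respect to the product measures $\pi\times\mathbb{P}_n$ and $(\pi\times\pi)\times\mathbb{P}_n$. Sample-path continuity delivers the measurability, while the domination required for integrability follows from the finiteness of $\Pi\Pi[c]$ together with the $L^2(\mathbb{P}_n)$ bound $\mathbb{E}_n[g(\bm{x})^2]<\infty$ (e.g.\ via Cauchy--Schwarz applied to $\int\mathbb{E}_n|g(\bm{x})|\,\mathrm{d}\pi$). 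Once this bookkeeping is discharged, the algebraic simplification using linearity of $\Pi$ is entirely routine.
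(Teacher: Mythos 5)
Your proposal is correct and follows essentially the same route as the paper: invoke the standard GP conjugacy formulae for $m_n$ and $c_n$, then pass the posterior expectation and variance through the integral via Fubini's theorem to obtain $\Pi[m_n]$ and $\Pi\Pi[c_n]$ respectively. The extra detail you supply — that $\Pi$ is a bounded linear functional on $\mathcal{L}\subset C(\mathcal{X})$ so the Gaussianity of $\Pi[g]$ follows from the definition of a GP, and the measurability/integrability bookkeeping behind the Fubini steps — is a welcome elaboration of what the paper leaves implicit, but it is not a different argument.
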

\noindent Here, $\Pi\Pi[c(\cdot,\cdot)]$ denotes the integral of $c$ with respect to each argument. All proofs in this paper are reserved for Supplement \ref{appendix:general_RKHS}.
\textcolor{black}{It can be seen that the computational cost of obtaining this full posterior is much higher than that of obtaining a point estimate for the integral, at $O(n^3)$. However, certain combinations of point sets and covariance functions can reduce this cost by several orders of magnitude (see e.g. \cite{Karvonen2017}).}

\begin{figure}[t]
\center
\includegraphics[width = 0.75\textwidth]{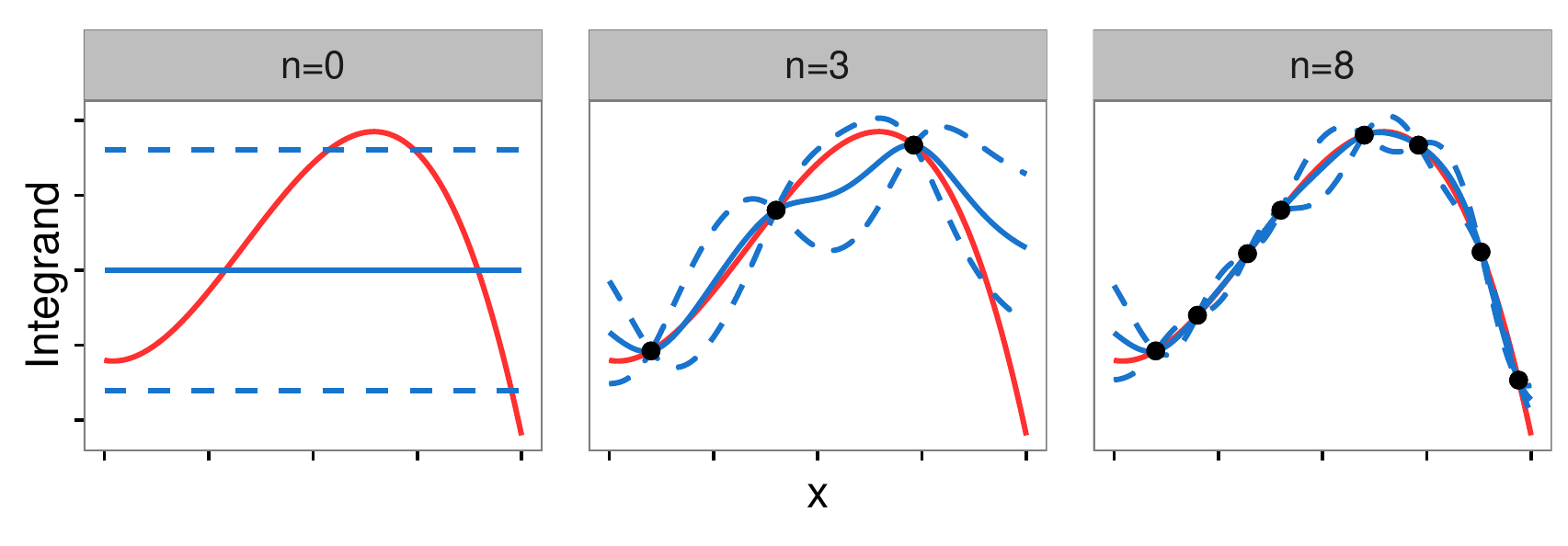}
\vspace{-9mm}
\center
\includegraphics[width = 0.75\textwidth]{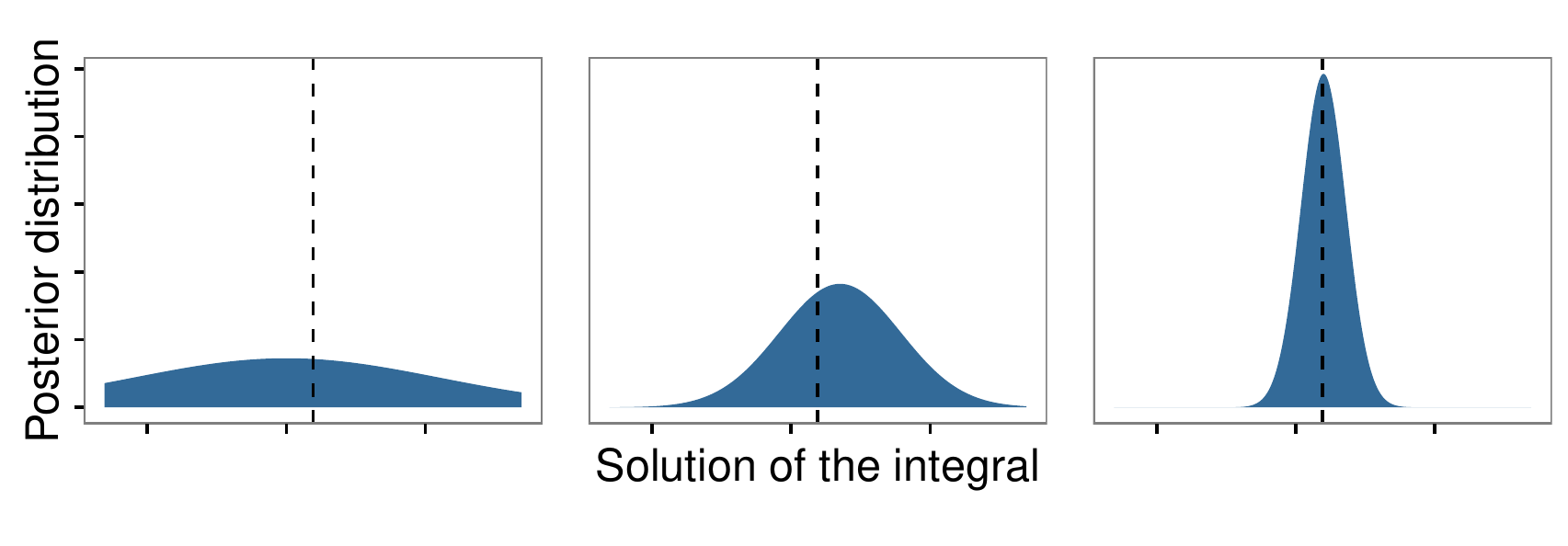}
\vspace{-3mm}
\caption{Sketch of Bayesian cubature. The top row shows the approximation of the integrand $f$ (red) by the posterior mean $m_n$ (blue) as the number $n$ of function evaluations is increased. The dashed lines represent point-wise $95\%$ posterior credible intervals. 
The bottom row shows the Gaussian distribution with mean $\mathbb{E}_n[\Pi[g]]$ and variance $\mathbb{V}_n[\Pi[g]]$ and the dashed black line gives the true value of the integral $\Pi[f]$. 
As the number of states $n$ increased, this posterior distribution contracts onto the true value of the integral $\Pi[f]$.}
\label{fig:sketch_BQ}
\end{figure}

BC formally associates the stochastic process $g$ with a prior model for the integrand $f$.
This in turn provides a probabilistic model for epistemic uncertainty over the value of the integral $\Pi[f]$.
Without loss of generality we assume $m \equiv 0$ for the remainder of the paper.
Then Eqn. \ref{eq:posterior_mean} takes the form of a \textcolor{black}{cubature} rule
\begin{equation}
\mathbb{E}_n[\Pi[g]]
= \hat{\Pi}_{\text{BC}}[f]
:= \sum_{i=1}^n w^{\text{BC}}_i f(\bm{x}_i) \label{post mean first}
\end{equation}
where $\bm{w}^{\text{BC}} := \bm{C}^{-1} \Pi[\bm{c}(X,\cdot)]$. 
Furthermore, Eqn. \ref{eq:posterior_var} does not depend on function values $\{f_i\}_{i=1}^n$, but only on the location of the states $\{\bm{x}_i\}_{i=1}^n$ and the choice of covariance function $c$. 
This is useful as it allows state locations and weights to be pre-computed and re-used. 
\textcolor{black}{However, it also means that the variance is endogeneous, being driven by the choice of prior. }
A valid quantification of uncertainty thus relies on a well-specified prior; we consider this issue further in Sec. \ref{sec:calibration}.

The BC mean (Eqn. \ref{post mean first}) coincides with classical \textcolor{black}{cubature} rules for specific choices of covariance function $c$.
For example, in one dimension a Brownian covariance function $c(x,x') = \min(x,x')$ leads to a posterior mean $m_n$ that is a piecewise linear interpolant of $f$ between the states $\{x_i\}_{i=1}^n$, i.e. the trapezium rule \citep{Suldin1959}.
\textcolor{black}{Similarly, \cite{Sarkka2015} constructed a covariance function $c$ for which Gauss-Hermite \textcolor{black}{cubature} is recovered, and \cite{Karvonen2017classical} showed how other polynomial-based \textcolor{black}{cubature} rules can be recovered.}
Clearly the point estimator in Eqn. \ref{post mean first} is a natural object; it has also received attention in both the \emph{kernel quadrature} literature \citep{Sommariva2006} and \emph{empirical interpolation} literature \citep{Kristoffersen2013}.
Recent work with a computational focus includes \cite{Kennedy1998,Minka2000,Rasmussen2003,Huszar2012,Gunter2014,Briol2015,Karvonen2017,Oettershagen2017}. 
\textcolor{black}{The present paper focuses on the full posterior, as opposed to just the point estimator that these papers studied.}


\subsection{\textcolor{black}{Cubature} Rules in Hilbert Spaces} \label{duality sec}

Next we review how analysis of the approximation properties of the \textcolor{black}{cubature} rule $\hat{\Pi}_{\text{BC}}[f]$ can be carried out in terms of reproducing kernel Hilbert spaces \citep[RKHS;][]{Berlinet2004}. 

Consider a Hilbert space $\mathcal{H}$ with inner product $\langle\cdot,\cdot\rangle_{\mathcal{H}}$ and associated norm $\|\cdot\|_{\mathcal{H}}$.
$\mathcal{H}$ is said to be an RKHS if there exists a symmetric, positive definite function $k:\mathcal{X}\times \mathcal{X}\rightarrow \mathbb{R}$, called a \emph{kernel}, that satisfies two properties: (i) $k(\cdot,\bm{x}) \in \mathcal{H}$ for all $\bm{x} \in \mathcal{X}$ and; (ii) $f(\bm{x}) = \langle f, k(\cdot,\bm{x}) \rangle_\mathcal{H}$ for all $\bm{x} \in \mathcal{X}$ and $f \in \mathcal{H}$ (the \emph{reproducing} property). 
It can be shown that every kernel defines a RKHS and every RKHS admits a unique reproducing kernel \citep[][Sec. 1.3]{Berlinet2004}. 
In this paper all kernels $k$ are assumed to satisfy $R := \int k(\bm{x},\bm{x}) \; \mathrm{d}\pi(\bm{x}) < \infty$.
In particular this guarantees $\int f^2 \; \mathrm{d}\pi < \infty$ for all $f \in \mathcal{H}$.
Define the \textit{kernel mean} $\mu(\pi):\mathcal{X} \rightarrow \mathbb{R}$ as $\mu(\pi)(\bm{x}) := \Pi[k(\cdot,\bm{x})]$.
This exists in $\mathcal{H}$ as a consequence of $R < \infty$ \citep{Smola2007}. 
The name is justified by the fact\footnote{the integral and inner product commute due to the existence of $\mu(\pi)$ as a Bochner integral \citep[p510]{Steinwart2008}.} that
\begin{equation*}
\begin{split}
\forall f \in \mathcal{H} : \qquad \Pi[f] = \int f \;\mathrm{d}\pi 
& =  \int \big\langle f, k(\cdot,\bm{x}) \big\rangle_{\mathcal{H}} \; \mathrm{d}\pi(\bm{x})  \\
& =  \Big\langle f , \int k(\cdot,\bm{x}) \; \mathrm{d}\pi(\bm{x}) \Big\rangle_{\mathcal{H}}
= \langle f, \mu(\pi) \rangle_{\mathcal{H}}.
\end{split}
\end{equation*}
The reproducing property permits an elegant theoretical analysis, with many quantities of interest tractable in $\mathcal{H}$.
In the language of kernel means, \textcolor{black}{cubature} rules of the form in Eqn. \ref{eq:quadrature} can be written as $\hat{\Pi}[f] = \langle f , \mu(\hat{\pi}) \rangle_{\mathcal{H}}$ where $\mu(\hat{\pi})$ is the approximation to the kernel mean given by $\mu(\hat{\pi})(\bm{x}) = \hat{\Pi}[k(\cdot,\bm{x})]$.
For fixed $f \in \mathcal{H}$, the integration error associated with $\hat{\Pi}$ can then be expressed as
\begin{equation*}
\hat{\Pi}[f] - \Pi[f]
= \langle f, \mu(\hat{\pi}) \rangle_{\mathcal{H}}  - \langle f, \mu(\pi) \rangle_{\mathcal{H}}  
= \langle f, \mu(\hat{\pi}) - \mu(\pi) \rangle_{\mathcal{H}}.
\end{equation*}
A tight upper bound for the error is obtained by Cauchy-Schwarz:
\begin{equation} 
|\hat{\Pi}[f] - \Pi[f] |  \leq \|f \|_{\mathcal{H}} \|  \mu(\hat{\pi}) - \mu(\pi) \|_{\mathcal{H}}.
\label{CSMT}
\end{equation}
\textcolor{black}{The expression above\footnote{\textcolor{black}{sometimes called the Koksma-Hlawka inequality \citep{Hickernell1998}.}} decouples the magnitude (in $\mathcal{H}$) of the integrand $f$ from the kernel mean approximation error.}
The following sections discuss how \textcolor{black}{cubature} rules can be tailored to target the second term in this upper bound. 

\subsection{\textcolor{black}{Optimality of \textcolor{black}{Cubature} Weights}} \label{duality sec 2}

Denote the dual space of $\mathcal{H}$ as $\mathcal{H}^*$ and denote its corresponding norm $\|\cdot\|_{\mathcal{H}^*}$.
The performance of a \textcolor{black}{cubature} rule can be quantified by its \textit{worst-case error} (WCE) in the RKHS:
$$
\|\hat{\Pi} - \Pi\|_{\mathcal{H}^*} = \sup_{\|f\|_{\mathcal{H}} \leq 1} |\hat{\Pi}[f] - \Pi[f]|
$$
The WCE is characterised as the error in estimating the kernel mean:
\begin{fact} \label{mmd and mean element}
$\|\hat{\Pi} - \Pi\|_{\mathcal{H}^*} = \|  \mu(\hat{\pi}) - \mu(\pi) \|_{\mathcal{H}}$.
\end{fact}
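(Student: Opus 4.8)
The plan is to recognise that the error functional $\hat{\Pi} - \Pi$ is a bounded linear functional on $\mathcal{H}$ whose Riesz representer is exactly $\mu(\hat{\pi}) - \mu(\pi)$, so the claim reduces to the fact that the dual norm of such a functional equals the Hilbert-space norm of its representer. The groundwork is already in place: the excerpt establishes that for every $f \in \mathcal{H}$,
\begin{equation*}
\hat{\Pi}[f] - \Pi[f] = \langle f, \mu(\hat{\pi}) - \mu(\pi) \rangle_{\mathcal{H}},
\end{equation*}
using the reproducing property to write $\hat{\Pi}[f] = \langle f, \mu(\hat{\pi}) \rangle_{\mathcal{H}}$ and $\Pi[f] = \langle f, \mu(\pi) \rangle_{\mathcal{H}}$. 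Before invoking this, I would note that both kernel means genuinely live in $\mathcal{H}$: $\mu(\pi)$ by the standing assumption $R < \infty$, and $\mu(\hat{\pi})(\cdot) = \sum_{i=1}^n w_i k(\cdot, \bm{x}_i)$ by property (i) of the reproducing kernel, so their difference is a well-defined element $h := \mu(\hat{\pi}) - \mu(\pi) \in \mathcal{H}$.

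The first step gives the upper bound directly. Taking the supremum over the unit ball and applying Cauchy--Schwarz (exactly as in Eqn. \ref{CSMT}),
\begin{equation*}
\|\hat{\Pi} - \Pi\|_{\mathcal{H}^*} = \sup_{\|f\|_{\mathcal{H}} \leq 1} |\langle f, h \rangle_{\mathcal{H}}| \leq \sup_{\|f\|_{\mathcal{H}} \leq 1} \|f\|_{\mathcal{H}} \, \|h\|_{\mathcal{H}} = \|h\|_{\mathcal{H}}.
\end{equation*}
The second step is to show this bound is attained, establishing the reverse inequality. If $h = 0$ both sides are zero and there is nothing to prove; otherwise I would exhibit the extremiser $f^\star := h / \|h\|_{\mathcal{H}}$, which satisfies $\|f^\star\|_{\mathcal{H}} = 1$ and
\begin{equation*}
|\langle f^\star, h \rangle_{\mathcal{H}}| = \frac{\langle h, h \rangle_{\mathcal{H}}}{\|h\|_{\mathcal{H}}} = \|h\|_{\mathcal{H}},
\end{equation*}
so the supremum is at least $\|h\|_{\mathcal{H}}$. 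Combining the two inequalities yields equality, which is precisely the stated identity.

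I anticipate no genuine obstacle here, since the argument is the standard identification of a Hilbert-space dual norm via the Riesz representation theorem; the only points requiring care are bookkeeping rather than difficulty. Specifically, one must confirm that $\hat{\Pi} - \Pi$ is indeed bounded on $\mathcal{H}$ (immediate from the Cauchy--Schwarz step, since $h \in \mathcal{H}$), and that the worst-case error and the kernel-mean discrepancy are being compared on the same unit ball. Everything else follows from the inner-product representation already derived in the excerpt, so the result is essentially a one-line consequence once the representer $h$ is identified.
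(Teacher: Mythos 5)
Your proposal is correct and follows essentially the same route as the paper's proof: the upper bound via Cauchy--Schwarz (Eqn.~\ref{CSMT}) and the reverse inequality by testing against the representer $\mu(\hat{\pi}) - \mu(\pi)$ itself. The only difference is cosmetic --- you normalise the extremiser and handle the degenerate case $h = 0$ explicitly, which the paper leaves implicit.
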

Minimisation of the WCE is natural and corresponds to solving a least-squares problem in the feature space induced by the kernel:
Let $\bm{w} \in \mathbb{R}^n$ denote the vector of weights $\{w_i\}_{i=1}^n$, $\bm{z} \in \mathbb{R}^n$ be a vector such that $z_i = \mu(\pi)(\bm{x}_i)$, and $\bm{K} \in \mathbb{R}^{n \times n}$ be the matrix with entries $K_{i,j} = k(\bm{x}_i,\bm{x}_j)$.
Then we obtain the following:
\begin{fact} \label{prop:WCE_kernelmean_equivalence}
$\|\hat{\Pi} - \Pi\|_{\mathcal{H}^*}^2 =  \bm{w}^\top  \bm{K} \bm{w} - 2 \bm{w}^\top  \bm{z}  +  \Pi[\mu(\pi)]$.
\end{fact}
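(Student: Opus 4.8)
The plan is to reduce everything to a squared-norm computation in $\mathcal{H}$ via Fact \ref{mmd and mean element}, then expand and evaluate each term using the reproducing property. By Fact \ref{mmd and mean element} we have $\|\hat{\Pi} - \Pi\|_{\mathcal{H}^*} = \|\mu(\hat{\pi}) - \mu(\pi)\|_{\mathcal{H}}$, so it suffices to show that $\|\mu(\hat{\pi}) - \mu(\pi)\|_{\mathcal{H}}^2$ equals the stated right-hand side; the entire argument then takes place inside the RKHS and no further supremum over $f$ is needed.

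First I would record the explicit form of the approximate kernel mean. Since $\hat{\pi} = \sum_{i=1}^n w_i \delta_{\bm{x}_i}$, applying the rule to $k(\cdot,\bm{x})$ gives $\mu(\hat{\pi}) = \sum_{i=1}^n w_i k(\cdot,\bm{x}_i)$, a finite linear combination of the canonical features $k(\cdot,\bm{x}_i)$, each lying in $\mathcal{H}$ by property (i) of the reproducing kernel. The element $\mu(\pi)$ lies in $\mathcal{H}$ as well, by the standing assumption $R < \infty$ invoked in Sec. \ref{duality sec}. Thus the difference is a genuine element of $\mathcal{H}$ and its squared norm is well defined.

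Next I would expand $\|\mu(\hat{\pi}) - \mu(\pi)\|_{\mathcal{H}}^2$ by bilinearity into $\langle \mu(\hat{\pi}), \mu(\hat{\pi})\rangle_{\mathcal{H}} - 2\langle \mu(\hat{\pi}), \mu(\pi)\rangle_{\mathcal{H}} + \langle \mu(\pi), \mu(\pi)\rangle_{\mathcal{H}}$. For the first term, pulling the finite sums outside the inner product gives $\sum_{i,j} w_i w_j \langle k(\cdot,\bm{x}_i), k(\cdot,\bm{x}_j)\rangle_{\mathcal{H}}$, and the reproducing property identifies each inner product with $k(\bm{x}_i,\bm{x}_j) = K_{i,j}$, yielding $\bm{w}^\top \bm{K} \bm{w}$. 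For the cross term, the reproducing property applied to $\mu(\pi) \in \mathcal{H}$ gives $\langle k(\cdot,\bm{x}_i), \mu(\pi)\rangle_{\mathcal{H}} = \mu(\pi)(\bm{x}_i) = z_i$, so the cross term equals $\sum_i w_i z_i = \bm{w}^\top \bm{z}$.

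Finally the third term $\langle \mu(\pi), \mu(\pi)\rangle_{\mathcal{H}} = \|\mu(\pi)\|_{\mathcal{H}}^2$ is rewritten using the kernel-mean identity $\Pi[f] = \langle f, \mu(\pi)\rangle_{\mathcal{H}}$ established in Sec. \ref{duality sec}: taking $f = \mu(\pi)$ yields $\|\mu(\pi)\|_{\mathcal{H}}^2 = \Pi[\mu(\pi)]$. Collecting the three contributions delivers the claimed identity. I do not anticipate any real obstacle, as the computation is a routine quadratic expansion; the only points requiring care are the interchange of the inner product with the (finite) summations, which is automatic, and the verification that $\mu(\pi) \in \mathcal{H}$ so that the reproducing property may be applied to it, which is supplied by the assumption $R < \infty$.
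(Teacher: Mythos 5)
Your proposal is correct and follows essentially the same route as the paper: invoke Fact \ref{mmd and mean element} to identify the squared worst-case error with $\|\mu(\hat{\pi})-\mu(\pi)\|_{\mathcal{H}}^2$, then expand the quadratic and evaluate each term via the reproducing property. The only difference is presentational — you spell out the inner-product evaluations and the membership $\mu(\pi)\in\mathcal{H}$ more explicitly than the paper's ``direct calculation'' — but the argument is the same.
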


Several optimality properties for integration in RKHS were collated in Sec. 4.2 of \cite{Novak2008}. 
Relevant to this work is that an optimal estimate $\hat{\Pi}$ can, without loss of generality, take the form of a \textcolor{black}{cubature} rule (i.e. of the form $\hat{\Pi}$ in Eqn. \ref{eq:quadrature}). 
To be more precise, any non-linear and/or adaptive estimator can be matched\footnote{of course, adaptive \textcolor{black}{cubature} may provide superior performance for a \emph{single} fixed function $f$, and the minimax result is not true in general outside the RKHS framework.} in terms of asymptotic WCE by a \textcolor{black}{cubature} rule as we have defined. 

To relate these ideas to BC, consider the challenge of deriving an optimal \textcolor{black}{cubature} rule, conditional on fixed states $\{\bm{x}_i\}_{i=1}^n$, that minimises the WCE (in the RKHS $\mathcal{H}_k$) over weights $\bm{w} \in \mathbb{R}^n$.
From Fact \ref{prop:WCE_kernelmean_equivalence}, the solution to this convex problem is $\bm{w} = \bm{K}^{-1}\bm{z}$.
\textcolor{black}{This shows that \emph{if} the reproducing kernel $k$ is equal to the covariance function $c$ of the GP, then the MAP from BC is identical to the {\it optimal} \textcolor{black}{cubature} rule in the RKHS \citep{Kadane1985}. 
Furthermore, with $k = c$, the expression for the WCE in Fact \ref{prop:WCE_kernelmean_equivalence} shows that $\mathbb{V}_n[\Pi[g]] = \|\hat{\Pi}_{\text{BC}} - \Pi\|_{\mathcal{H}^*}^2 \leq \|\hat{\Pi} - \Pi\|_{\mathcal{H}^*}^2$ where $\hat{\Pi}$ is any other \textcolor{black}{cubature} rule $\hat{\Pi}$ based on the same states $\{\bm{x}_i\}_{i=1}^n$. }
Regarding optimality, the problem is thus reduced to selection of states $\{\bm{x}_i\}_{i=1}^n$.


\subsection{Selection of States} \label{how to sample}

In earlier work, \cite{OHagan1991} considered states $\{\bm{x}_i\}_{i=1}^n$ that are employed in Gaussian \textcolor{black}{cubature} methods.
\cite{Rasmussen2003} generated states using Monte Carlo (MC), calling the approach \emph{Bayesian} MC (BMC).
Recent work by \cite{Gunter2014,Briol2015} selected states using experimental design to target the variance $\mathbb{V}_n[\Pi[g]]$.
These approaches are now briefly recalled.

\subsubsection{Monte Carlo Methods}

An MC method is a \textcolor{black}{cubature} rule based on uniform weights $w_i^{\text{MC}} := 1/n$ and random states $\{\bm{x}_i\}_{i=1}^n$. 
The simplest of those methods consists of sampling states $\{\bm{x}_i^{\text{MC}}\}_{i=1}^n$ independently from $\pi$.
For un-normalised densities, Markov chain Monte Carlo (MCMC) methods proceed similarly but induce a dependence structure among the $\{\bm{x}_i^{\text{MCMC}}\}_{i=1}^n$.
We denote these (random) estimators by $\hat{\Pi}_{\text{MC}}$ (when $\bm{x}_i = \bm{x}_i^{\text{MC}}$) and $\hat{\Pi}_{\text{MCMC}}$ (when $\bm{x}_i = \bm{x}_i^{\text{MCMC}}$).
Uniformly weighted estimators are well-suited to many challenging integration problems since they provide a dimension-independent convergence rate for the WCE of $O_P(n^{-1/2})$.
They are widely applicable and straight-forward to analyse; for instance the central limit theorem (CLT) gives that
$\sqrt{n} ( \hat{\Pi}_{\text{MC}}[f]  - \Pi[f] ) \rightarrow \mathcal{N} (0,\tau_f^{-1} )$
where $\tau_f^{-1} = \Pi[f^2] - \Pi[f]^2$ and the convergence is in distribution. 
However, the CLT \textcolor{black}{may not be} well-suited as a measure of \emph{epistemic} uncertainty (i.e. as an explicit model for numerical error) since (i) it is only valid asymptotically, and (ii) $\tau_f$ is unknown, depending on the integral $\Pi[f]$ being estimated.

Quasi Monte Carlo (QMC) methods exploit knowledge of the RKHS $\mathcal{H}$ to spread the states in an efficient, deterministic way over the domain $\mathcal{X}$ \citep{Hickernell1998}. 
QMC also approximates integrals using a \textcolor{black}{cubature} rule $\hat{\Pi}_{\text{QMC}}[f]$ that has uniform weights $w_i^{\text{QMC}} := 1/n$.
The (in some cases) optimal convergence rates, as well as sound statistical properties, of QMC have recently led to interest within statistics \citep[e.g.][]{Gerber2015,Buchholz2017}. 
\textcolor{black}{A related method with non-uniform weights was explored in \cite{Stein1995,Stein1995b}.}

\subsubsection{Experimental Design Methods}

An \emph{Optimal} BC (OBC) rule selects states $\{\bm{x}_i\}_{i=1}^n$ to globally minimise the variance $\mathbb{V}_n[\Pi[f]]$. 
OBC corresponds to classical \textcolor{black}{cubature} rules (e.g. Gauss-Hermite) for specific choices of kernels \citep{Karvonen2017}. 
However OBC cannot in general be implemented; the problem of optimising states is in general NP-hard \citep[][Sec. 10.2.3]{Scholkopf2002}. 

A more pragmatic approach to select states is to use experimental design methods, such as the greedy algorithm that sequentially minimises $\mathbb{V}_n[\Pi[g]]$. 
This method, called \textit{sequential} BC (SBC), is straightforward to implement, e.g. using general-purpose numerical optimisation, and is a probabilistic integration method that is often used \citep{Osborne2012active,Gunter2014}.
More sophisticated optimisation algorithms have also been used:
For example, in the empirical interpolation literature, \cite{Eftang2012} proposed adaptive procedures to iteratively divide the domain of integration into sub-domains. 
In the BC literature, \cite{Briol2015} used conditional gradient algorithms for this task. 
A similar approach was recently considered in \cite{Oettershagen2017}.

At present, experimental design schemes do not possess the computational efficiency that we have come to expect from MCMC and QMC.
Moreover, they do not scale well to high-dimensional settings due to the need to repeatedly solve high-dimensional optimisation problems and have few established theoretical guarantees.
For these reasons we will focus next on MC, MCMC and QMC.


\section{Methods}\label{sec:theoretical_results}

This section presents novel theoretical results on probabilistic integration methods in which the states $\{\bm{x}_i\}_{i=1}^n$ are generated with MCMC and QMC.
Sec. \ref{sec:fromMC_toBQ} provides formal definitions, while Sec. \ref{sec:theory_Sobolev} establishes theoretical results.

\subsection{Probabilistic Integration}\label{sec:fromMC_toBQ}

The sampling methods of MCMC and, to a lesser extent, QMC are widely used in statistical computation.
Here we pursue the idea of using MCMC and QMC to generate states for BC, with the aim to exploit BC to account for the possible impact of numerical integration error on inferences made in statistical applications. 
In MCMC it is possible that two states $\bm{x}_i = \bm{x}_j$ are identical.
To prevent the kernel matrix $\bm{K}$ from becoming singular, duplicate states should be discarded\footnote{this is justified since the information contained in function evaluations $f_i = f_j$ is not lost.
This does {\it not} introduce additional bias into BC methods, in contrast to MC methods.}.
Then we define $\hat{\Pi}_{\text{BMCMC}}[f] := \sum_{i=1}^n w_i^{\text{BC}} f (\bm{x}_i^{\text{MCMC}})$ and $\hat{\Pi}_{\text{BQMC}}[f] := \sum_{i=1}^n w_i^{\text{BC}} f (\bm{x}_i^{\text{QMC}})$.
This two-step procedure requires no modification to existing MCMC or QMC sampling methods.
Each estimator is associated with a full posterior distribution, described in Sec. \ref{BQ introduce sec}.

\textcolor{black}{A moment is taken to emphasise that the apparently simple act of re-weighting MCMC samples can have a dramatic improvement on convergence rates for integration of a sufficiently smooth integrand. }
Whilst our main interest is in the suitability of BC as a statistical model for discretisation of an integral, we highlight the efficient point estimation which comes out as a by-product. 

To date we are not aware of any previous use of BMCMC, presumably due to analytic intractability of the kernel mean when $\pi$ is un-normalised. 
BQMC has been described by \cite{Hickernell2005,marques2013,Sarkka2015}.
To the best of our knowledge there has been no theoretical analysis of the posterior distributions associated with either method.
The goal of the next section is to establish these fundamental results.


\subsection{Theoretical Properties} \label{sec:theory_Sobolev}

In this section we present novel theoretical results for BMC, BMCMC and BQMC.
\textcolor{black}{The setting we consider assumes that the true integrand $f$ belongs to a RKHS $\mathcal{H}$ and that the GP prior is based on a covariance function $c$ which is identical to the kernel $k$ of $\mathcal{H}$.
That the GP is not supported on $\mathcal{H}$, but rather on a Hilbert scale of $\mathcal{H}$, is viewed as a technical detail:
Indeed, a GP can be constructed on $\mathcal{H}$ via $c(\bm{x},\bm{x}') = \int k(\bm{x},\bm{y}) k(\bm{y},\bm{x}') \mathrm{d}\pi(\bm{y})$ and a theoretical analysis similar to ours could be carried out \citep[Lemma 2.2 of][]{Cialenco2012}. }


\subsubsection{Bayesian Markov Chain Monte Carlo} \label{sec:theory_BMC}

As a baseline, we begin by noting a general result for MC estimation.
This requires a slight strengthening of the assumption on the kernel: $k_{\max} := \sup_{\bm{x} \in \mathcal{X}} k(\bm{x},\bm{x}) < \infty$.
This implies that all $f \in \mathcal{H}$ are bounded on $\mathcal{X}$. 
For MC estimators, Lemma 33 of \cite{Song2008} show that, when $k_{\max} < \infty$, the WCE converges in probability at the classical rate $\|\hat{\Pi}_{\text{MC}} - \Pi\|_{\mathcal{H}^*}  = O_P(n^{-1/2})$.

Turning now to BMCMC (and BMC as a special case), we consider the compact manifold $\mathcal{X} = [0,1]^d$.
Below the distribution $\pi$ will be assumed to admit a density with respect to Lebesgue measure, denoted by $\pi(\cdot)$.
Define the \emph{Sobolev} space $\mathcal{H}_\alpha$ to consist of all measurable functions such that $\|f\|_{\mathcal{H},\alpha}^2 := \sum_{i_1 + \dots + i_d \leq \alpha} \| \partial_{x_1}^{i_1} \dots \partial_{x_d}^{i_d} f \|_2^2 < \infty$.
Here $\alpha$ is the \emph{order} of $\mathcal{H}_\alpha$ and $(\mathcal{H}_\alpha,\|\cdot\|_{\mathcal{H},\alpha})$ is a RKHS.
Derivative counting can hence be a principled approach for practitioners to choose a suitable RKHS. 
All results below apply to RKHS $\mathcal{H}$ that are norm-equivalent\footnote{two norms $\|\cdot\|$, $\|\cdot\|'$ on a vector space $\mathcal{H}$ are \emph{equivalent} when there exists constants $0 < C_1,C_2 < \infty$ such that for all $h \in \mathcal{H}$ we have $C_1 \|h\| \leq \|h\|' \leq C_2 \|h\|$.} to $\mathcal{H}_\alpha$, permitting flexibility in the choice of kernel.
Specific examples of kernels are provided in Sec. \ref{sec:intractable_mean_element}.

Our analysis below is based on the scattered data approximation literature \citep{Wendland2005}.
A minor technical assumption, that enables us to simplify the presentation of results below, is that the set $X = \{\bm{x}_i\}_{i=1}^n$ may be augmented with a finite, pre-determined set $Y = \{\bm{y}_i\}_{i=1}^m$ where $m$ does not increase with $n$.
Clearly this has no bearing on asymptotics. 
For measurable $A$ we write $\mathbb{P}_n[A] = \mathbb{E}_n[1_A]$ where $1_A$ is the indicator function of the event $A$.
\begin{theorem}[BMCMC in $\mathcal{H}_\alpha$] \label{prop:consistency_BMCMC}
Suppose $\pi$ is bounded away from zero on $\mathcal{X} = [0,1]^d$.
Let $\mathcal{H}$ be norm-equivalent to $\mathcal{H}_\alpha$ where $\alpha > d/2$, $\alpha \in \mathbb{N}$.
Suppose states are generated by a reversible, uniformly ergodic Markov chain that targets $\pi$.
Then $\|\hat{\Pi}_{\textnormal{BMCMC}} - \Pi\|_{\mathcal{H}^*} = O_P ( n^{ -\alpha / d + \epsilon} )$ and moreover, if $f \in \mathcal{H}$ and $\delta > 0$,
\textcolor{black}{$$\mathbb{P}_n\{\Pi[f] - \delta < \Pi[g] < \Pi[f] + \delta\} = 1 - O_P(\exp(-C_\delta n^{\frac{2\alpha}{d} - \epsilon})), $$ }
where $C_\delta >0$ depends on $\delta$ and $\epsilon > 0$ can be arbitrarily small.
\end{theorem}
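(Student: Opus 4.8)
The plan is to reduce both conclusions to a single quantity, the worst-case error $\sigma_n := \|\hat{\Pi}_{\text{BMCMC}} - \Pi\|_{\mathcal{H}^*}$, and to control $\sigma_n$ through the geometry of the sampled states. Recall from Sec.~\ref{duality sec 2} that, with $k = c$, the posterior variance satisfies $\mathbb{V}_n[\Pi[g]] = \sigma_n^2$, and that the BC weights minimise the WCE over all weight vectors on the same states. Hence it suffices to (i) prove $\sigma_n = O_P(n^{-\alpha/d + \epsilon})$, and then (ii) convert this into the stated posterior tail bound using the Gaussianity of $\Pi[g]$ under $\mathbb{P}_n$ established in Sec.~\ref{BQ introduce sec}.

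For step (i) I would combine a deterministic approximation-theoretic bound with a probabilistic bound on the \emph{fill distance} $h_X := \sup_{\bm{x} \in \mathcal{X}} \min_{1 \le i \le n} \|\bm{x} - \bm{x}_i\|_2$. On the deterministic side, since $\hat{\Pi}_{\text{BMCMC}}[f] = \Pi[s_{f,X}]$ for the minimum-norm interpolant $s_{f,X}$, and $g := f - s_{f,X}$ vanishes on $X$ with $\|g\|_{\mathcal{H}} \le \|f\|_{\mathcal{H}}$, the $L^2$ sampling inequality of scattered data approximation \citep{Wendland2005} gives $\|g\|_{L^2(\mathcal{X})} \le C h_X^{\alpha} \|g\|_{\mathcal{H},\alpha}$ once $h_X$ falls below a threshold (here $\alpha \in \mathbb{N}$, $\alpha > d/2$, and the cone condition for $[0,1]^d$ are used). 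Because $\pi$ has bounded density, $|\Pi[g]| \le \|\pi\|_\infty \|g\|_{L^2(\mathcal{X})}$, and taking the supremum over $\|f\|_{\mathcal{H}} \le 1$ yields $\sigma_n \le C' h_X^{\alpha}$, with norm-equivalence to $\mathcal{H}_\alpha$ absorbing the constants.

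It therefore remains to show $h_X = O_P(n^{-1/d + \epsilon})$, and this is the step I expect to be the main obstacle, since the states form a dependent MCMC sample rather than an i.i.d.\ one. The plan is to cover $[0,1]^d$ by $\asymp r^{-d}$ cells of side $r$: if every cell contains at least one state then $h_X \le \sqrt{d}\, r$. As $\pi$ is bounded away from zero, each cell has stationary mass $p_j \ge c r^d$. Writing $V_n(A_j)$ for the number of visits to cell $A_j$, uniform ergodicity together with reversibility permits a Bernstein-type concentration inequality for additive functionals of the chain, giving $\mathbb{P}(V_n(A_j) = 0) \le C \exp(-c'' n p_j)$ once $n p_j$ exceeds a multiple of the mixing time. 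A union bound over the $\asymp r^{-d}$ cells, together with the choice $r^d \asymp \log n / n$, forces $\mathbb{P}(h_X > \sqrt{d}\, r) \to 0$, i.e.\ $h_X = O_P((\log n / n)^{1/d})$; absorbing the logarithm into $n^{\epsilon}$ gives $\sigma_n = O_P(n^{-\alpha/d + \epsilon})$. The care here lies in replacing the i.i.d.\ estimate $(1 - p_j)^n \le e^{-n p_j}$ by a genuine Markov-chain tail bound whose constants are uniform over the cells.

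Finally, for step (ii) I would exploit that under $\mathbb{P}_n$ the variable $\Pi[g]$ is Gaussian with mean $\hat{\Pi}_{\text{BMCMC}}[f]$ and variance $\sigma_n^2$. Writing the bias as $b_n := \hat{\Pi}_{\text{BMCMC}}[f] - \Pi[f]$, the Koksma--Hlawka bound \eqref{CSMT} gives $|b_n| \le \|f\|_{\mathcal{H}}\, \sigma_n$, so on the event $\{\sigma_n \le C n^{-\alpha/d + \epsilon}\}$ we have $|b_n| < \delta/2$ for $n$ large. Then, with $Z \sim \mathcal{N}(0,1)$,
\[
\mathbb{P}_n\{|\Pi[g] - \Pi[f]| \ge \delta\}
\le \mathbb{P}\!\left(|Z| \ge \tfrac{\delta - |b_n|}{\sigma_n}\right)
\le 2\exp\!\left(-\frac{\delta^2}{8\sigma_n^2}\right),
\]
and substituting $\sigma_n^2 \le C^2 n^{-2\alpha/d + \epsilon}$ (after relabelling $\epsilon$) yields the claimed rate with $C_\delta \asymp \delta^2/(8 C^2)$, the residual randomness in $\sigma_n$ being carried by the $O_P$. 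The complementary probability is therefore $1 - O_P(\exp(-C_\delta n^{2\alpha/d - \epsilon}))$, as required.
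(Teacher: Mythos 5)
Your proposal is correct and follows essentially the same route as the paper: a deterministic scattered-data sampling inequality giving $\|\hat{\Pi}_{\textnormal{BMCMC}} - \Pi\|_{\mathcal{H}^*} \leq C h_X^{\alpha}$ via the regression bound, the fill-distance rate $h_X = O_P(n^{-1/d+\epsilon})$ for uniformly ergodic chains on $[0,1]^d$, and a Gaussian tail bound with the bias controlled by $\|f\|_{\mathcal{H}}$ times the WCE (the paper's Lemmas 2--4 in Supplement A). The only substantive difference is that you re-derive the fill-distance bound by a covering and union-bound argument where the paper simply cites Lemma 2 of \cite{Oates2016CF2} and passes through $\mathbb{E}_X[h_X^{\alpha}]$ plus Markov's inequality; your sketch of that step is sound (the block-splitting tail bound for a uniformly ergodic chain carries an extra $\log(1/p_j)$ factor in the exponent, but this only costs a logarithm that is absorbed into $n^{\epsilon}$), and your detour through $\|\pi\|_{\infty}$ is unnecessary since the pointwise bound $|f-m_n| \leq C h_X^{\alpha}\|f\|_{\mathcal{H}}$ integrates directly against the probability measure $\pi$ without assuming the density is bounded above.
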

\noindent This result shows the posterior distribution is well-behaved; the posterior distribution of $\Pi[g]$ concentrates in any open neighbourhood of the true integral $\Pi[f]$. 
This result does not address the frequentist coverage of the posterior, which is assessed empirically in Sec. \ref{sec:numerical_results}.

Although we do not focus on point estimation, a brief comment is warranted:
A lower bound on the WCE that can be attained by randomised algorithms in this setting is $O_P(n^{-\alpha/d - 1/2})$ \citep{Novak2010}. 
Thus our result shows that the point estimate is at most one MC rate away from being optimal\footnote{the control variate trick of \cite{Bakhvalov1959} can be used to achieve the optimal randomised WCE, but this steps outside of the Bayesian framework.}.
\cite{Bach2015} obtained a similar result for fixed $n$ and a specific importance sampling distribution; his analysis does not directly imply our asymptotic results and \emph{vice versa}. 
After completion of this work, similar results on point estimation appeared in \cite{Oettershagen2017,Bauer2017}.

Thm. \ref{prop:consistency_BMCMC} can be generalised in several directions. Firstly, we can consider more general domains $\mathcal{X}$. Specifically, the scattered data approximation bounds that are used in our proof apply to any compact domain $\mathcal{X} \subset \mathbb{R}^d$ that satisfies an \emph{interior cone condition} \citep[][p.28]{Wendland2005}.
Technical results in this direction were established in \cite{Oates2016CF2}.
Second, we can consider other spaces $\mathcal{H}$. 
For example, a slight extension of Thm. \ref{prop:consistency_BMCMC} shows that \textcolor{black}{certain infinitely differentiable kernels lead} to exponential rates for the WCE and super-exponential rates for posterior contraction. 
For brevity, details are omitted.

\subsubsection{Bayesian Quasi Monte Carlo}\label{sec:theory_BQMC}

The previous section focused on BMCMC in the Sobolev space $\mathcal{H}_\alpha$. 
To avoid repetition, here we consider more interesting spaces of functions whose \emph{mixed} partial derivatives exist, for which even faster convergence rates can be obtained using BQMC.
To formulate BQMC we must posit an RKHS \emph{a priori} and consider collections of states $\{\bm{x}_i^{\text{QMC}}\}_{i=1}^n$ that constitute a QMC point set tailored to the RKHS.

Consider $\mathcal{X} = [0,1]^d$ with $\pi$ uniform on $\mathcal{X}$. 
Define the Sobolev space of \emph{dominating mixed smoothness} $\mathcal{S}_\alpha$ to consist of functions for which $\|f\|_{\mathcal{S},\alpha}^2 :=  \sum_{\forall j : i_j \leq \alpha} \| \partial_{x_1}^{i_1} \dots \partial_{x_d}^{i_d} f \|_2^2 < \infty$.
Here $\alpha$ is the \emph{order} of the space and $(\mathcal{S}_\alpha,\|\cdot\|_{\mathcal{S},\alpha})$ is a RKHS.
To build intuition, note that $\mathcal{S}_\alpha$ is norm-equivalent to the RKHS generated by a tensor product of Mat\'{e}rn kernels \citep{Sickel2009}, or indeed a tensor product of any other univariate Sobolev space -generating kernel.

For a specific space such as $\mathcal{S}_\alpha$, we seek an appropriate QMC point set.
The \emph{higher-order digital} $(t,\alpha,1,\alpha m \times m, d)-$\emph{net} construction is an example of a QMC point set for $\mathcal{S}_\alpha$; for details we refer the reader to \cite{Dick2010} for details.

\begin{theorem}[BQMC in $\mathcal{S}_\alpha$]\label{theo:BQMC_sobolev}
Let $\mathcal{H}$ be norm-equivalent to $\mathcal{S}_\alpha$, where $\alpha \geq 2$, $\alpha \in \mathbb{N}$.
Suppose states are chosen according to a higher-order digital $(t,\alpha,1,\alpha m\times m,d)$ net over $\mathbb{Z}_b$ for some prime $b$ where $n = b^m$.
Then $\|\hat{\Pi}_{\textnormal{BQMC}} - \Pi\|_{\mathcal{H}^*} = O(n^{-\alpha + \epsilon})$ and , if $f \in \mathcal{S}_\alpha$ and $\delta > 0$,
\textcolor{black}{$$\mathbb{P}_n\{\Pi[f] - \delta < \Pi[g] < \Pi[f] + \delta\} = 1 - O(\exp(-C_\delta n^{2\alpha - \epsilon})), $$ }
where $C_\delta > 0$ depends on $\delta$ and $\epsilon > 0$ can be arbitrarily small.
\end{theorem}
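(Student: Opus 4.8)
The plan is to reduce both claims to a single upper bound on the worst-case error and then follow the same two-step template as the BMCMC case. The structural observation driving everything is the weight-optimality established in Sec.~\ref{duality sec 2}: because the prior covariance $c$ equals the reproducing kernel $k$, the BC weights minimise the WCE among all cubature rules built on the same states, so
\begin{equation*}
\|\hat{\Pi}_{\text{BQMC}} - \Pi\|_{\mathcal{H}^*} \leq \|\hat{\Pi}_{\text{QMC}} - \Pi\|_{\mathcal{H}^*},
\end{equation*}
where $\hat{\Pi}_{\text{QMC}}$ is the equal-weight rule on the same higher-order digital net. It therefore suffices to bound the WCE of the uniformly weighted QMC rule in $\mathcal{S}_\alpha$.

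First I would pass from $\mathcal{H}$ to $\mathcal{S}_\alpha$: since $\mathcal{H}$ is norm-equivalent to $\mathcal{S}_\alpha$, their dual norms are equivalent, so $\|\hat{\Pi}_{\text{QMC}} - \Pi\|_{\mathcal{H}^*} \asymp \|\hat{\Pi}_{\text{QMC}} - \Pi\|_{\mathcal{S}_\alpha^*}$. I would then invoke the theory of higher-order digital nets: for the $(t,\alpha,1,\alpha m\times m,d)$-net construction, the result of \cite{Dick2010} gives that the equal-weight QMC rule attains a WCE of order $n^{-\alpha}(\log n)^{c}$ in the Sobolev space of dominating mixed smoothness $\mathcal{S}_\alpha$, with $c$ depending only on $d$ and $\alpha$. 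Absorbing the polylogarithmic factor into $n^{\epsilon}$ and chaining the two displays yields the first claim, $\|\hat{\Pi}_{\text{BQMC}} - \Pi\|_{\mathcal{H}^*} = O(n^{-\alpha + \epsilon})$. Since $k = c$, Sec.~\ref{duality sec 2} also identifies $\mathbb{V}_n[\Pi[g]] = \|\hat{\Pi}_{\text{BQMC}} - \Pi\|_{\mathcal{H}^*}^2$, so the posterior variance is $O(n^{-2\alpha + \epsilon})$.

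The posterior-contraction claim then follows from a Gaussian tail estimate. The law of $\Pi[g]$ under $\mathbb{P}_n$ is Gaussian (Prop.~\ref{prop: derive mean and var}) with mean $\mathbb{E}_n[\Pi[g]] = \hat{\Pi}_{\text{BQMC}}[f]$ and variance $\mathbb{V}_n[\Pi[g]]$. For $f \in \mathcal{S}_\alpha = \mathcal{H}$, the Koksma--Hlawka bound (Eqn.~\ref{CSMT}) controls the posterior bias, $|\mathbb{E}_n[\Pi[g]] - \Pi[f]| \leq \|f\|_{\mathcal{H}}\,\|\hat{\Pi}_{\text{BQMC}} - \Pi\|_{\mathcal{H}^*} = O(n^{-\alpha+\epsilon})$, which is eventually below $\delta/2$. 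Writing $Z$ for the posterior variable and applying the standard bound $\mathbb{P}_n(|Z - \mathbb{E}_n[\Pi[g]]| > \delta/2) \leq 2\exp(-\delta^2/(8\mathbb{V}_n[\Pi[g]]))$, the variance estimate converts the tail into a decay of order $\exp(-C_\delta n^{2\alpha - \epsilon})$, matching the statement.

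The only genuinely technical ingredient, and hence the main obstacle, is the QMC worst-case-error bound for the higher-order digital net in $\mathcal{S}_\alpha$; this rests on Dick's delicate analysis of the Walsh-coefficient decay for higher-order nets, which I would cite rather than reprove. Everything downstream---the optimality reduction, the dual-norm equivalence, the variance identity, and the Gaussian tail---is elementary once that bound is available. Indeed the argument is cleaner than its BMCMC analogue, since the digital net is deterministic and no $O_P$ control over random states is needed, which is precisely why the rates here are stated as $O(\cdot)$ rather than $O_P(\cdot)$.
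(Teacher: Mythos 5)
Your proposal is correct and follows essentially the same route as the paper: the first claim is the WCE bound for the equal-weight rule on the higher-order digital net (Theorem 15.21 of Dick and Pillichshammer) combined with norm equivalence and the weight-optimality of the BC weights (Lemma \ref{theo:_existing}), and the contraction claim is the Gaussian tail estimate with variance equal to the squared WCE, which is exactly the content of Lemma \ref{theo:contraction_rates} that you reprove inline.
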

\noindent This result shows that the posterior is again well-behaved.
Indeed, the rate of contraction is much faster in $\mathcal{S}_\alpha$ compared to $\mathcal{H}_\alpha$.
In terms of point estimation, this is the optimal rate for any \emph{deterministic} algorithm for integration of functions in $\mathcal{S}_\alpha$ \citep{Novak2010}.
These results should be understood to hold on the sub-sequence $n=b^m$, as QMC methods do not in general give guarantees for all $n \in \mathbb{N}$.
It is not clear how far this result can be generalised, in terms of $\pi$ and $\mathcal{X}$, compared to the result for BMCMC, since this would require the use of different QMC point sets.

\subsubsection{Summary}

In this section we established rates of posterior contraction for BMC, BMCMC and BQMC in a general Sobolev space context. 
These results are essential since they establish the sound properties of the posterior, which is shown to contract to the truth as more evaluations are made of the integrand.
Of course, the higher computational cost of up to $O(n^3)$ may restrict the applicability of the method in large-$n$ regimes. 
However, we emphasise that the motivation is to quantify the uncertainty induced from numerical integration, an important task which often justifies the higher computational cost.


\section{Implementation}\label{sec:practical_issues}

So far we have established sound theoretical properties for BMCMC and BQMC under the assumption that the prior is well-specified.
Unfortunately, prior specification complicates the situation in practice since, given a test function $f$, there are an infinitude of RKHS to which $f$ belongs and the specific choice of this space will impact upon the performance of the method.
\textcolor{black}{In particular, the scale of the posterior is driven by the scale of the prior, so that the uncertainty quantification being provided is endogenous and, if the prior is not well-specified, this could mitigate the advantages of the probabilistic numerical framework. }
This important point is now discussed.

It is important to highlight a distinction between B(MC)MC and BQMC; for the former the choice of states does not depend on the RKHS.
For B(MC)MC this allows for the possibility of off-line specification of the kernel after evaluations of the integrand have been obtained, whereas for alternative methods the kernel must be stated up-front.
Our discussion below therefore centres on prior specification in relation to B(MC)MC, where several statistical techniques can be applied.

\subsection{Prior Specification}\label{sec:calibration}
The above theoretical results do not address the important issue of whether the scale of the posterior uncertainty provides an accurate reflection of the actual numerical error.
\textcolor{black}{This is closely related to the well-studied problem of prior specification in the kriging literature \citep{Stein1990,Xu2017}. }

Consider a parametric kernel $k(\bm{x},\bm{x}';\theta_l,\theta_s)$, with a distinction drawn here between \emph{scale} parameters $\theta_l$ and \emph{smoothness} parameters $\theta_s$.
The former are defined as parametrising the norm on $\mathcal{H}$, whereas the latter affect the set $\mathcal{H}$ itself. 
Selection of $\theta_l,\theta_s$ based on data can only be successful in the absence of acute sensitivity to these parameters.
\textcolor{black}{For scale parameters, a wide body of evidence demonstrates that this is usually not a concern \citep{Stein1990}. }
However, selection of smoothness parameters is an active area of theoretical research \citep[e.g.][]{Szabo2015}.
In some cases it is possible to elicit a smoothness parameter from physical or mathematical considerations, such as a known number of derivatives of the integrand. 
Our attention below is instead restricted to scale parameters, where several approaches are discussed in relation to their suitability for BC:

\subsubsection{Marginalisation}

A natural approach, from a Bayesian perspective, is to set a prior $p(\theta_l)$ on parameters $\theta_l$ and then to marginalise over $\theta_l$ to obtain a posterior over $\Pi[f]$. 
Recent results for a certain infinitely differentiable kernel establish minimax optimal rates for this approach, including in the practically relevant setting where $\pi$ is supported on a low-dimensional sub-mainfold of the ambient space $\mathcal{X}$ \citep{Yang2013}.
However, the act of marginalisation itself involves an intractable integral. While the computational cost of evaluating this integral will often be dwarfed by that of the integral $\Pi[f]$ of interest, marginalisation nevertheless introduces an additional undesirable computational challenge that might require several approximations \citep[e.g.][]{Osborne2010}. 
\textcolor{black}{It is however possible to analytically marginalise certain types of scale parameters, such as amplitude parameters:
\begin{proposition} \label{Stdt prop}
Suppose our covariance function takes the form $c(\bm{x},\bm{y};\lambda) = \lambda c_0(\bm{x},\bm{y})$ where $c_0:\mathcal{X}\times \mathcal{X} \rightarrow \mathbb{R}$ is itself a reproducing kernel and $\lambda>0$ is an amplitude parameter. 
Consider the improper prior $p(\lambda) \propto \frac{1}{\lambda}$.
Then the posterior marginal for $\Pi[g]$ is a Student-t distribution with mean and variance
\begin{eqnarray*}
\Pi\left[c_0(\cdot,X)\right]\bm{C}_0^{-1}\bm{f}, \quad \frac{\bm{f}^\top \bm{C}_0^{-1} \bm{f}}{n} \{ \Pi\Pi[c_0(\cdot,\cdot)]-\Pi[\bm{c}_0(\cdot,X)]\bm{C}_0^{-1}\Pi[\bm{c}_0(X,\cdot)] \}
\end{eqnarray*}
and $n$ degrees of freedom.
Here $[\bm{C}_0]_{i,j} = c_0(\bm{x}_i,\bm{x}_j)$, $[\bm{c}_0(\cdot,X)]_i = c_0(\cdot,\bm{x}_i)$, $\bm{c}_0(\cdot,X) = \bm{c}_0(X,\cdot)^\top$.
\end{proposition}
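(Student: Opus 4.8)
The plan is to exploit conjugacy of the amplitude parameter: I would condition on $\lambda$ to obtain a Gaussian posterior for $\Pi[g]$, identify the posterior law of $\lambda$, and then marginalise, recognising the resulting Gaussian scale mixture as a Student-t distribution. The key structural fact that makes this clean is that $\lambda$ enters the posterior mean trivially but rescales the posterior variance.

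First I would apply Proposition \ref{prop: derive mean and var} with the scaled covariance $c = \lambda c_0$. Since $\bm{C} = \lambda \bm{C}_0$, $\bm{c}(\cdot,X) = \lambda\,\bm{c}_0(\cdot,X)$ and $\Pi\Pi[c] = \lambda\,\Pi\Pi[c_0]$, the factors of $\lambda$ cancel in the posterior mean, giving $\mathbb{E}_n[\Pi[g]\mid\lambda] = \Pi[\bm{c}_0(\cdot,X)]\bm{C}_0^{-1}\bm{f} =: \mu_0$, independent of $\lambda$, whereas the posterior variance scales linearly: $\mathbb{V}_n[\Pi[g]\mid\lambda] = \lambda V_0$, where $V_0 := \Pi\Pi[c_0(\cdot,\cdot)] - \Pi[\bm{c}_0(\cdot,X)]\bm{C}_0^{-1}\Pi[\bm{c}_0(X,\cdot)]$. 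Hence, conditional on $\lambda$, $\Pi[g]$ is Gaussian with $\lambda$-free mean $\mu_0$ and variance proportional to $\lambda$.

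Next I would derive the posterior for $\lambda$. The marginal law of the observations under the GP prior is $\bm{f}\mid\lambda \sim \mathcal{N}(\bm{0},\lambda\bm{C}_0)$, so the likelihood is proportional to $\lambda^{-n/2}\exp(-\tfrac{1}{2\lambda}\bm{f}^\top\bm{C}_0^{-1}\bm{f})$. Multiplying by the improper prior $p(\lambda)\propto 1/\lambda$ yields a posterior proportional to $\lambda^{-n/2-1}\exp(-\tfrac{1}{2\lambda}\bm{f}^\top\bm{C}_0^{-1}\bm{f})$, i.e. an inverse-gamma law with shape $n/2$ and scale $\tfrac{1}{2}\bm{f}^\top\bm{C}_0^{-1}\bm{f}$. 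Here I should verify propriety of the posterior, which holds whenever $\bm{f}^\top\bm{C}_0^{-1}\bm{f} > 0$ and $n\geq 1$.

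Finally I would marginalise, $p(\Pi[g]\mid\bm{f}) = \int_0^\infty p(\Pi[g]\mid\lambda)\,p(\lambda\mid\bm{f})\,\mathrm{d}\lambda$. Inserting the Gaussian and inverse-gamma densities and collecting powers of $\lambda$ reduces the integral to the standard form $\int_0^\infty \lambda^{-a-1}e^{-b/\lambda}\,\mathrm{d}\lambda = \Gamma(a)\,b^{-a}$, leaving a density proportional to $\left(1 + (\Pi[g]-\mu_0)^2/(n s^2)\right)^{-(n+1)/2}$, the kernel of a Student-t distribution with $n$ degrees of freedom, location $\mu_0$, and squared scale $s^2 = \tfrac{1}{n}\bm{f}^\top\bm{C}_0^{-1}\bm{f}\,V_0$, from which the stated mean and variance follow. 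The obstacle is bookkeeping rather than conceptual: one must track the exponents of $\lambda$ carefully through the scale-mixture integral to match the Student-t form, and take care over the usual convention in which the reported ``variance'' is in fact the squared scale $s^2$ (the t-distribution's true variance being $\tfrac{n}{n-2}s^2$ for $n>2$).
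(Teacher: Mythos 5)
Your proposal is correct and follows essentially the same route as the paper: condition on $\lambda$ to get a Gaussian with $\lambda$-free mean and variance linear in $\lambda$, identify the inverse-gamma posterior $\lambda \mid \bm{f} \sim \mathrm{IG}(n/2, \tfrac{1}{2}\bm{f}^\top\bm{C}_0^{-1}\bm{f})$, and recognise the resulting normal--inverse-gamma scale mixture as a Student-t with $n$ degrees of freedom. Your closing remark that the stated ``variance'' is really the squared scale parameter is a fair observation, but it does not affect the argument.
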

}

\subsubsection{Cross-Validation}

Another approach to kernel choice is cross-validation. 
However, this can perform poorly when the number $n$ of data is small, since the data needs to be further reduced into training and test sets. The performance estimates are also known to have large variance in those cases \citep[Chap. 5 of][]{Rasmussen2006}. 
Since the small $n$ scenario is one of our primary settings of interest for BC, we felt that cross-validation was unsuitable for use in applications below.

\subsubsection{Empirical Bayes}

An alternative to the above approaches is \emph{empirical Bayes} (EB) selection of scale parameters, choosing $\theta_l$ to maximise the log-marginal likelihood of the data $f(\bm{x}_i)$, $i = 1,\dots,n$ \citep[Sec. 5.4.1 of][]{Rasmussen2006}.
EB has the advantage of providing an objective function that is easier to optimise relative to cross-validation.
However, we also note that EB can lead to over-confidence when $n$ is very small, since the full irregularity of the integrand has yet to be uncovered \citep{Szabo2015}.
\textcolor{black}{In addition, it can be shown that EB estimates need not converge as $n \rightarrow \infty$ when the GP is supported on infinitely differentiable functions \citep{Xu2017}. }

{\color{black} For the remainder, we chose to focus on a combination of the marginalisation approach for amplitude parameters and the EB approach for remaining scale parameters.
Empirical results support the use of this approach, though we do not claim that this strategy is optimal.}

\subsection{Tractable and Intractable Kernel Means}\label{sec:intractable_mean_element}

BC requires that the kernel mean $\mu(\pi)(\bm{x}) = \Pi[k(\cdot,\bm{x})]$ is available in closed-form.
This is the case for several kernel-distribution pairs $(k,\pi)$ and a subset of these pairs are recorded in Table \ref{kernel mean pairs}. 
In the event that the kernel-distribution pair $(k,\pi)$ of interest does not lead to a closed-form kernel mean, it is sometimes possible to determine another kernel-density pair $(k',\pi')$ for which $\Pi'[k'(\cdot,\bm{x})]$ is available and such that (i) $\pi$ is absolutely continuous with respect to $\pi'$, so that the Radon-Nikodym derivative $\mathrm{d} \pi / \mathrm{d} \pi'$ exists, and (ii) $f \; \mathrm{d} \pi / \mathrm{d} \pi' \in \mathcal{H}(k')$.
Then one can construct an importance sampling estimator 
\begin{equation}
\Pi[f] = \int f \; \mathrm{d}\pi = \int f \; \frac{\mathrm{d} \pi}{\mathrm{d} \pi'} \; \mathrm{d}\pi' = \Pi' \left[ f \frac{\mathrm{d} \pi}{\mathrm{d} \pi'} \right]
\label{importance eq}
\end{equation}
and proceed as above \citep{OHagan1991}.

\begin{table}[t!]
\centering
\resizebox{\textwidth}{!}{
\begin{tabular}{|cccp{4.5cm}|} \hline
$\mathcal{X}$ & $\pi$ & $k$ & Reference \\ \hline \hline
$[0,1]^d$ & Unif$(\mathcal{X})$ & Wendland TP & \cite{Oates2016} \\
$[0,1]^d$ & Unif$(\mathcal{X})$ & Mat\'{e}rn Weighted TP & Sec. \ref{weighted Sob app} \\
$[0,1]^d$ & Unif$(\mathcal{X})$ & Exponentiated Quadratic & Use of error function \\
$\mathbb{R}^d$ & Mixt. of Gaussians & Exponentiated Quadratic & \cite{Kennedy1998} \\
$\mathbb{S}^d$ & Unif$(\mathcal{X})$ & Gegenbauer & Sec. \ref{sec:sphere_application} \\
Arbitrary & Unif$(\mathcal{X})$ / Mixt. of Gauss. & Trigonometric & Integration by parts\\
Arbitrary & Unif$(\mathcal{X})$ & Splines & \cite{Wahba1990} \\
Arbitrary & Known moments & Polynomial TP & \cite{Briol2015} \\
Arbitrary & Known $\partial \log \pi(\bm{x})$ & Gradient-based Kernel & \cite{Oates2016CF2,Oates2015a} \\ \hline
\end{tabular}
}
\caption{A non-exhaustive list of distribution $\pi$ and kernel $k$ pairs that provide a closed-form expression for both the kernel mean $\mu(\pi)(\bm{x}) = \Pi[k(\cdot,\bm{x})]$ and the initial error $\Pi[\mu(\pi)]$.
Here TP refers to the tensor product of one-dimensional kernels.}
\label{kernel mean pairs}
\vspace{-2mm}
\end{table}

One side contribution of this research was a novel and generic approach to accommodate intractability of the kernel mean in BC.
This is described in detail in Supplement \ref{scalability appendix}  and used in case studies \#1 and \#2 presented in Sec. \ref{sec:numerical_results}.


\section{Results}\label{sec:numerical_results}

The aims of the following section are two-fold; (i) to validate the preceding theoretical analysis and (ii) to explore the use of probabilistic integrators in a range of problems arising in contemporary statistical applications.


\subsection{Assessment of Uncertainty Quantification}\label{sec:test_functions}

Our focus below is on the uncertainty quantification provided by BC and, in particular, the performance of the hybrid marginalisation/EB approach to kernel parameters. 
To be clear, we are not concerned with accurate point estimation at low computational cost.
This is a well-studied problem that reaches far beyond the methods of this paper.
Rather, we are aiming to assess the suitability of the probabilistic description for integration error that is provided by BC.
Our motivation is expensive integrands, but to perform assessment in a controlled environment we considered inexpensive test functions of varying degrees of irregularity, whose integrals can be accurately approximated.
\textcolor{black}{These included a non-isotropic test function $f_j(\bm{x}) = \exp(\sin(C_j x_1)^2 - \|\bm{x}\|_2^2)$ with an ``easy'' setting $C_1 = 5$ and a ``hard'' setting $C_2=20$}. 
The hard test function is more variable and will hence be more difficult to approximate (see Fig. \ref{fig:testfunc_EB_BMC}). One realisation of states $\{\bm{x}_i\}_{i=1}^n$, generated independently and uniformly over $\mathcal{X} = [-5,5]^d$ (initially $d = 1$), was used to estimate the $\Pi[f_i]$. We work in an RKHS characterised by tensor products of Mat\'{e}rn kernels
\begin{equation*}
k_\alpha(\bm{x},\bm{x}') = \lambda \prod_{i=1}^d \frac{2^{1-\alpha}}{\Gamma(\alpha)} \left(\frac{\sqrt{2 \alpha}|x_i - x_i'|}{\textcolor{black}{\sigma_i}^2}\right)^\alpha K_\alpha \left(\frac{\sqrt{2\alpha} |x_i - x_i'|}{\textcolor{black}{\sigma_i}^2}\right),
\end{equation*}
where $K_\alpha$ is the modified Bessel function of the second kind.
Closed-form kernel means exist in this case for $\alpha = p + 1/2$ whenever $p \in \mathbb{N}$.
In this set-up, EB was used to select the length-scale \textcolor{black}{parameters $\bm{\sigma} = (\sigma_1,\ldots,\sigma_d) \in (0,\infty)^d$ of the kernel, while the amplitude parameter $\lambda$ was marginalised as in Prop. \ref{Stdt prop}}.
The smoothness parameter was fixed at $\alpha=7/2$.
Note that all test functions will be in the space $\mathcal{H}_{\alpha}$ for any $\alpha>0$ and there is a degree of arbitrariness in this choice of prior. 
\begin{figure}[t!]
\centering
\begin{minipage}{0.9\textwidth}
\begin{minipage}{0.35\textwidth}
\vspace{-8mm}
\includegraphics[trim={6cm 11cm 7cm 10cm},clip,width = \textwidth]{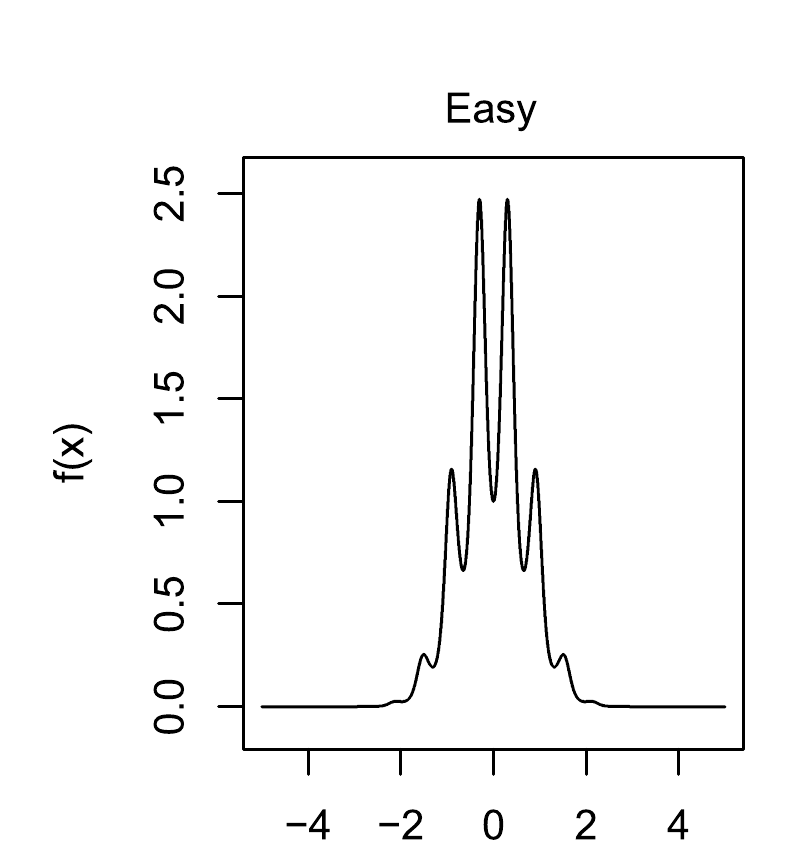} 
\end{minipage}
\begin{minipage}{0.65\textwidth}
\includegraphics[trim={0.5cm 1cm 1.25cm 0},clip,width = 0.92\textwidth]{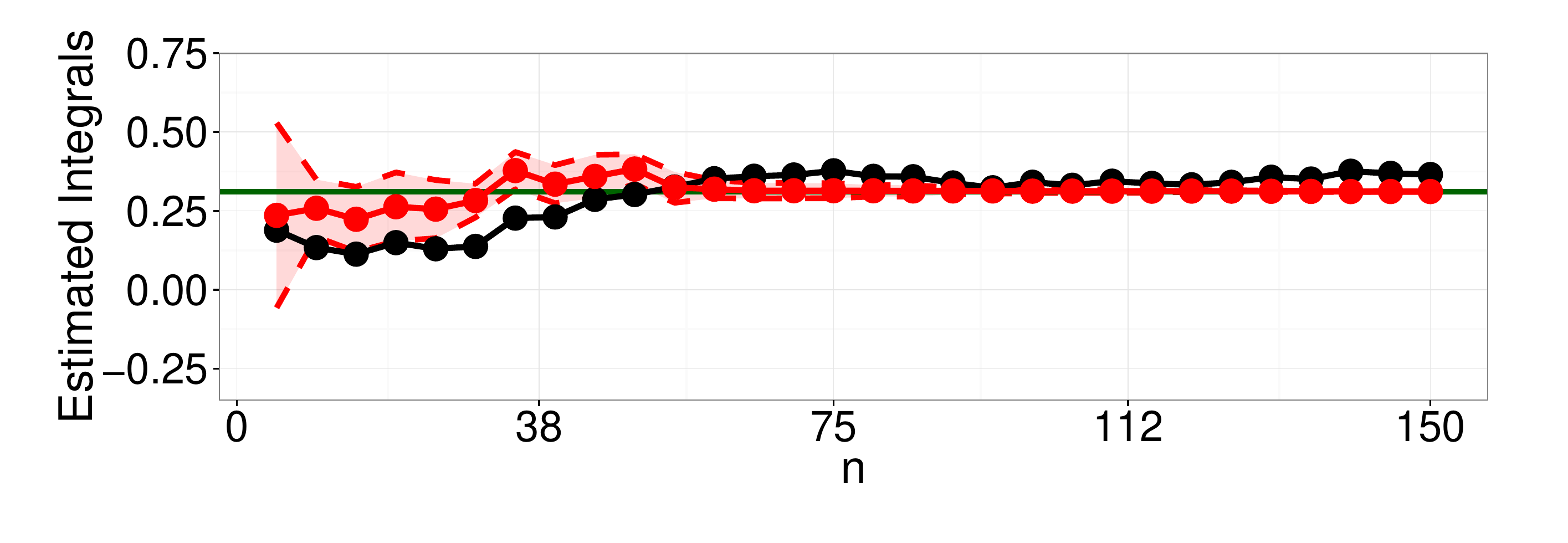}

\hspace{-5pt} \includegraphics[trim={-0.3cm 0 0 0},clip,width = 0.97\textwidth]{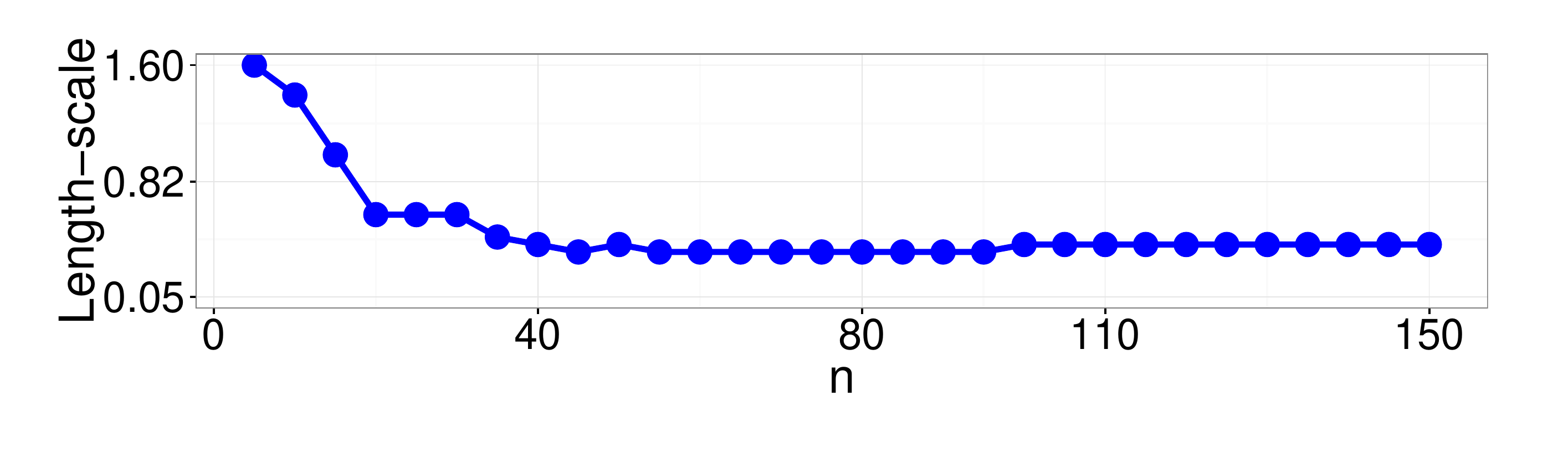}
\end{minipage}
\end{minipage}
\begin{minipage}{0.9\textwidth}
\vspace{-3mm}
\begin{minipage}{0.35\textwidth}
\vspace{-8mm}
\includegraphics[trim={6cm 11cm 7cm 10cm},clip,width = \textwidth]{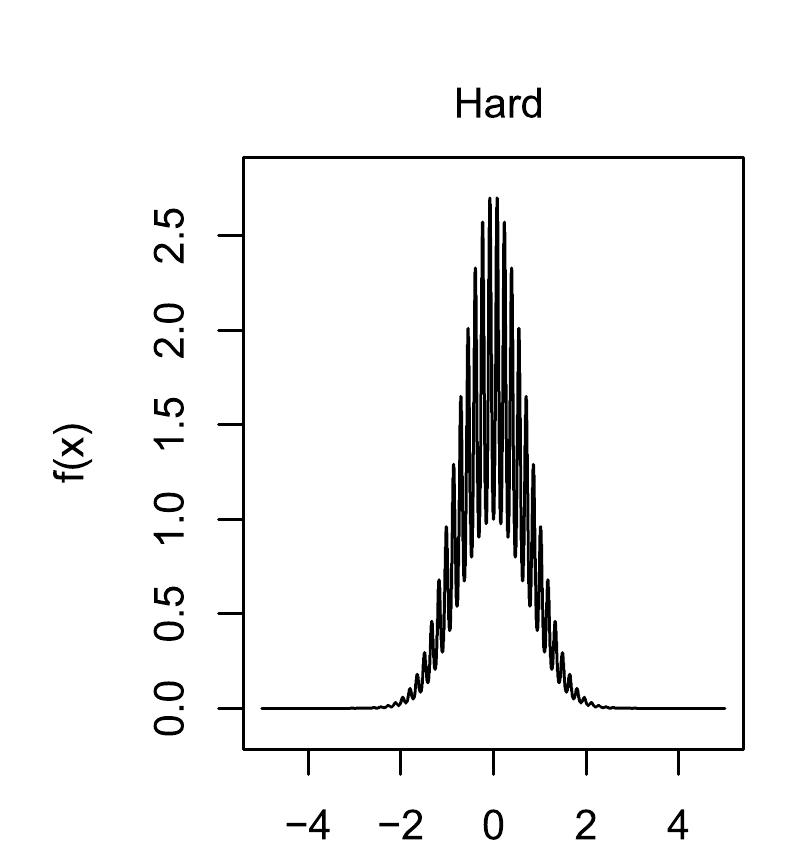} 
\end{minipage}
\begin{minipage}{0.65\textwidth}
\includegraphics[trim={0.5cm 1cm 1.25cm 0},clip,width = 0.92\textwidth]{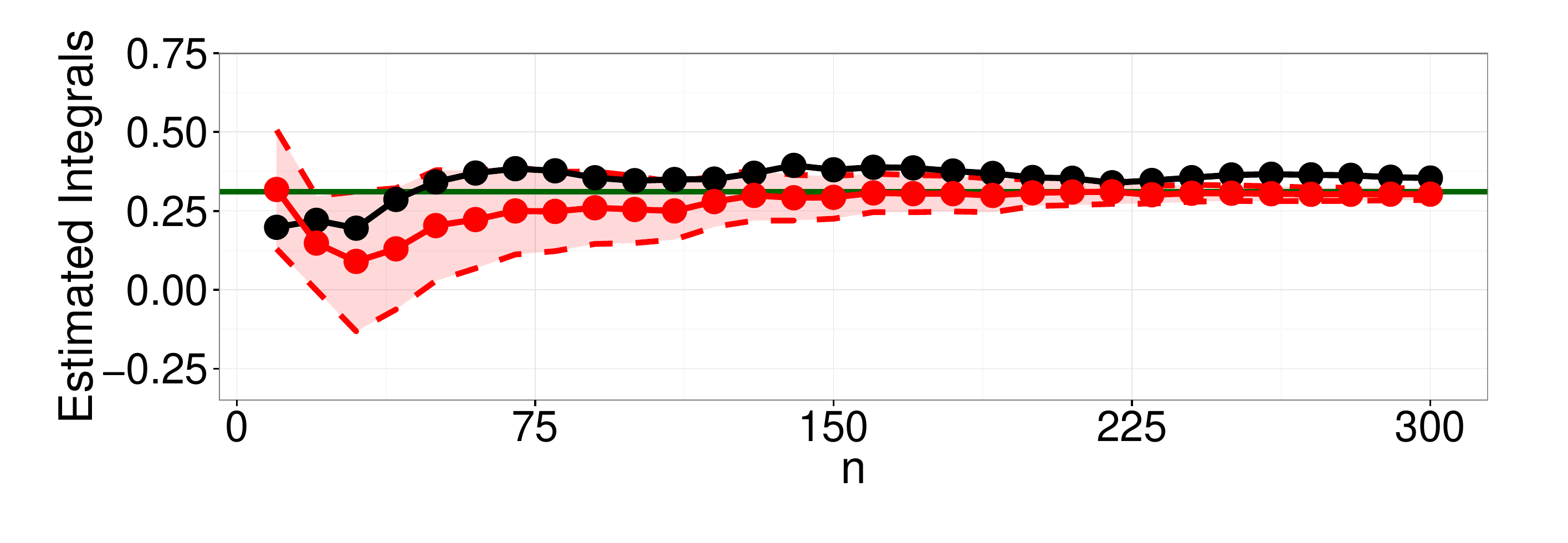}

\hspace{-5pt} \includegraphics[trim={-0.3cm 0 0 0},clip,width = 0.97\textwidth]{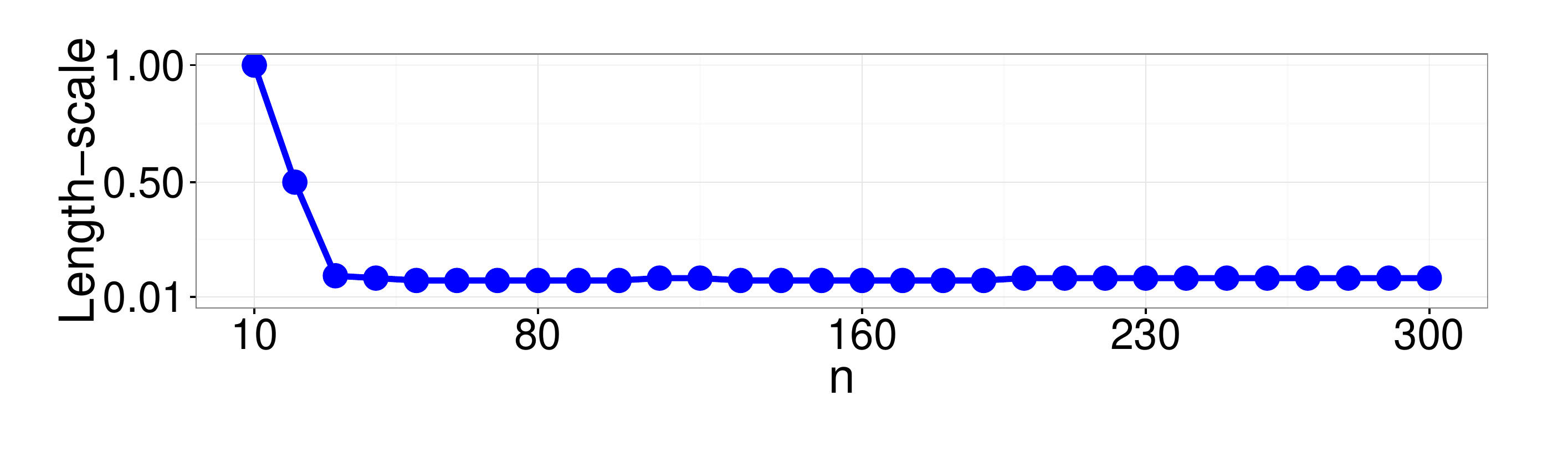}
\end{minipage}
\end{minipage}
\caption{\textcolor{black}{Evaluation of uncertainty quantification provided by BC.
Here we used empirical Bayes (EB) for $\bm{\sigma}$ with $\lambda$ marginalised. 
\textit{Left:} The test functions $f_1$ (top), $f_2$ (bottom) in $d = 1$ dimension. 
\textit{Right:} Solutions provided by Monte Carlo (MC; black) and Bayesian MC (BMC; red), for one typical realisation. 
$95\%$ credible regions are shown for BMC and the green horizontal line gives the true value of the integral. 
The blue curve gives the corresponding lengthscale parameter selected by EB.
}}
\label{fig:testfunc_EB_BMC}
\end{figure}

\begin{figure}[t!]
\centering
\includegraphics[width = 0.4\textwidth,trim={2cm 1cm 3cm 1cm},clip]{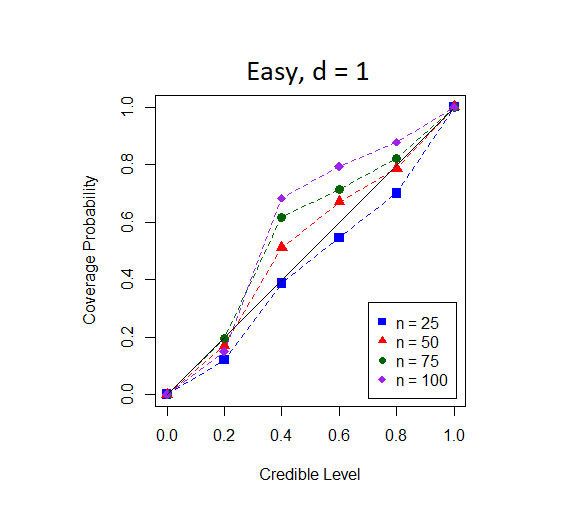}
\includegraphics[width = 0.4\textwidth,trim={2cm 1cm 3cm 1cm},clip]{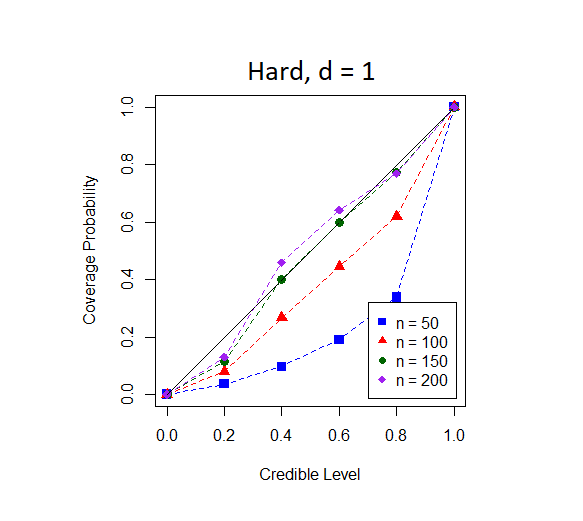}
\includegraphics[width = 0.4\textwidth,trim={2cm 1cm 3cm 1cm},clip]{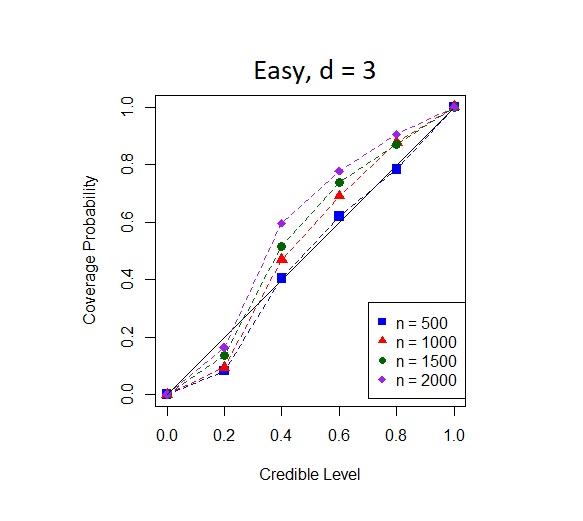}
\includegraphics[width = 0.4\textwidth,trim={2cm 1cm 3cm 1cm},clip]{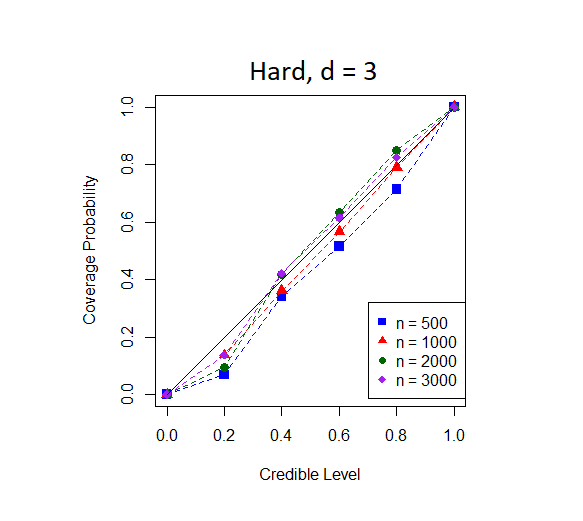}
\caption{\color{black} Evaluation of uncertainty quantification provided by BC.
Here we used empirical Bayes for $\bm{\sigma}$ with $\lambda$ marginalised in dimensions $d=1$ (top) and $d=3$ (bottom). 
Coverage frequencies (computed from $500$ (top) or $150$ (bottom) realisations) were compared against notional $100(1-\gamma)\%$ Bayesian credible regions for varying level $\gamma$ and number of observations $n$. 
The upper-left quadrant represents conservative credible intervals whilst the lower-right quadrant represents over-confident intervals.
\textit{Left:} ``Easy'' test function $f_1$.
\textit{Right:} ``Hard'' test function $f_2$.
}
\label{fig:Frequentist_coverage_BMC}
\end{figure}

Results are shown in Fig. \ref{fig:testfunc_EB_BMC}. Error-bars are used to denote the 95\% posterior credible regions for the value of the integral and we also display the values $\hat{\sigma}_i$ of the length scale $\sigma_i$ selected by EB\footnote{\textcolor{black}{the term ``credible'' is used loosely since the $\hat{\sigma}_i$ are estimated rather than marginalised.}}.
The $\hat{\sigma}_i$ appear to converge rapidly as $n \rightarrow \infty$; this is encouraging but we emphasise that we do not provide theoretical guarantees for EB in this work. 
On the negative side, over-confidence is possible at small values of $n$. 
Indeed, the BC posterior is liable to be over-confident under EB, since in the absence of evidence to the contrary, EB selects large values for $\sigma$ that correspond to more regular functions; this is most evident in the ``hard'' case.

Next we computed coverage frequencies for $100(1-\gamma)\%$ credible regions. For each sample size $n$, the process was repeated over many realisations of the states $\{\bm{x}_i\}_{i=1}^n$, shown in Fig. \ref{fig:Frequentist_coverage_BMC}.
It may be seen that (for $n$ large enough) the uncertainty quantification provided by EB is over-cautious for the easier function $f_1$, whilst being well-calibrated for the more complicated functions such as $f_2$. As expected, we observed that the coverage was over-confident for small values of $n$.
\textcolor{black}{Performance was subsequently investigated with $\lambda$ selected by EB.
In general this performed worse than when $\lambda$ was marginalised; results are contained in Supplement \ref{extra numerics appendix}. }

Finally, to understand whether theoretical results on asymptotic behaviour are realised in practice, we note (in the absence of EB) that the variance $\mathbb{V}_n[\Pi[g]]$ is independent of the integrand and may be plotted as a function of $n$.
Results in Supplement \ref{extra numerics appendix} demonstrate that theoretical rates are observed in practice for $d=1$ for BQMC; however, at large values of $d$, more data are required to achieve accurate estimation and increased numerical instability was observed.
 
The results on test functions provided in this section illustrate the extent to which uncertainty quantification in possible using BC. 
In particular, for our examples, we observed reasonable frequentist coverage if the number $n$ of samples was not too small.

For the remainder we explore possible roles for BMCMC and BQMC in statistical applications.
Four case studies, carefully chosen to highlight both the strengths and the weaknesses of BC are presented. 
Brief critiques of each study are contained below, the full details of which can be found in Supplement \ref{appendix:case_studies}.


\FloatBarrier

\subsection{Case Study \#1: Model Selection via Thermodynamic Integration}

Consider the problem of selecting a single best model among a set $\{\mathcal{M}_1 , \dots , \mathcal{M}_M\}$, based on data $\bm{y}$ assumed to arise from a true model in this set.
The Bayesian solution, assuming a uniform prior over models, is to select the MAP model. 
We focus on the case with uniform prior on models $p(\mathcal{M}_i)=1/M$, and this problem hence reduces to finding the largest marginal likelihood $p_i = p(\bm{y}|\mathcal{M}_i)$. 
The $p_i$ are usually intractable integrals over the parameters $\bm{\theta}_i$ associated with model $\mathcal{M}_i$.
One widely-used approach to model selection is to estimate each $p_i$ in turn, say by $\hat{p}_i$, then to take the maximum of the $\hat{p}_i$ over $i \in \{1,\dots,M\}$.
In particular, thermodynamic integration is one approach to approximation of marginal likelihoods $p_i$ for individual models \citep{Gelman1998,Friel2008}.

In many contemporary applications the MAP model is not well-identified, for example in variable selection where there are very many candidate models.
Then, the MAP becomes sensitive to numerical error in the $\hat{p}_i$, since an incorrect model $\mathcal{M}_i$, $i \neq k$ can be assigned an overly large value of $\hat{p}_i$ due to numerical error, in which case it could be selected in place of the true MAP model.
Below we explore the potential to exploit probabilistic integration to surmount this problem.

\subsubsection{Thermodynamic Integration}

To simplify notation below we consider computation of a single $p_i$ and suppress dependence on the index $i$ corresponding to model $\mathcal{M}_i$.
Denote the parameter space by $\Theta$.
For $t \in [0,1]$ (an \emph{inverse temperature}) define the \emph{power posterior} $\pi_t$, a distribution over $\Theta$ with density $\pi_t(\bm{\theta}) \propto p(\bm{y} | \bm{\theta})^t p(\bm{\theta})$.
The thermodynamic identity is formulated as a double integral:
\begin{equation*}
\log p(\bm{y}) = \int_0^1 \mathrm{d}t \int_{\Theta} \log p(\bm{y}|\bm{\theta}) \mathrm{d}\pi_t(\bm{\theta}).
\end{equation*}
The thermodynamic integral can be re-expressed as $\log p(\bm{y}) = \int_0^1 g(t) \mathrm{d}t$, $g(t) = \int_\Theta f(\bm{\theta}) \mathrm{d}\pi_t(\bm{\theta})$, where $f(\bm{\theta}) = \log p(\bm{y} | \bm{\theta})$.
Standard practice is to discretise the outer integral and estimate the inner integral using MCMC:
Letting $0 = t_1 < \dots < t_m = 1$ denote a fixed \emph{temperature schedule}, we thus have (e.g. using the trapezium rule)
\begin{eqnarray}
\log p(\bm{y}) \; \approx \; \sum_{i=2}^m (t_i - t_{i-1}) \frac{\hat{g}_i + \hat{g}_{i-1}}{2}, \; \; \; \hat{g}_i \; = \; \frac{1}{n} \sum_{j=1}^n \log p(\bm{y}|\bm{\theta}_{i,j}), \label{classic TI}
\end{eqnarray}
where $\{\bm{\theta}_{i,j}\}_{j=1}^n$ are MCMC samples from $\pi_{t_i}$.
Several improvements have been proposed, including the use of higher-order numerical quadrature for the outer integral \citep{Friel2014,Hug2015} and the use of control variates for the inner integral \citep{Oates2015a,Oates2016}.
To date, probabilistic integration has not been explored in this context.

\subsubsection{Probabilistic Thermodynamic Integration}

Our proposal is to apply BC to both the inner and outer integrals.
This is instructive, since nested integrals are prone to propagation and accumulation of numerical error.
Several features of the method are highlighted:

\vspace{5pt}
\noindent {\it Transfer Learning:}
In the probabilistic approach, the two integrands $f$ and $g$ are each assigned prior probability models.
For the inner integral we assign a prior $f \sim \mathcal{N}(0,k_f)$.
Our data here are the $nm \times 1$ vector $\bm{f}$ where $f_{(i-1)n+j} = f(\bm{\theta}_{i,j})$.
For estimating $g_i$ with BC we have $m$ times as much data as for the MC estimator $\hat{g}_i$, in Eqn. \ref{classic TI}, which makes use of only $n$ function evaluations.
Here, information transfer across temperatures is made possible by the explicit model for $f$ underpinning BC. 

In the posterior, $\bm{g} = [g(t_1),\dots,g(t_T)]$ is a Gaussian random vector with $\bm{g}|\bm{f} \sim \mathcal{N}(\bm{\mu},\bm{\Sigma})$ where the mean and covariance are defined, in the obvious notation, by
\begin{eqnarray*}
\mu_a  & = & \Pi_{t_a}[\bm{k}_f(\cdot,X)] \bm{K}_f^{-1}  \bm{f},  \\
\Sigma_{a,b} & = & \Pi_{t_a}\Pi_{t_b} [k_f(\cdot,\cdot)]] - \Pi_{t_a}[\bm{k}_f(\cdot,X)] \bm{K}_f^{-1} \Pi_{t_b}[\bm{k}_f(X,\cdot)],
\end{eqnarray*}
where $X = \{\bm{\theta}_{i,j}\}_{j=1}^n$ and $\bm{K}_f$ is an $nm \times nm$ kernel matrix defined by $k_f$.

\vspace{5pt}
\noindent {\it Inclusion of Prior Information:}
For the outer integral, it is known that discretisation error can be substantial; \cite{Friel2014} proposed a second-order correction to the trapezium rule to mitigate this bias, while \cite{Hug2015} pursued the use of Simpson's rule.
Attacking this problem from the probabilistic perspective, we do not want to place a stationary prior on $g(t)$, since it is known from extensive empirical work that $g(t)$ will vary more at smaller values of $t$.
Indeed the rule-of-thumb $t_i = (i/m)^5$ is commonly used \citep{Calderhead2009}.
We would like to encode this information into our prior.
To do this, we proceed with an importance sampling step
$\log p(\bm{y}) = \int_0^1 g(t) \mathrm{d}t = \int_0^1 h(t) \pi(t) \mathrm{d}t$.
The rule-of-thumb implies an importance distribution $\pi(t) \propto 1 / (\epsilon + 5 t^{4/5})$ for some small $\epsilon > 0$, which renders the function $h = g / \pi$ approximately stationary (made precise in Supplement \ref{TI importance}).
A stationary GP prior $h \sim \mathcal{N}(0,k_h)$ on the transformed integrand $h$ provides the encoding of this prior knowledge that was used.

\vspace{5pt}
\noindent {\it Propagation of Uncertainty:}
Under this construction, in the posterior $\log p(\bm{y})$ is Gaussian with mean and covariance defined as
\begin{eqnarray*}
\mathbb{E}_n[\log p(\bm{y})] & = & \Pi[\bm{k}_h(\cdot,T)] \bm{K}_h^{-1}  \bm{\mu} \\
\mathbb{V}_n[\log p(\bm{y})] & = & \underbrace{\Pi\Pi [k_h(\cdot,\cdot)]] - \Pi[\bm{k}_h(\cdot,T)] \bm{K}_h^{-1} \Pi[\bm{k}_h(T,\cdot)]}_{(*)} \\
& & \hspace{50pt} + \underbrace{\Pi[\bm{k}_h(\cdot,T)] \bm{K}_h^{-1} \bm{\Sigma} \bm{K}_h^{-1}  \Pi[\bm{k}_h(T,\cdot)]}_{(**)},
\end{eqnarray*}
where $T = \{t_i\}_{i=1}^m$ and $\bm{K}_h$ is an $m \times m$ kernel matrix defined by $k_h$.
The term $(*)$ arises from BC on the outer integral, while the term $(**)$ arises from propagating numerical uncertainty from the inner integral through to the outer integral.

\begin{figure}[t!]
\centering
\includegraphics[clip,trim = 0cm 1cm 0cm 0.4cm,width = \textwidth]{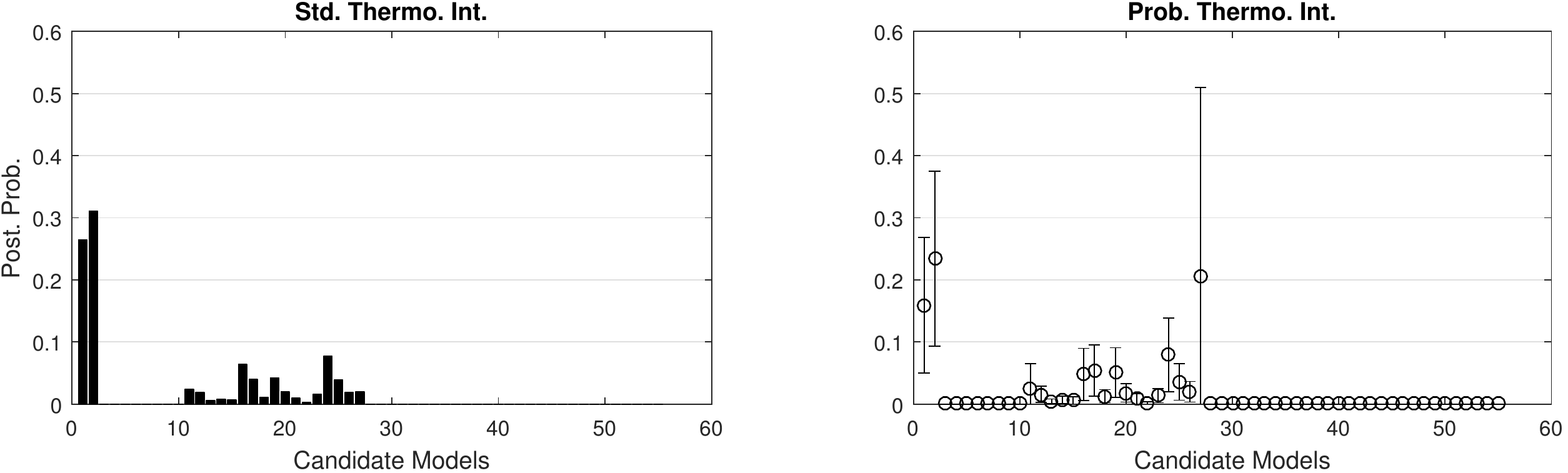} \vspace{1pt} \\
\includegraphics[clip,trim = 0cm 0cm 0cm 0.4cm,width = \textwidth]{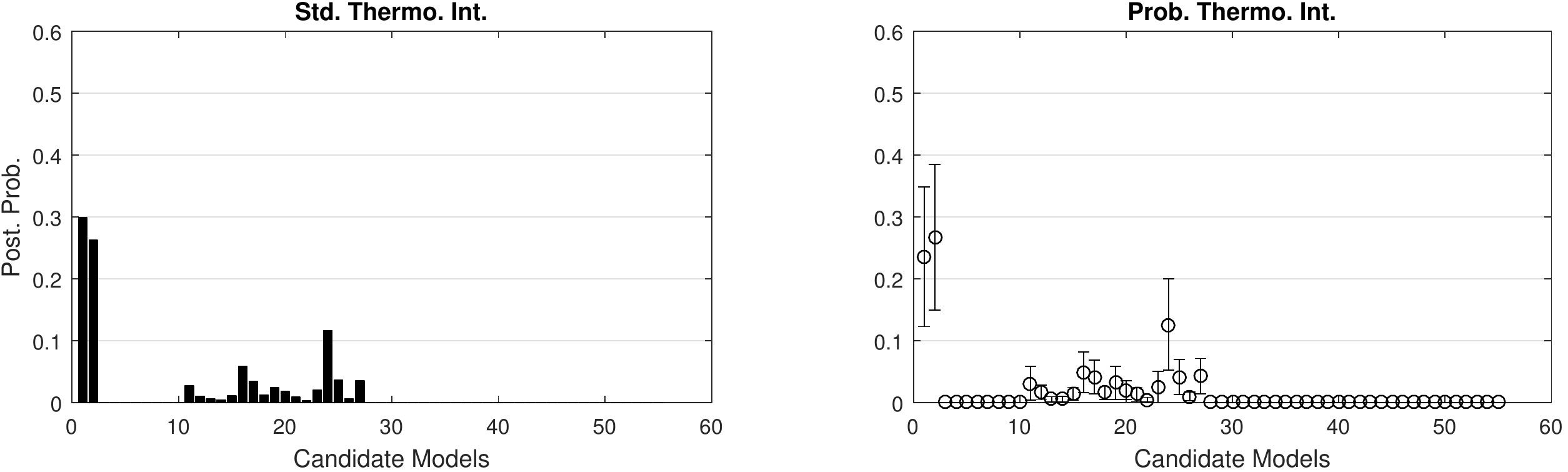}
\caption{Probabilistic thermodynamic integration; illustration on variable selection for logistic regression (the true model was $\mathcal{M}_1$).
Standard and probabilistic thermodynamic integration were used to approximate marginal likelihoods and, hence, the posterior over models.
Each row represents an independent realisation of MCMC, while the data $\bm{y}$ were fixed.
\textit{Left:} Standard Monte Carlo, where point estimates for marginal likelihood were assumed to have no associated numerical error.
\textit{Right:} Probabilistic integration, where a model for numerical error on each integral was propagated through into the posterior over models. 
The probabilistic approach produces a ``probability distribution over a probability distribution'', where the numerical uncertainty is modelled on top of the usual uncertainty associated with model selection.
}
\label{thermo results}
\end{figure}

\subsubsection{Simulation Study}

An experiment was conducted to elicit the MAP model from a collection of 56 candidate logistic regression models in a variable selection setting.
This could be achieved in many ways; our aim was not to compare accuracy of point estimates, but rather to explore the probability model that, unlike in standard methods, is provided by BC.
Full details are in Supplement \ref{TI importance}.

Results are shown in Fig. \ref{thermo results}.
Here we compared approximations to the model posterior obtained using the standard method versus the probabilistic method, over two realisations of the MCMC (the data $\bm{y}$ were fixed).
We make some observations:
(i) The probabilistic approach models numerical uncertainty on top of the usual statistical uncertainty.
(ii) The computation associated with BC required less time, in total, than the time taken afforded to MCMC.
(iii) The same model was not always selected as the MAP when numerical error was ignored and depended on the MCMC random seed. 
In contrast, under the probabilistic approach, either $\mathcal{M}_1$ or $\mathcal{M}_2$ could feasibly be the MAP under any of the MCMC realisations, up to numerical uncertainty.
(iv) The top row of Fig. \ref{thermo results} shows a large posterior uncertainty over the marginal likelihood for $\mathcal{M}_{27}$. This could be used as an indicator that more computational effort should be expended on this particular integral.
(v) The posterior variance was dominated by uncertainty due to discretisation error in the outer integral, rather than the inner integral.
This suggests that numerical uncertainty could be reduced by allocating more computational resources to the outer integral rather than the inner integral.

\FloatBarrier


\subsection{Case Study \#2: Uncertainty Quantification for Computer Experiments} \label{computer section}

Here we consider an industrial scale computer model for the Teal South oil field, New Orleans \citep{Hajizadeh2011}. 
Conditional on field data, posterior inference was facilitated using state-of-the-art MCMC \citep{Lan2016}.
Oil reservoir models are generally challenging for MCMC: 
First, simulating from those models can be time-consuming, making the cost of individual MCMC samples a few minutes to several hours. 
Second, the posterior distribution will often exhibit strongly non-linear concentration of measure.
Here we computed statistics of interest using BMCMC, where the uncertainty quantification afforded by BC aims to enable valid inferences in the presence of relatively few MCMC samples.
Full details are provided in Supplement \ref{teal appendix}.

Quantification of the uncertainty associated with predictions is a major topic of ongoing research in this field \citep{Mohamed2010,Hajizadeh2011,Park2013} due to the economic consequences associated with inaccurate predictions of quantities such as future oil production rate. 
A probabilistic model for numerical error in integrals associated with prediction could provide a more complete uncertainty assessment.

\begin{figure}[t!]
\centering
\begin{minipage}{\textwidth}
\includegraphics[width = 0.32\textwidth]{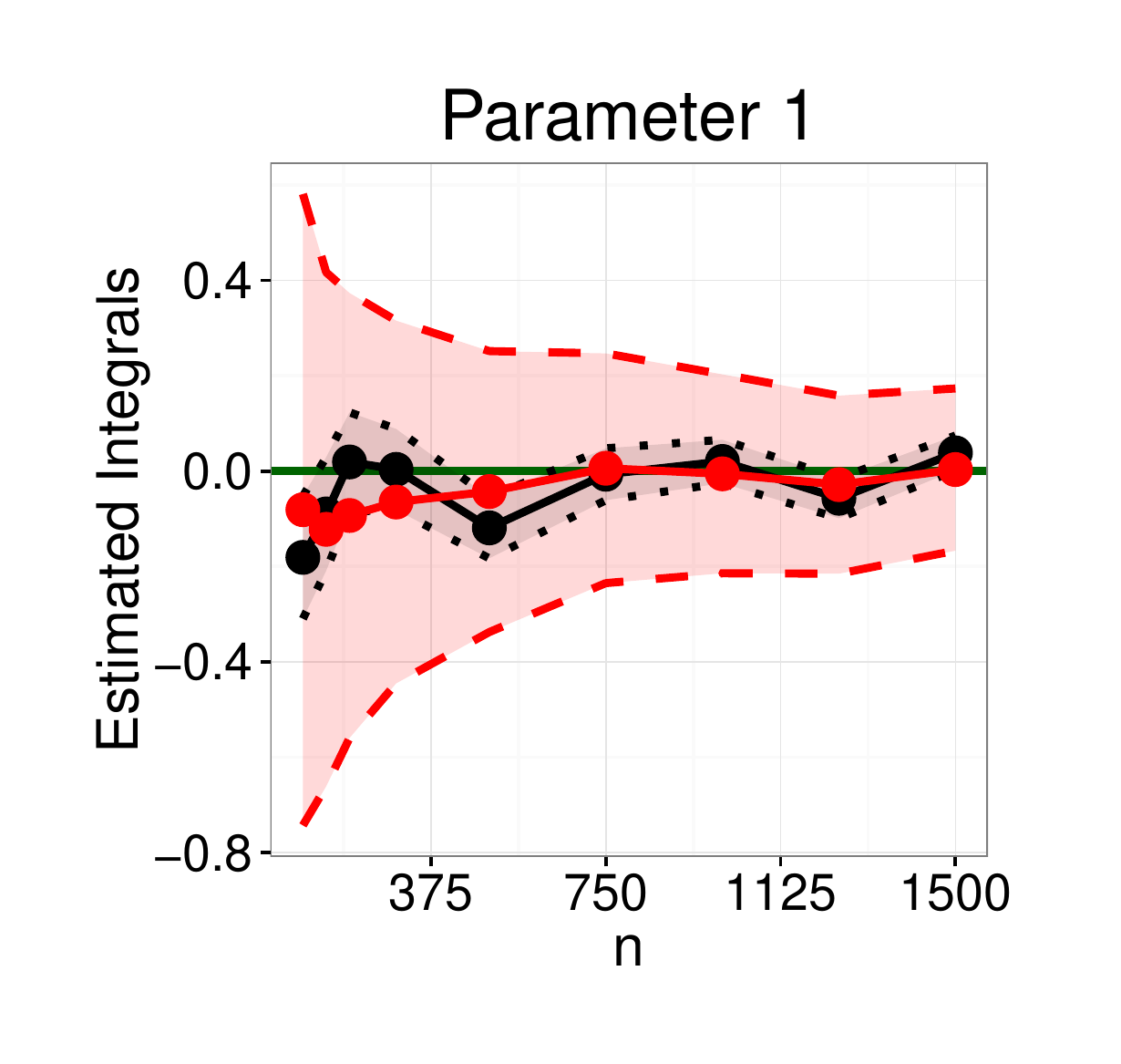}
\includegraphics[width = 0.32\textwidth]{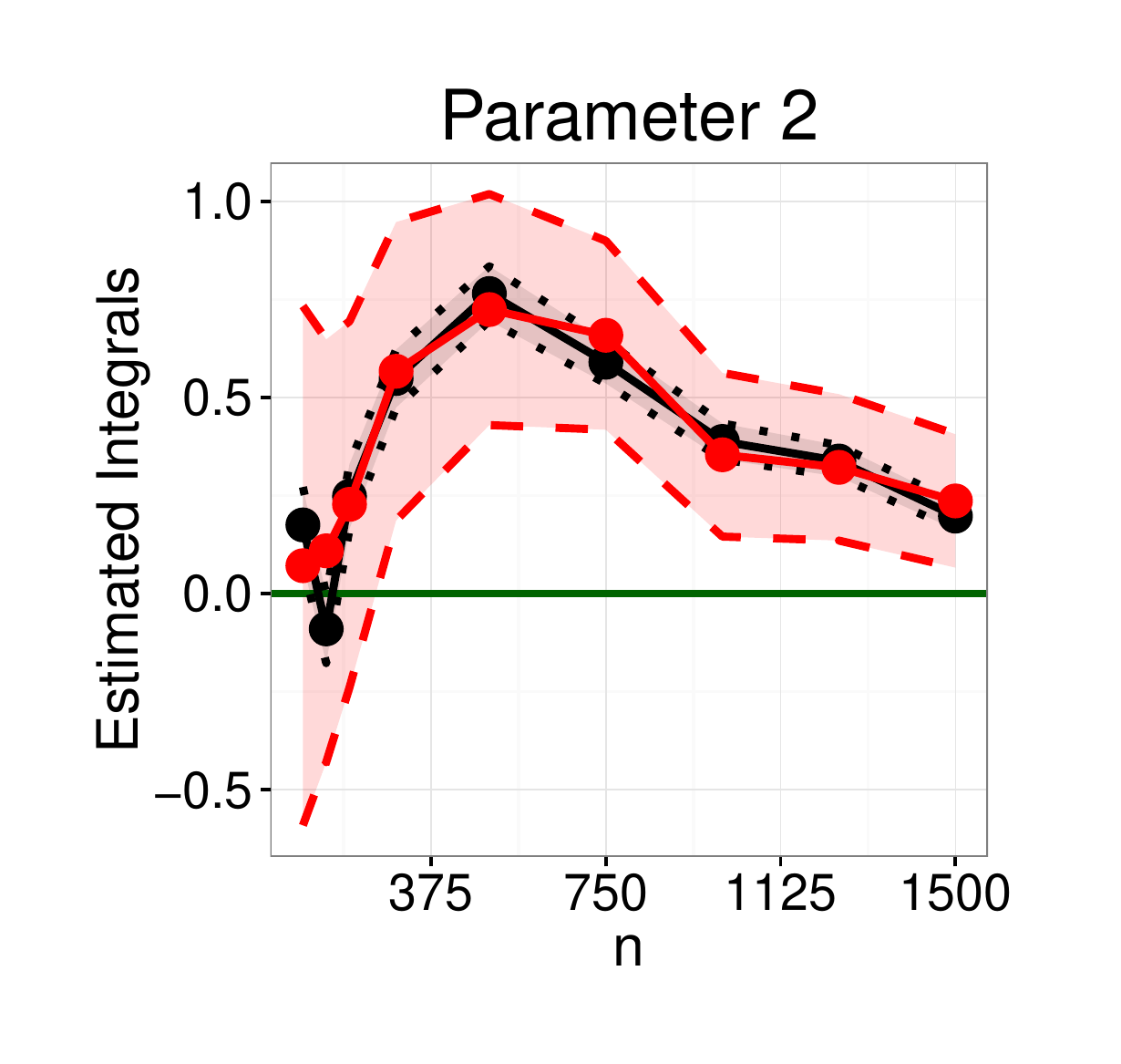}
\includegraphics[width = 0.32\textwidth]{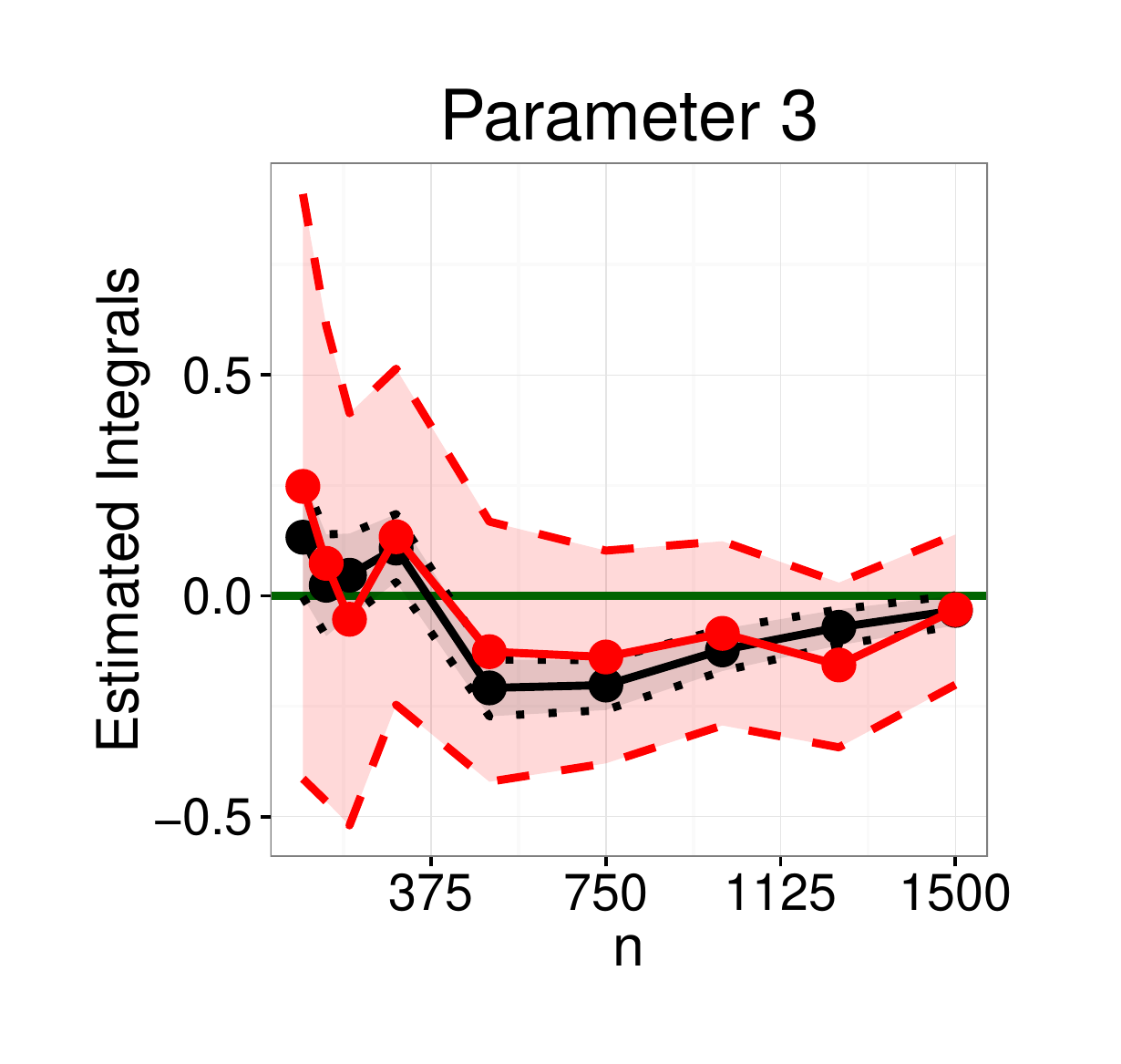}
\vspace{-3mm}
\end{minipage}
\begin{minipage}{\textwidth}
\includegraphics[width = 0.32\textwidth]{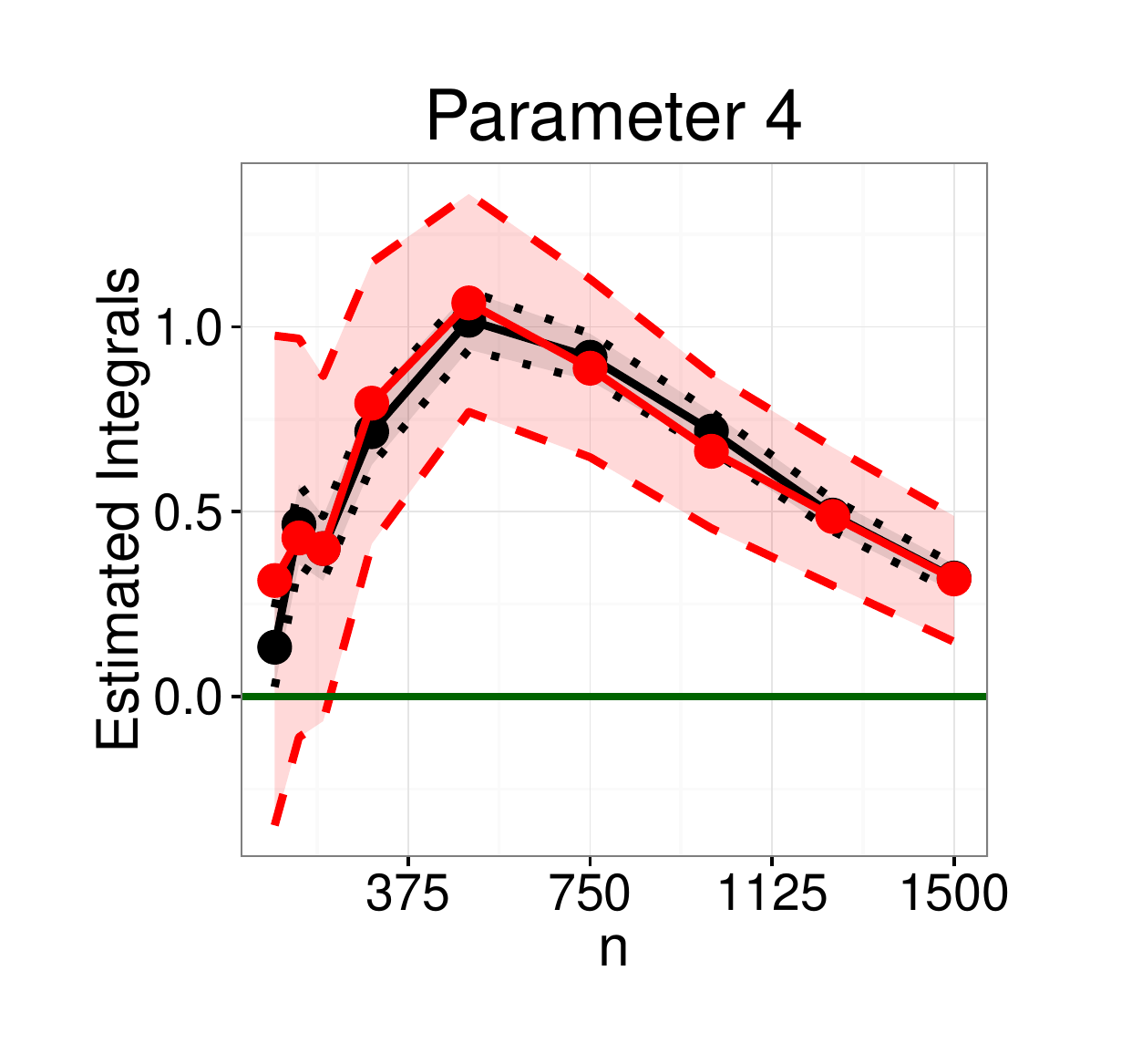}
\includegraphics[width = 0.32\textwidth]{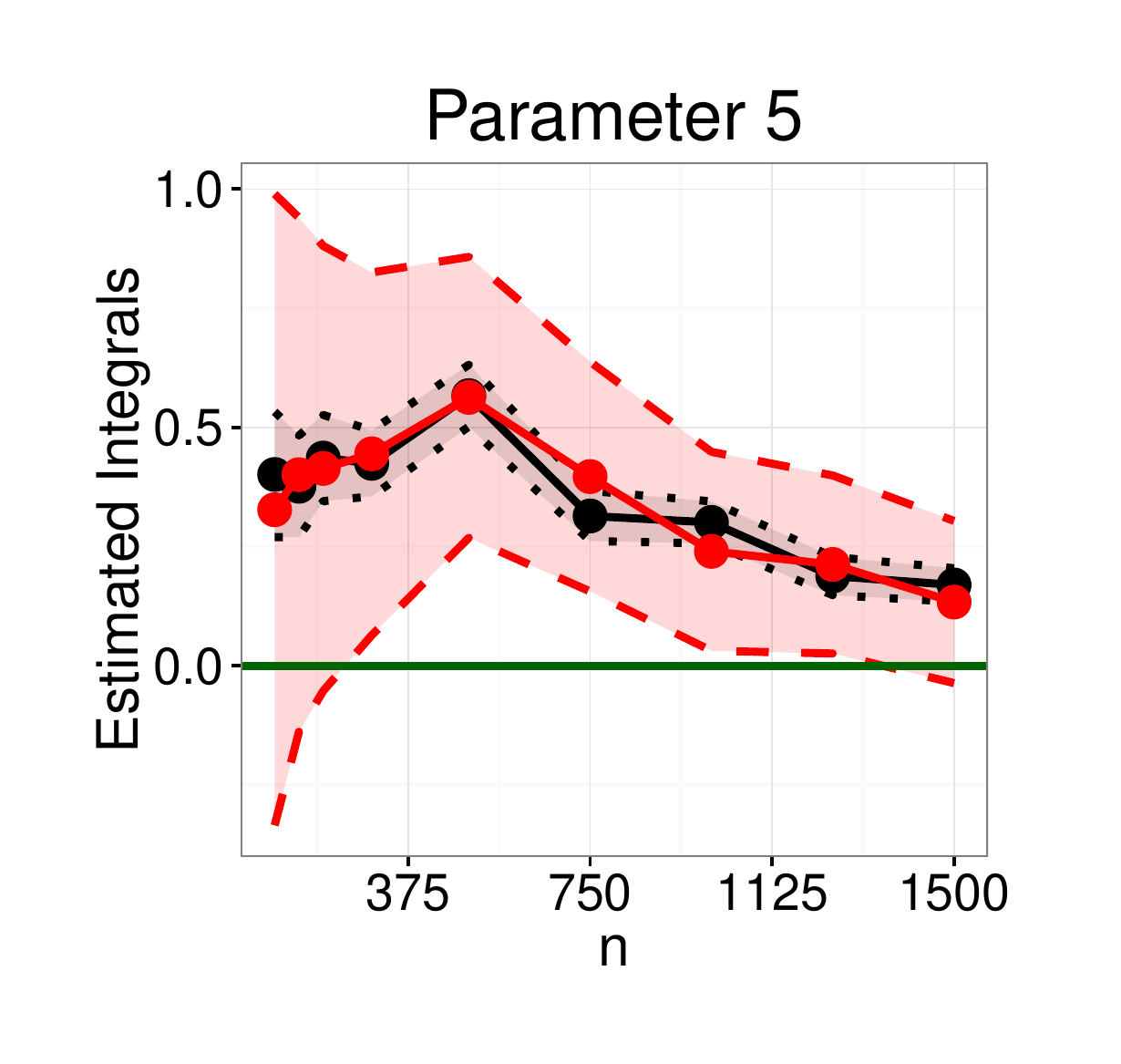}
\includegraphics[width = 0.32\textwidth]{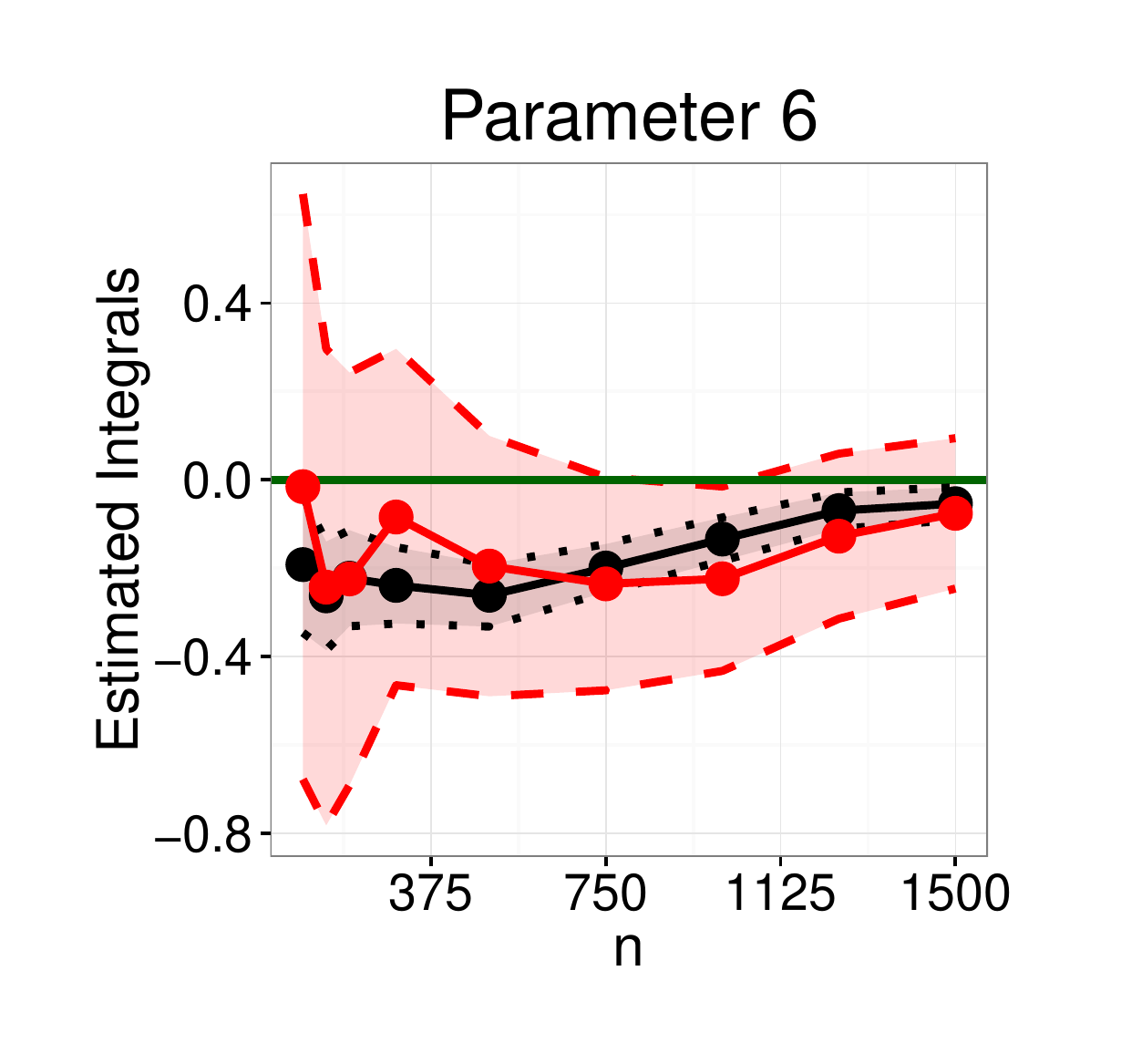}
\vspace{-3mm}
\end{minipage}
\begin{minipage}{\textwidth}
\includegraphics[width = 0.32\textwidth]{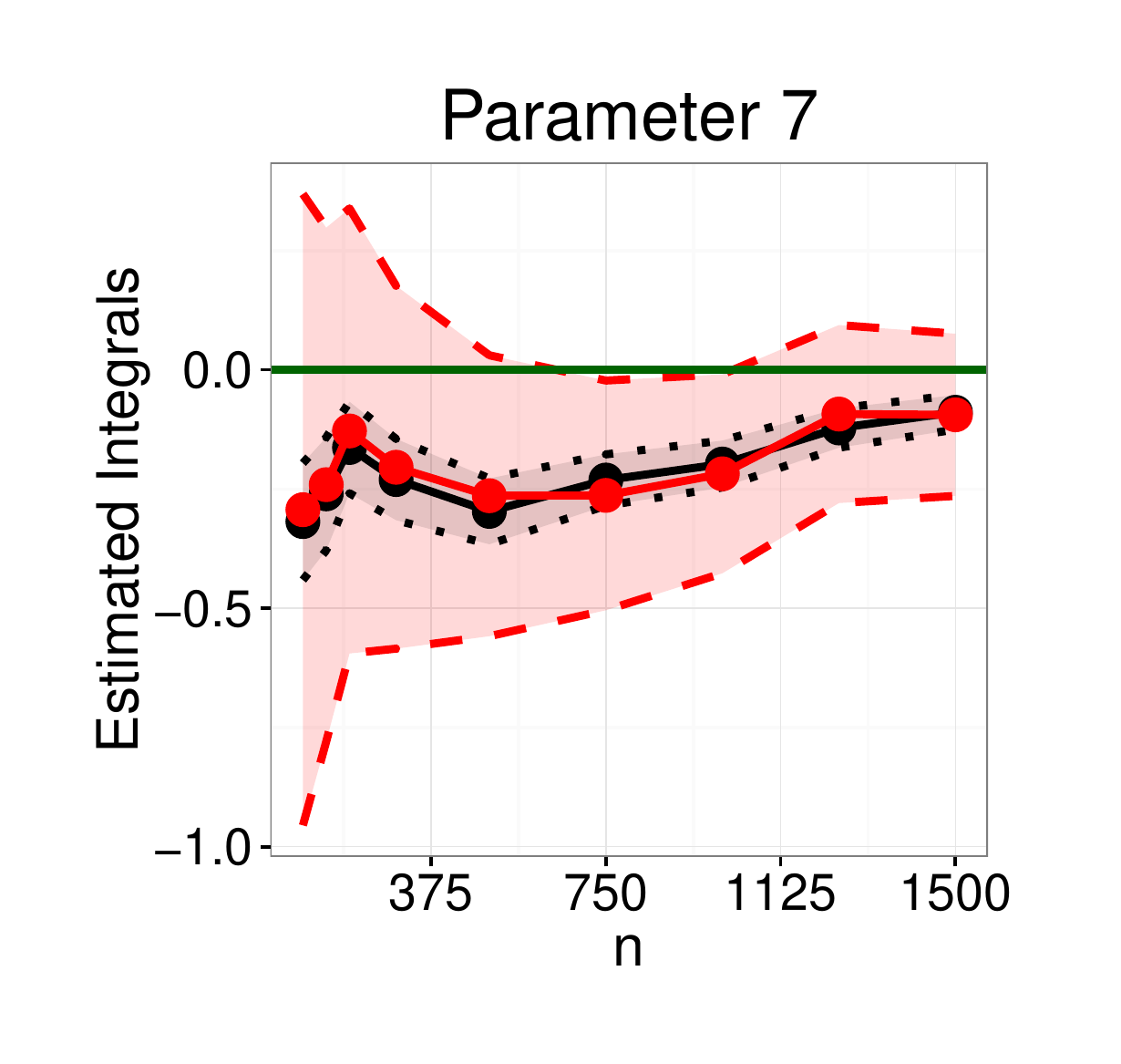}
\includegraphics[width = 0.32\textwidth]{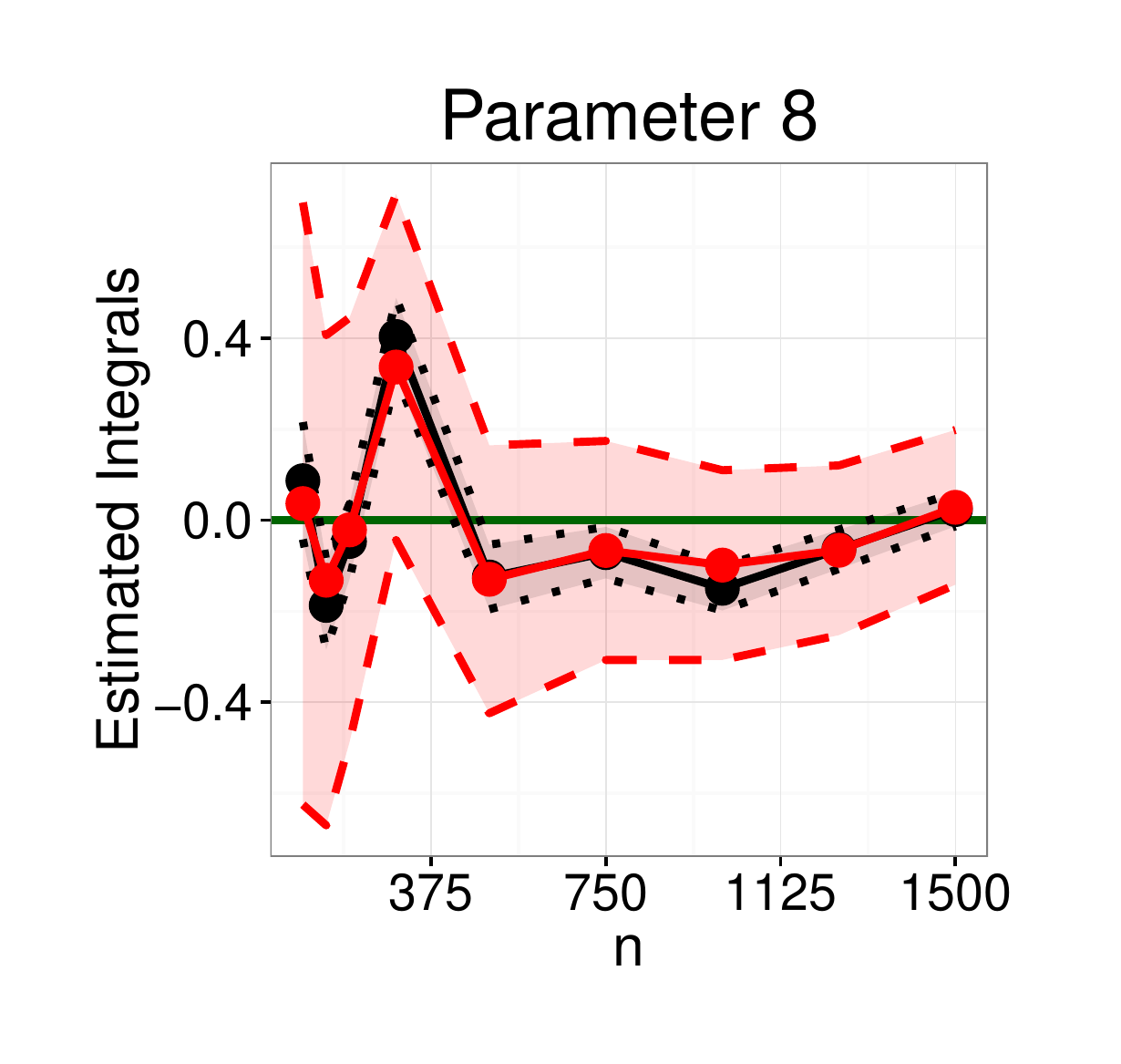}
\includegraphics[width = 0.32\textwidth]{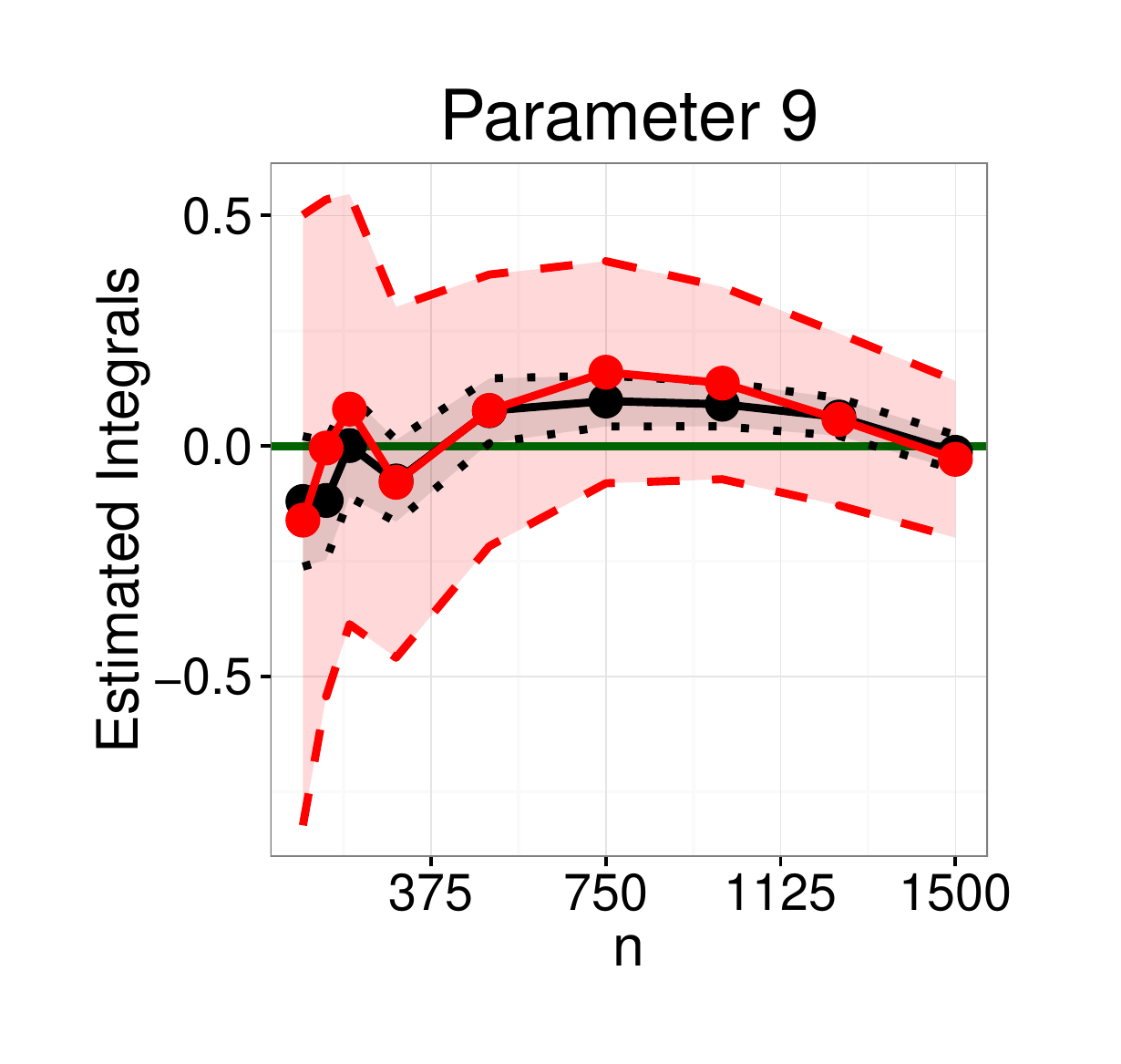}
\end{minipage}
\caption{Numerical estimation of parameter posterior means for the Teal South oil field model (centered around the true values). The green line gives the exact value of the integral. The MCMC (black line) and BMCMC point estimates (red line) provided similar performance.  The MCMC 95\% confidence intervals, based on estimated asymptotic variance (black dotted lines), are poorly calibrated whereas with the BMCMC 95\% credible intervals (red dotted lines) provide a more honest uncertainty assessment.}
\label{fig:Integrals_TealSouth}
\end{figure}

The particular integrals that we considered are posterior means for each model parameter, and we compared against an empirical benchmark obtained with brute force MCMC. 
BMCMC was employed with a Mat\'{e}rn $\alpha=3/2$ kernel whose lengthscale-parameter was selected using EB. 
Estimates for posterior means were obtained using both standard MCMC and BMCMC, shown in Fig. \ref{fig:Integrals_TealSouth}. 
For this example the posterior distribution provides sensible uncertainty quantification for integrals 1, 3, 6-9, but was over-confident for integrals 2, 4, 5. 
The point accuracy of the BMCMC estimator matched that of the standard MCMC estimator.
The lack of faster convergence for BMCMC appears to be due to inaccurate estimation of the kernel mean and we conjecture that alternative exact approaches, such as \cite{Oates2015a}, may provide improved performance in this context.
However, standard confidence intervals obtained from the CLT for MCMC with a plug-in estimate for the asymptotic variance were over-confident for parameters 2-9.



\subsection{Case Study \#3: High-Dimensional Random Effects} \label{weighted Sob app}

Our aim here was to explore whether more flexible representations afforded by weighted combinations of Hilbert spaces enable probabilistic integration when $\mathcal{X}$ is high-dimensional. 
The focus was BQMC, but the methodology could be applied to other probabilistic integrators.

\subsubsection{Weighted Spaces}

The formulation of high (and infinite) -dimensional QMC can be achieved with a construction known as a \emph{weighted} Hilbert space.
These spaces, defined below, are motivated by the observation that many integrands encountered in applications seem to vary more in lower dimensional projections compared to higher dimensional projections.
Our presentation below follows Sec. 2.5.4 and 12.2 of \cite{Dick2010}, but the idea goes back at least to \citet[][Chap. 10]{Wahba1990}.

As usual with QMC, we work in $\mathcal{X} = [0,1]^d$ and $\pi$ uniform over $\mathcal{X}$.
Let $\mathcal{I} = \{1,2,\dots,d\}$.
For each subset $u \subseteq \mathcal{I}$, define a weight $\gamma_u \in (0,\infty)$ and denote the collection of all weights by $\bm{\gamma} = \{\gamma_u\}_{u \subseteq \mathcal{I}}$.
Consider the space $\mathcal{H}_{\bm{\gamma}}$ of functions of the form $f(\bm{x}) = \sum_{u \subseteq \mathcal{I}} f_u(\bm{x}_u)$, where $f_u$ belongs to an RKHS $\mathcal{H}_u$ with kernel $k_u$ and $\bm{x}_u$ denotes the components of $\bm{x}$ that are indexed by $u \subseteq \mathcal{I}$.
This is not restrictive, since any function can be written in this form by considering only $u=\mathcal{I}$.
We turn $\mathcal{H}_{\bm{\gamma}}$ into a Hilbert space by defining an inner product $\langle f , g \rangle_{\bm{\gamma}} := \sum_{u \subseteq \mathcal{I}} \gamma_u^{-1} \langle f_u , g_u \rangle_u
$
where $\bm{\gamma} = \{\gamma_u : u \subseteq \mathcal{I}\}$.
Constructed in this way, $\mathcal{H}_{\bm{\gamma}}$ is an RKHS with kernel $
k_{\bm{\gamma}}(\bm{x},\bm{x}') = \sum_{u \subseteq \mathcal{I}} \gamma_u k_u(\bm{x},\bm{x}')$.
Intuitively, the weights $\gamma_u$ can be taken to be small whenever the function $f$ does not depend heavily on the $|u|$-way interaction of the states $\bm{x}_u$.
Thus, most of the $\gamma_u$ will be small for a function $f$ that is effectively low-dimensional.
A measure of the effective dimension of the function is given by $\sum_{u \subseteq \mathcal{I}} \gamma_u$; in an extreme case $d$ could even be infinite provided that this sum remains bounded \citep{Dick2013}.

The (canonical) \emph{weighted} Sobolev space of dominating mixed smoothness $\mathcal{S}_{\alpha,\bm{\gamma}}$ is defined by taking each of the component spaces to be $\mathcal{S}_\alpha$. 
In finite dimensions $d < \infty$, BQMC rules based on a higher-order digital net attain optimal WCE rates $O(n^{-\alpha+\epsilon})$ for this RKHS; see Supplement \ref{logistic appendix} for full details.

\subsubsection{Semi-Parametric Random Effects Regression}

For illustration we considered generalised linear models, and focus on a Poisson semi-parametric random effects regression model studied by \citet[][Example 2]{Kuo2008}.
The context is inference for the parameters $\bm{\beta}$ of the following model
\begin{eqnarray*}
Y_j | \lambda_j & \sim & \text{Po}(\lambda_j) \nonumber \\
\log(\lambda_j) & = & \beta_0 + \beta_1 z_{1,j} + \beta_2 z_{2,j} + u_1 \phi_1(z_{2,j}) + \dots + u_d \phi_d(z_{2,j}) \\
u_j & \sim & N(0,\tau^{-1}) \text{ independent} \nonumber.
\end{eqnarray*}
Here $z_{1,j} \in \{0,1\}$, $z_{2,j} \in (0,1)$ and $\phi_j(z) = [z - \kappa_j]_+$ where $\kappa_j \in (0,1)$ are pre-determined knots.
We took $d = 50$ equally spaced knots in $[\min \bm{z}_2,\max \bm{z}_2]$.
Inference for $\bm{\beta}$ requires multiple evaluations of the observed data likelihood 
$
p(\bm{y}|\bm{\beta}) = \int_{\mathbb{R}^d} p(\bm{y} | \bm{\beta} , \bm{u}) p(\bm{u}) \mathrm{d}\bm{u}
$
and therefore is a candidate for probabilistic integration methods, in order to model the cumulative uncertainty of estimating multiple numerical integrals.

\begin{figure}[t]
\centering
\makebox{\includegraphics[width = 0.7\textwidth]{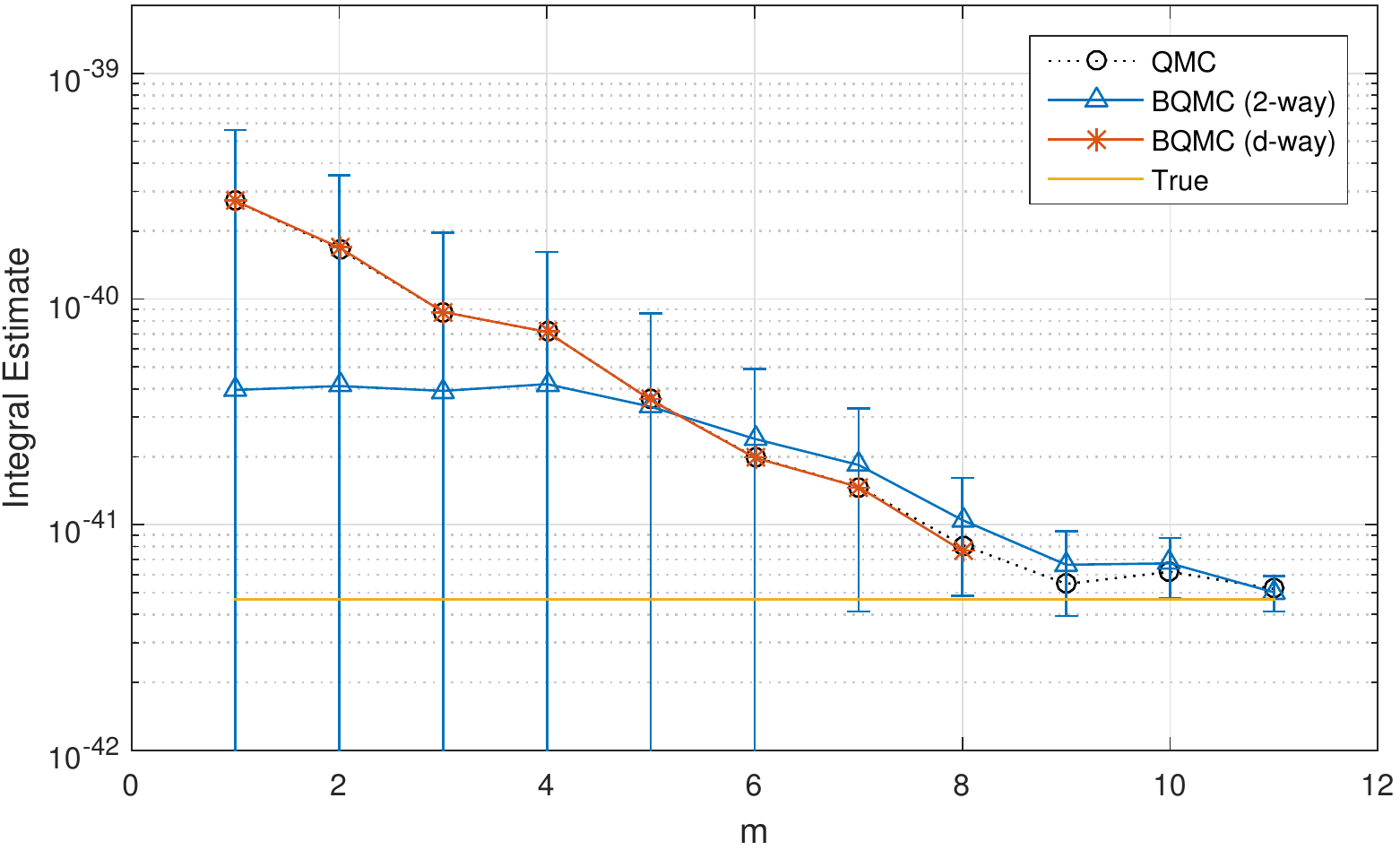}}
\caption{Application to semi-parametric random effects regression in $d = 50$ dimensions, based on $n = 2^m$ samples from a higher-order digital net.
[Error bars show $95\%$ credible regions.
To improve visibility results are shown on the log-scale; error bars are symmetric on the linear scale.
A brute-force QMC estimate was used to approximate the true value of the integral $p(\bm{y}|\bm{\beta})$ where $\bm{\beta} = (0,1,1)$ was the data-generating value of the parameter.]
}
\label{high-dim fig}
\end{figure}

In order to transform this integration problem to the unit cube we perform the change of variables $x_j = \Phi^{-1}(u_j)$ so that we wish to evaluate
$p(\bm{y}|\bm{\beta}) = \int_{[0,1]^d} p(\bm{y} | \bm{\beta} , \Phi^{-1}(\bm{x})) \mathrm{d}\bm{x}$.
Here $\Phi^{-1}(\bm{x})$ denotes the standard Gaussian inverse CDF applied to each component of $\bm{x}$. Probabilistic integration here proceeds under the hypothesis that the integrand  $f(\bm{x}) = p(\bm{y} | \bm{\beta} , \Phi^{-1}(\bm{x}))$ belongs to (or at least can be well approximated by functions in) $\mathcal{S}_{\alpha,\bm{\gamma}}$ for some smoothness parameter $\alpha$ and some weights $\bm{\gamma}$.
Intuitively, the integrand $f(\bm{x})$ is such that an increase in the value of $x_j$ at the knot $\kappa_j$ can be compensated for by a decrease in the value of $x_{j+1}$ at a neighbouring knot $\kappa_{j+1}$, but not by changing values of $\bm{x}$ at more remote knots.
Therefore we expect $f(\bm{x})$ to exhibit strong individual and pairwise dependence on the $x_j$, but expect higher-order dependency to be weaker.
This motivates the weighted space assumption.
\cite{Sinescu2012} provides theoretical analysis for the choice of weights $\bm{\gamma}$. 
Here, weights $\bm{\gamma}$ of \emph{order two} were used; $\gamma_u = 1$ for $|u| \leq d_{\max}$, $d_{\max} = 2$, $\gamma_u = 0$ otherwise, which corresponds to an assumption of low-order interaction terms (though $f$ can still depend on all $d$ of its arguments). 
Full details are provided in Supplement \ref{logistic appendix}.

Results in Fig. \ref{high-dim fig} showed that the $95\%$ posterior credible regions more-or-less cover the truth for this problem, suggesting that the uncertainty estimates are appropriate.
On the negative side, the BQMC method does not encode non-negativity of the integrand and, consequently, some posterior mass is placed on negative values for the integral, which is not meaningful.
To understand the effect of the weighted space construction here, we compared against the BQMC point estimate with $d$-way interactions ($u \in \{\emptyset,\mathcal{I}\}$).
An interesting observation was that these point estimates closely followed those produced by QMC.


\subsection{Case Study \#4: Spherical Integration for Computer Graphics} \label{sec:sphere_application}

Probabilistic integration methods can be defined on arbitrary manifolds, with formulations on non-Euclidean spaces suggested as far back as \cite{Diaconis1988} and recently exploited in the context of computer graphics \citep{Brouillat2009,Marques2015}.
This forms the setting for our final case study.

\subsubsection{Global Illumination Integrals}

Below we analyse BQMC on the $d$-sphere $\mathbb{S}^d = \{\bm{x} = (x_1,\dots,x_{d+1}) \in \mathbb{R}^{d+1} : \|\bm{x}\|_2 = 1\} $
in order to estimate integrals of the form $\Pi[f] = \int_{\mathbb{S}^d} f \mathrm{d}\pi$, where $\pi$ is the spherical measure (i.e. uniform over $\mathbb{S}^d$ with $\int_{\mathbb{S}^d} \mathrm{d}\pi = 1$). 


Probabilistic integration is applied to compute global illumination integrals used in the rendering of surfaces \citep{Pharr2004}, and we therefore focus on the case where $d=2$. 
Uncertainty quantification is motivated by inverse global illumination \citep[e.g.][]{Yu1999}, where the task is to make inferences from noisy observation of an object via computer-based image synthesis; a measure of numerical uncertainty could naturally be propagated in this context.
Below, to limit scope, we restrict attention to uncertainty quantification in the forward problem.

The models involved in global illumination are based on three main factors: a geometric model for the objects present in the scene, a model for the reflectivity of the surface of each object and a description of the light sources provided by an \emph{environment map}. The light emitted from the environment will interact with objects in the scene through reflection. 
This can be formulated as an illumination integral:
\begin{equation*}
L_o(\bm{\omega}_o) = L_e(\bm{\omega}_o) + \int_{\mathbb{S}^2} L_i(\bm{\omega}_i) \rho(\bm{\omega}_i,\bm{\omega}_o) [\bm{\omega}_i \cdot \bm{n}]_+ \mathrm{d}\pi(\bm{\omega}_i). \label{illumination integral}
\end{equation*}
Here $L_o(\bm{\omega}_o)$ is the \textit{outgoing radiance}, i.e. the outgoing light in the direction $\bm{\omega}_o$.
$L_e(\bm{\omega}_o)$ represents the amount of light emitted by the object itself (which we will assume to be known) and $L_i(\bm{\omega}_i)$ is the light hitting the object from direction $\bm{\omega}_i$. 
The term $\rho(\bm{\omega}_i,\bm{\omega}_o)$ is the \emph{bidirectional reflectance distribution} function (BRDF), which models the fraction of light arriving at the surface point from direction $\bm{\omega}_i$ and being reflected towards direction $\bm{\omega}_o$. 
Here $\bm{n}$ is a unit vector normal to the surface of the object.
Our investigation is motivated by strong empirical results for BQMC in this context obtained by \cite{Marques2015}.

To assess the performance of BQMC we consider a typical illumination integration problem based on a California lake environment.
The goal here is to compute intensities for each of the three RGB colour channels corresponding to observing a virtual object from a fixed direction $\bm{\omega}_o$.
We consider the case of an object directly facing the camera ($\bm{w}_o = \bm{n}$).
For the BRDF we took $\rho(\bm{\omega}_i,\bm{\omega}_o)= (2\pi)^{-1} \exp(\bm{\omega}_i \cdot \bm{\omega}_o -1)$. 
The integrand $f(\bm{\omega}_i) = L_i(\bm{\omega}_i) \rho(\bm{\omega}_i,\bm{\omega}_o) [\bm{\omega}_i \cdot \bm{\omega}_o]_+$ was modelled in a Sobolev space of low smoothness.
The specific function space that we consider is the Sobolev space $\mathcal{H}_{\alpha}(\mathbb{S}^d)$ for $\alpha=3/2$, formally defined in Supplement \ref{appendix illumination}.

\begin{figure}[t]
\centering
\includegraphics[width = \textwidth]{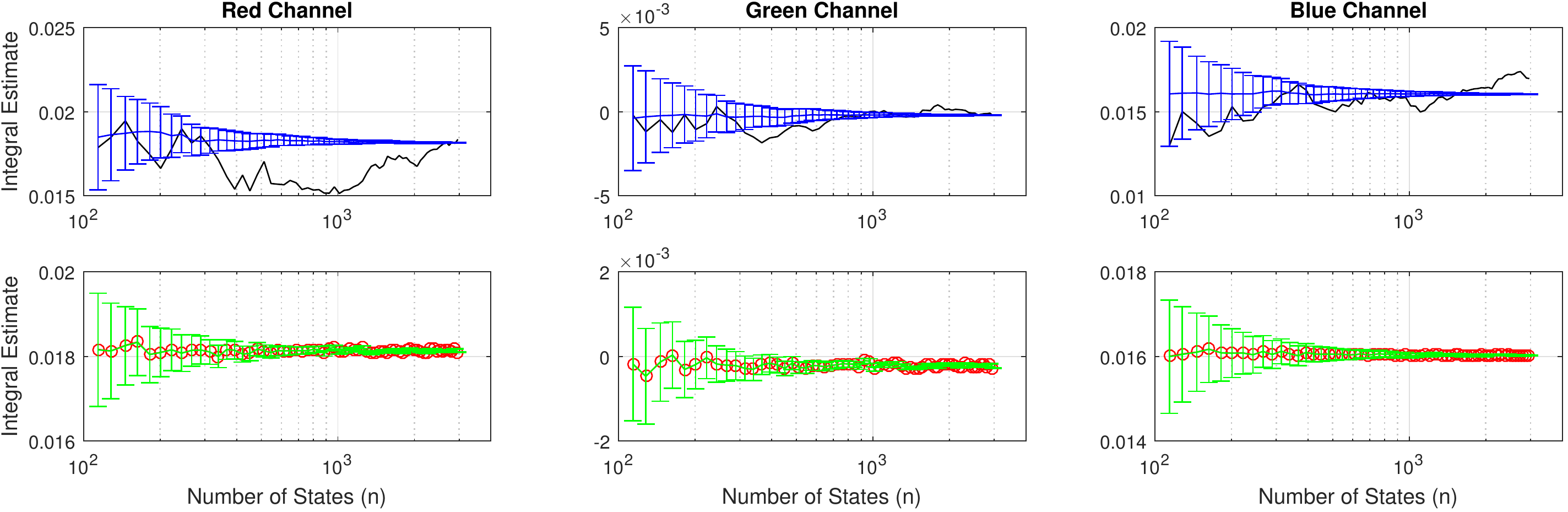}
\caption{Probabilistic integration over the sphere was employed to estimate the RGB colour intensities for the California lake environment.
[Error bars for BMC (blue) and BQMC (green) represent 95\% credible intervals.
MC estimates (black) and QMC estimates (red) are shown for reference.]
}
\label{realpictures_sphere}
\end{figure}

\subsubsection{Results}

Both BMC and BQMC were tested on this example.
To ensure fair comparison, identical kernels were taken as the basis for both methods.
BQMC was employed using a \emph{spherical $t$-design} \citep{Bondarenko2013}. 
It can be shown that for BQMC $\|\hat{\Pi}_{\text{BQMC}} - \Pi \|_{\mathcal{H}^*} = O(n^{-3/4})$ when this point set is used (see Supplement \ref{appendix illumination}). 

Fig. \ref{realpictures_sphere} shows performance in RGB-space.
For this particular test function, the BQMC point estimate was almost identical to the QMC estimate at all values of $n$.
Overall, both BMC and BQMC provided sensible quantification of uncertainty for the value of the integral at all values of $n$ that were considered.

\FloatBarrier


\section{Conclusion} \label{sec:conclusion}

The increasing sophistication of computational models, of which numerical integration is one component, demands an improved understanding of how numerical error accumulates and propagates through computation. In (now common) settings where integrands are computationally intensive, or very many numerical integrals are required, effective methods are required that make full use of information available about the problem at hand. 
This is evidenced by the recent success of QMC, which leverages the smoothness properties of integrands. 
Probabilistic numerics puts the statistician in centre stage and aims to {\it model} the integrand. 
This approach was eloquently summarised by \cite{Kadane1985b}, who proposed the following vision for the future of computation:

\begin{displayquote}
``Statistics can be thought of as a set of tools used in making decisions and
inferences in the face of uncertainty. Algorithms typically operate in such an
environment. Perhaps then, statisticians might join the teams of scholars
addressing algorithmic issues."
\end{displayquote}

This paper explored probabilistic integration from the perspective of the statistician.
Our results highlight both the advantages and disadvantages of such an approach. 
On the positive side, the general methodology described a unified framework in which existing MCMC and QMC methods can be associated with a probability distribution that models discretisation error. 
Posterior contraction rates were, for the first time, established. 
On the negative side, there remain many substantial open questions, in terms of philosophical foundations, theoretical analysis and practical application.
These are discussed below:

\paragraph{Philosophy:} There are several issues concerning interpretation.
First, whose epistemic uncertainty is being modelled? 
In \cite{Hennig2015} it was argued that the uncertainty being modelled is that of a hypothetical agent ``that we get to design''. 
That is, the statistician selects priors and loss functions for the agent so that it best achieves the statistician's own goals.
These goals typically involve a combination of relatively black-box behaviour, to perform well on a diverse range of problems, and a low computational overhead. 
Interpretation of the posterior is then more subtle than for subjective inference and many of the points of contention for objective inference also appear in this framework.

\paragraph{Methodology:}
There are options as to which part of the numerical method should be modelled. 
In this paper, the integrand $f$ was considered to be uncertain while the distribution $\pi$ was considered to be known. 
However, one could alternatively suppose that both $f$ and $\pi$ are unknown, pursued in \cite{Oates2016d}.
\textcolor{black}{Regardless, the endogenous nature of the uncertainty quantification means that in practice one is reliant on effective methods for data-driven estimation of kernel parameters.
The interaction of standard methods, such as empirical Bayes, with the task of numerical uncertainty quantification demands further theoretical research \citep[e.g.][]{Xu2017}. }

\paragraph{Theory:}
For probabilistic integration, further theoretical work is required. 
Our results did not address coverage at finite sample size, nor the interaction of coverage with methods for kernel parameter estimation.
A particularly important question, recently addressed in \cite{Kanagawa2016,Kanagawa2017}, is the behaviour of BC when the integrand does not belong to the posited RKHS.

\paragraph{Prior Specification:} 
A broad discussion is required on what prior information should be included, and what information should be ignored. 
Indeed, practical considerations essentially always demand that some aspects of prior information are ignored.
Competing computational, statistical and philosophical considerations are all in play and must be balanced. 

For example, the RKHS framework that we studied in this paper has the advantage of providing a flexible way to encode prior knowledge about the integrand, allowing to specify properties such as smoothness, periodicity, non-stationarity and effective low-dimension.
On the other hand, several important properties, including boundedness, are less easily encoded. 
For BC, the possibility for importance sampling (Eqn. \ref{importance eq}) has an element of arbitrariness that appears to preclude the pursuit of a default prior.

Even within the RKHS framework, there is the issue that integrands $f$ will usually belong to an infinitude of RKHS. 
Selecting an appropriate kernel is arguably the central open challenge for QMC research at present.
From a practical perspective, elicitation of priors over infinite-dimensional spaces in a hard problem. An adequate choice of prior can be very informative for the numerical scheme and can significantly improve the convergence rates of the method. 
Methods for choosing the kernel automatically could be useful here \citep[e.g.][]{Duvenaud2014}, but would need to be considered against their suitability for providing uncertainty quantification for the integral.

The list above is not meant to be exhaustive, but highlights the many areas of research that are yet to be explored.


\section*{Acknowledgements}

\textcolor{black}{The authors are grateful for the expert feedback received from the Associate Editor and Reviewers}, as well as from A. Barp, J. Cockayne, J. Dick, D. Duvenaud, A. Gelman, P. Hennig, M. Kanagawa, J. Kronander, X-L. Meng, A. Owen, C. Robert, S. S\"{a}rkk\"{a}, C. Schwab, D. Simpson, J. Skilling, T. Sullivan, Z. Tan, A. Teckentrup and H. Zhu. 
The authors thank S. Lan and R. Marques for providing code used in case studies 2 and 4. 
FXB was supported by the EPSRC grant [EP/L016710/1]. 
CJO was supported by the ARC Centre of Excellence for Mathematical and Statistical Frontiers (ACEMS). 
MG was supported by the EPSRC grant [EP/J016934/1, EP/K034154/1], an EPSRC Established Career Fellowship, the EU grant [EU/259348] and a Royal Society Wolfson Research Merit Award. This work was also supported by The Alan Turing Institute under the EPSRC grant [EP/N510129/1] and the Lloyds-Turing Programme on Data-Centric Engineering. Finally, this material was also based upon work partially supported by the National Science Foundation (NSF) under Grant DMS-1127914 to the Statistical and Applied Mathematical Sciences Institute. Any opinions, findings, and conclusions or recommendations expressed in this material are those of the author(s) and do not necessarily reflect the views of the NSF.



\newpage
\normalsize
\section*{Supplement}
\appendix
\setcounter{page}{1}

This supplement provides complete proofs for theoretical results, extended numerics and full details to reproduce the experiments presented in the paper.

\vspace{3mm}


\section{Proof of Theoretical Results} \label{appendix:general_RKHS}

\begin{proof}[Proof of Fact \ref{prop: derive mean and var}]
For a prior $\mathcal{N}(m,c)$ and data $\{(\bm{x}_i,f_i)\}_{i=1}^n$, standard conjugacy results for GPs lead to the posterior $\mathbb{P}_n  = \mathcal{N}(m_n,c_n)$ over $\mathcal{L}$, with mean $m_n(\bm{x}) = m(\bm{x}) + \bm{c}(\bm{x},X) \bm{C}^{-1} (\bm{f} - \bm{m})$ and covariance $c_n(\bm{x},\bm{x}') = c(\bm{x},\bm{x}') - \bm{c}(\bm{x},X) \bm{C}^{-1} \bm{c}(X,\bm{x}')$, see Chap. 2 of \cite{Rasmussen2006}.
Then repeated application of Fubini's theorem produces 
\begin{eqnarray*}
\mathbb{E}_n [\Pi[g]] & = & \mathbb{E}_n \left[ \int g \; \mathrm{d}\pi \right] = \int m_n \; \mathrm{d}\pi  \\
\mathbb{V}_n [\Pi[g]] & = & \int \left[ \int g \; \mathrm{d}\pi - \int m_n \; \mathrm{d}\pi \right]^2 \mathrm{d}\mathbb{P}_n(g) \\
& = & \iiint [g(\bm{x}) - m_n(\bm{x})] [g(\bm{x}') - m_n(\bm{x}')] \; \mathrm{d}\mathbb{P}_n(g) \mathrm{d}\pi(\bm{x}) \mathrm{d}\pi(\bm{x}') \\
& = & \iint c_n(\bm{x},\bm{x}') \; \mathrm{d}\pi(\bm{x}) \mathrm{d}\pi(\bm{x}') . 
\end{eqnarray*}
The proof is completed by substituting the expressions for $m_n$ and $c_n$ into these two equations.
(The result in the main text additionally sets $m \equiv 0$.)
\end{proof}

\begin{proof}[Proof of Fact \ref{mmd and mean element}]
From Eqn. \ref{CSMT} in the main text $\|\hat{\Pi} - \Pi\|_{\mathcal{H}^*}  \leq \|  \mu(\hat{\pi}) - \mu(\pi) \|_{\mathcal{H}}$.
For the converse inequality, consider the specific integrand $f = \mu(\hat{\pi}) - \mu(\pi)$.
Then, from the supremum definition of the dual norm, $\|\hat{\Pi} - \Pi\|_{\mathcal{H}^*}  \geq |\hat{\Pi}[f] - \Pi[f]| / \|  f \|_{\mathcal{H}}$.
Now we use the reproducing property:
\begin{eqnarray*} 
\frac{|\hat{\Pi}[f] - \Pi[f]|}{\|  f \|_{\mathcal{H}}} & = & \frac{|\langle f , \mu(\hat{\pi}) - \mu(\pi) \rangle_{\mathcal{H}}|}{\|f\|_{\mathcal{H}}} \\
& = & \frac{\| \mu(\hat{\pi}) - \mu(\pi) \|_{\mathcal{H}}^2}{\| \mu(\hat{\pi}) - \mu(\pi) \|_{\mathcal{H}}}  \; = \; \|  \mu(\hat{\pi}) - \mu(\pi) \|_{\mathcal{H}}.
\end{eqnarray*}
This completes the proof.
\end{proof}

\begin{proof}[Proof of Fact \ref{prop:WCE_kernelmean_equivalence}]
Combining Fact \ref{mmd and mean element} with direct calculation gives that
\begin{eqnarray*}
& & \hspace{-30pt} \|\hat{\Pi} - \Pi \|_{\mathcal{H}^*}^2 \; = \; \| \mu(\hat{\pi}) - \mu(\pi) \|_{\mathcal{H}}^2 \\
& = & \sum_{i,j=1}^n w_iw_j k(\bm{x}_i,\bm{x}_j) - 2 \sum_{i=1}^n w_i \int k(\bm{x},\bm{x}_i) \; \mathrm{d}\pi(\bm{x})  + \iint k(\bm{x},\bm{x}') \; \mathrm{d}\pi(\bm{x}) \mathrm{d}\pi(\bm{x}') \\
& = &  \bm{w}^\top  \bm{K} \bm{w} - 2 \bm{w}^\top  \Pi[\bm{k}(X,\cdot)] +  \Pi\Pi[k(\cdot,\cdot)]
\end{eqnarray*}
as required.
\end{proof}

The following lemma shows that probabilistic integrators provide a point estimate that is \emph{at least as good} as their non-probabilistic counterparts:

\begin{lemma}[Bayesian re-weighting] \label{theo:_existing} Let $f \in \mathcal{H}$.
Consider the \textcolor{black}{cubature} rule $\hat{\Pi}[f]=\sum_{i=1}^n w_i f(\bm{x}_i)$ and the corresponding BC rule $\hat{\Pi}_{\text{BC}}[f] = \sum_{i=1}^n w_i^{\textnormal{BC}} f(\bm{x}_i)$. 
Then $\|\hat{\Pi}_{\textnormal{BC}} - \Pi \|_{\mathcal{H}^*} \leq \|\hat{\Pi} - \Pi\|_{\mathcal{H}^*}$.
\end{lemma}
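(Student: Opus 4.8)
The plan is to exploit Fact~\ref{prop:WCE_kernelmean_equivalence}, which expresses the squared worst-case error of an arbitrary cubature rule with weights $\bm{w}$ as the quadratic form
$$
\|\hat{\Pi} - \Pi\|_{\mathcal{H}^*}^2 = \bm{w}^\top \bm{K} \bm{w} - 2\bm{w}^\top \bm{z} + \Pi[\mu(\pi)],
$$
where $z_i = \mu(\pi)(\bm{x}_i)$ and $K_{i,j} = k(\bm{x}_i,\bm{x}_j)$. The central observation is that the BC weights are precisely the minimiser of this expression over all $\bm{w} \in \mathbb{R}^n$, so the BC rule cannot incur a larger worst-case error than any competing rule built on the same states.

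First I would note that, since $k$ is a positive definite kernel and (after discarding duplicate states) the states $\{\bm{x}_i\}_{i=1}^n$ are distinct, the Gram matrix $\bm{K}$ is symmetric positive definite. Hence the right-hand side above is a strictly convex quadratic function of $\bm{w}$, and its unique minimiser is obtained by setting the gradient $2\bm{K}\bm{w} - 2\bm{z}$ equal to zero, giving $\bm{w}^\star = \bm{K}^{-1}\bm{z}$.

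Next I would identify $\bm{w}^\star$ with the BC weights. Under the standing assumption $c = k$ that underlies the ``corresponding'' BC rule, the covariance Gram matrix coincides with $\bm{K}$ and $\Pi[\bm{c}(X,\cdot)] = \bm{z}$, so that $\bm{w}^{\text{BC}} = \bm{C}^{-1}\Pi[\bm{c}(X,\cdot)] = \bm{K}^{-1}\bm{z} = \bm{w}^\star$. Therefore $\hat{\Pi}_{\text{BC}}$ attains the minimum of the squared worst-case error over all admissible weight vectors.

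Finally, applying Fact~\ref{prop:WCE_kernelmean_equivalence} to the arbitrary rule $\hat{\Pi}$ with its weights $\bm{w}$ and to $\hat{\Pi}_{\text{BC}}$ with $\bm{w}^{\text{BC}}$, the minimising property immediately yields
$$
\|\hat{\Pi}_{\text{BC}} - \Pi\|_{\mathcal{H}^*}^2 \leq \|\hat{\Pi} - \Pi\|_{\mathcal{H}^*}^2,
$$
and taking square roots gives the claim. There is no genuine obstacle here: the argument is an elementary convex-optimisation computation, and the only point requiring care is the bookkeeping that matches $\bm{w}^{\text{BC}}$ to the unconstrained minimiser, which hinges on the identification $c = k$ and on $\bm{K}$ being invertible once duplicate states are removed.
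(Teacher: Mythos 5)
Your proposal is correct and follows essentially the same route as the paper: the paper's proof likewise invokes Fact~\ref{prop:WCE_kernelmean_equivalence} and the observation that $\bm{w}^{\text{BC}} = \bm{K}^{-1}\bm{z}$ is the minimiser of the resulting quadratic form, which is established in Sec.~\ref{duality sec 2}. You have merely spelled out the convexity and invertibility details that the paper leaves implicit.
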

\begin{proof}
This is immediate from Fact \ref{prop:WCE_kernelmean_equivalence}, which shows that the BC weights $w_i^{\text{BC}}$ are an optimal choice for the space $\mathcal{H}$.
\end{proof}

The convergence of $\hat{\Pi}_{\text{BC}}$ is controlled by quality of the approximation $m_n$:
\begin{lemma}[Regression bound] \label{theo:_GPs}
Let $f \in \mathcal{H}$ and fix states $\{\bm{x}_i\}_{i=1}^n \in \mathcal{X}$. Then we have $|\Pi[f] - \hat{\Pi}_{\textnormal{BC}}[f] | \leq \|f - m_n \|_2$.
\end{lemma}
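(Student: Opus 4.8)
The plan is to reduce the claimed bound to a single application of the Cauchy--Schwarz inequality in $L^2(\pi)$, after recognising that the Bayesian cubature point estimate is exactly the $\pi$-integral of the posterior mean function. The starting observation is that, with $m \equiv 0$, Proposition \ref{prop: derive mean and var} gives $\hat{\Pi}_{\textnormal{BC}}[f] = \mathbb{E}_n[\Pi[g]] = \int m_n \, \mathrm{d}\pi = \Pi[m_n]$, since the posterior mean of the integral is obtained by integrating the posterior mean function $m_n(\bm{x}) = \bm{c}(\bm{x},X)\bm{C}^{-1}\bm{f}$ against $\pi$. Consequently the integration error may be rewritten as a single integral,
\begin{equation*}
\Pi[f] - \hat{\Pi}_{\textnormal{BC}}[f] = \Pi[f] - \Pi[m_n] = \Pi[f - m_n] = \int (f - m_n) \, \mathrm{d}\pi = \langle f - m_n, 1 \rangle_2,
\end{equation*}
where $1$ denotes the constant function on $\mathcal{X}$.

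Next I would apply Cauchy--Schwarz with respect to the inner product $\langle \cdot, \cdot \rangle_2$, which yields $|\langle f - m_n, 1 \rangle_2| \leq \|f - m_n\|_2 \, \|1\|_2$. The final ingredient is that $\pi$ is a probability measure, so $\|1\|_2 = \big(\int 1 \, \mathrm{d}\pi\big)^{1/2} = 1$, and the stated bound $|\Pi[f] - \hat{\Pi}_{\textnormal{BC}}[f]| \leq \|f - m_n\|_2$ follows at once. A minor point worth recording is that all quantities are well-defined: since $f \in \mathcal{H}$ and $m_n$ (the kernel interpolant of $f$ at the states) also lies in $\mathcal{H}$, the difference $f - m_n$ belongs to $\mathcal{H}$, and the assumption $R = \int k(\bm{x},\bm{x}) \, \mathrm{d}\pi(\bm{x}) < \infty$ guarantees $\mathcal{H} \hookrightarrow L^2(\pi)$, so that $\|f - m_n\|_2 < \infty$ and the Fubini manipulations underlying $\hat{\Pi}_{\textnormal{BC}}[f] = \Pi[m_n]$ are justified.

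There is really no substantive obstacle here; the result is a one-line consequence of Cauchy--Schwarz once the algebraic identity $\hat{\Pi}_{\textnormal{BC}}[f] = \Pi[m_n]$ is in place. The only step requiring any care is the verification of that identity, i.e.\ that the posterior mean of $\Pi[g]$ coincides with $\Pi$ applied to the posterior mean function, which is exactly what Proposition \ref{prop: derive mean and var} delivers (with $m \equiv 0$) via Fubini's theorem. The conceptual content of the lemma is simply that controlling the $L^2(\pi)$ regression error of the interpolant $m_n$ automatically controls the integration error, since integrating a probability measure against a function can only contract its $L^2$ norm.
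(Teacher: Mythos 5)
Your proof is correct and is essentially the same as the paper's: the paper also rewrites $\Pi[f] - \hat{\Pi}_{\textnormal{BC}}[f]$ as $\int (f - m_n)\,\mathrm{d}\pi$ and bounds its square by $\int (f-m_n)^2\,\mathrm{d}\pi$, invoking Jensen's inequality where you invoke Cauchy--Schwarz against the constant function $1$ --- for a probability measure these are the identical estimate. Your extra remarks on well-definedness and the identity $\hat{\Pi}_{\textnormal{BC}}[f] = \Pi[m_n]$ are fine but not a substantive departure.
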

\begin{proof}
This is an application of Jensen's inequality: $|\Pi[f] - \hat{\Pi}_{\text{BC}}[f]|^2 = ( \int f - m_n \mathrm{d}\pi )^2  \leq \int (f - m_n)^2 \; \mathrm{d}\pi  = \|f - m_n\|_2^2$, as required.
\end{proof}
\noindent Note that this regression bound is not sharp in general \citep[][Prop. II.4]{Ritter2000} and, as a consequence, Thm. \ref{prop:consistency_BMCMC} below is not quite optimal. 

Lemmas \ref{theo:_existing} and \ref{theo:_GPs} refer to the point estimators provided by BC. 
However, we aim to quantify the change in probability mass as the number of samples increases:
\begin{lemma}[BC contraction] \label{theo:contraction_rates}
Assume $f \in \mathcal{H}$.
Suppose that $\|\hat{\Pi}_{\textnormal{BC}} - \Pi \|_{\mathcal{H}^*} \leq \gamma_n$ where $\gamma_n \rightarrow 0$ as $n \rightarrow \infty$.
Define $I_\delta = [\Pi[f] - \delta , \Pi[f] + \delta]$ to be an interval of radius $\delta > 0$ centred on the true value of the integral. 
Then $\mathbb{P}_n\{\Pi[g] \notin I_\delta\}$ vanishes at the rate $O (\exp (-(\delta^2/2) \gamma_n^{-2}))$.
\end{lemma}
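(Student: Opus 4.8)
The plan is to exploit the fact that, under the posterior $\mathbb{P}_n$, the scalar $\Pi[g]$ is \emph{exactly} Gaussian, so that $\mathbb{P}_n\{\Pi[g]\notin I_\delta\}$ is a Gaussian tail probability controllable by the standard sub-Gaussian inequality $\mathbb{P}(|Z|>a)\le 2\exp(-a^2/2)$ for $Z\sim\mathcal{N}(0,1)$ and $a>0$. First I would invoke Prop. \ref{prop: derive mean and var} to write $\Pi[g]\sim\mathcal{N}(\mu_n,\sigma_n^2)$ under $\mathbb{P}_n$, with posterior mean $\mu_n=\mathbb{E}_n[\Pi[g]]=\hat{\Pi}_{\text{BC}}[f]$ and posterior variance $\sigma_n^2=\mathbb{V}_n[\Pi[g]]$. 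Because we work in the matched setting $k=c$, the identity recorded in Sec. \ref{duality sec 2} gives $\sigma_n^2=\|\hat{\Pi}_{\text{BC}}-\Pi\|_{\mathcal{H}^*}^2\le\gamma_n^2$, so the posterior standard deviation is directly controlled by the hypothesis on the WCE.

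The second ingredient is to control the gap between the posterior mean $\mu_n$ and the centre $\Pi[f]$ of the interval. Writing $b_n:=\mu_n-\Pi[f]=\hat{\Pi}_{\text{BC}}[f]-\Pi[f]$, the Cauchy--Schwarz bound of Eqn. \ref{CSMT} (equivalently Fact \ref{mmd and mean element}) yields $|b_n|\le\|f\|_{\mathcal{H}}\,\|\hat{\Pi}_{\text{BC}}-\Pi\|_{\mathcal{H}^*}\le\|f\|_{\mathcal{H}}\,\gamma_n$, so the bias vanishes as $n\to\infty$. Setting $Z:=(\Pi[g]-\mu_n)/\sigma_n\sim\mathcal{N}(0,1)$ under $\mathbb{P}_n$, I would then decompose $\Pi[g]-\Pi[f]=b_n+\sigma_n Z$ and, for $n$ large enough that $|b_n|<\delta$, estimate
\[
\mathbb{P}_n\{\Pi[g]\notin I_\delta\}=\mathbb{P}_n\{|b_n+\sigma_n Z|>\delta\}\le\mathbb{P}_n\Big\{|Z|>\tfrac{\delta-|b_n|}{\sigma_n}\Big\}\le 2\exp\!\Big(-\tfrac{(\delta-|b_n|)^2}{2\sigma_n^2}\Big),
\]
where the first inequality is the triangle inequality $\sigma_n|Z|\ge|b_n+\sigma_n Z|-|b_n|$ and the last is the sub-Gaussian tail bound.

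It remains to read off the exponential rate, and this is where the only real subtlety lies. The difficulty is that the posterior is centred at $\hat{\Pi}_{\text{BC}}[f]$ rather than at the true value $\Pi[f]$, so the numerator in the exponent is $(\delta-|b_n|)^2$ rather than $\delta^2$; a careless bound would therefore not deliver the stated constant $\delta^2/2$. The key observation is that the bias is of \emph{lower order} in the exponent than the variance: since $\sigma_n\le\gamma_n$ the variance contributes a term of size $\delta^2\gamma_n^{-2}/2=O(\gamma_n^{-2})$, whereas the bias correction $|b_n|=O(\gamma_n)$ perturbs this only at order $O(\gamma_n^{-1})$, which is negligible as $\gamma_n\to0$. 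To make this rigorous while matching the claimed constant, I would fix any $\delta'<\delta$; then for all $n$ large enough that $|b_n|\le\delta-\delta'$ we have $\delta-|b_n|\ge\delta'$ and, using $\sigma_n\le\gamma_n$, the display above gives $\mathbb{P}_n\{\Pi[g]\notin I_\delta\}\le 2\exp(-(\delta'^2/2)\gamma_n^{-2})$. Letting $\delta'\uparrow\delta$ recovers the leading exponential rate $O(\exp(-(\delta^2/2)\gamma_n^{-2}))$, which is exactly the assertion of the lemma.
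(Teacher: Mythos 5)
Your proposal is correct and follows essentially the same route as the paper's proof: Gaussianity of the posterior on $\Pi[g]$, the identity $\mathbb{V}_n[\Pi[g]]=\|\hat{\Pi}_{\text{BC}}-\Pi\|_{\mathcal{H}^*}^2\leq\gamma_n^2$, the Cauchy--Schwarz bound $|\hat{\Pi}_{\text{BC}}[f]-\Pi[f]|\leq\|f\|_{\mathcal{H}}\gamma_n$ on the bias, and a standard Gaussian tail estimate. If anything you are more explicit than the paper about why the $O(\gamma_n)$ bias only perturbs the exponent at the negligible order $O(\gamma_n^{-1})$; the paper absorbs this same step into a ``$\lesssim$'' before invoking $\operatorname{erfc}(x)\lesssim\exp(-x^2/2)$, and both arguments share the same mild informality in recovering the exact constant $\delta^2/2$.
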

\begin{proof}
Assume without loss of generality that $\delta < \infty$.
The posterior distribution over $\Pi[g]$ is Gaussian with mean $m_n$ and variance $v_n$.
Since $v_n = \|\hat{\Pi}_{\text{BC}} - \Pi \|_{\mathcal{H}^*}^2$ we have $v_n \leq \gamma_n^2$.
Now the posterior probability mass on $I_\delta^c$ is given by $\int_{I_\delta^c} \phi(r|m_n,v_n) \mathrm{d}r$, where $\phi(r|m_n,v_n)$ is the p.d.f. of the $\mathcal{N}(m_n,v_n)$ distribution. 
From the definition of $\delta$ we get the upper bound
\begin{eqnarray*}
\mathbb{P}_n\{\Pi[g] \notin I_\delta\} \quad & \leq & \int_{-\infty}^{\Pi[f] - \delta} \phi(r|m_n,v_n) \mathrm{d}r + \int_{\Pi[f] + \delta}^\infty \phi(r|m_n,v_n) \mathrm{d}r \\
& = & \quad 1 + \Phi\Big(\underbrace{\frac{\Pi[f] - m_n}{\sqrt{v_n}}}_{(*)} - \frac{\delta}{\sqrt{v_n}}\Big) - \Phi\Big(\underbrace{\frac{\Pi[f] - m_n}{\sqrt{v_n}}}_{(*)} + \frac{\delta}{\sqrt{v_n}}\Big).
\end{eqnarray*}
From the definition of the WCE we have that the terms $(*)$ are bounded by $\|f\|_{\mathcal{H}} <\infty$, so that asymptotically as $\gamma_n \rightarrow 0$ we have
\begin{equation*}
\begin{split}
\mathbb{P}_n\{\Pi[g] \notin I_\delta\} \quad & \lesssim \quad 1 + \Phi\big(- \delta / \sqrt{v_n} \big) - \Phi\big(\delta / \sqrt{v_n} \big) \\
& \lesssim \quad 1 + \Phi\big(- \delta / \gamma_n\big) - \Phi\big(\delta / \gamma_n \big) \\
& \lesssim  \quad \text{erfc}\big(\delta/\sqrt{2}\gamma_n\big).
\end{split}
\end{equation*}
The result follows from the fact that $\text{erfc}(x) \lesssim \exp(-x^2/2)$ for $x$ sufficiently small.
\end{proof}
\noindent This result demonstrates that the posterior distribution is well-behaved; probability mass concentrates in a neighbourhood $I_\delta$ of $\Pi[f]$. 
Hence, if our prior is well calibrated (see Sec. \ref{sec:calibration}), the posterior provides uncertainty quantification over the solution of the integral as a result of performing a finite number $n$ of integrand evaluations.

Define the \emph{fill distance} of the set $X = \{\bm{x}_i\}_{i=1}^n$ as 
$$
h_X = \sup_{\bm{x} \in \mathcal{X}} \; \min_{i = 1,\ldots,n} \|\bm{x} - \bm{x}_i\|_2.
$$
As $n \rightarrow \infty$ the scaling of the fill distance is described by the following special case of Lemma 2, \cite{Oates2016CF2}:
\begin{lemma} \label{lemma:lemma1CF2}
Let $v:[0,\infty)\rightarrow[0,\infty)$ be continuous, monotone increasing, and satisfy $v(0)=0$ and $\lim_{x\downarrow 0} v(x)\exp(x^{-3d})=\infty$. Suppose further $\mathcal{X}=[0,1]^d$, $\pi$ is bounded away from zero on $\mathcal{X}$, and $X=\{\bm{x}_i\}_{i=1}^n$ are samples from an uniformly ergodic Markov chain targeting $\pi$. Then we have $\mathbb{E}_{X}[v(h_{X})] = O\big(v(n^{-1/d+\epsilon})\big)$ where $\epsilon>0$ can be arbitrarily small.
\end{lemma}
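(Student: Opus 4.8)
The plan is to control the fill distance $h_X$ by a classical covering argument and then bound the expectation $\mathbb{E}_X[v(h_X)]$. The geometric core is the following deterministic observation: partition $\mathcal{X}=[0,1]^d$ into $N=M^d$ congruent axis-aligned subcubes of side length $1/M$. If every subcube contains at least one of the states $\bm{x}_i$, then any $\bm{x}\in\mathcal{X}$ lies in some subcube together with a sample, whence $\min_i\|\bm{x}-\bm{x}_i\|_2\leq \sqrt{d}/M$; taking the supremum over $\bm{x}$ gives $h_X\leq \sqrt{d}/M$. Consequently the event $\{h_X>\sqrt{d}/M\}$ is contained in the event that at least one subcube is left empty by the chain.

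First I would estimate the probability of this ``empty cell'' event. Since $\pi$ is bounded away from zero, each subcube $Q$ has $\pi(Q)\geq \pi_{\min}M^{-d}>0$. Because the chain is uniformly ergodic it satisfies a minorisation condition and hence admits a regeneration (split-chain) representation; this yields a bound of the form $\mathbb{P}(Q \text{ not visited in } n \text{ steps})\leq A\exp(-c\,n\,\pi(Q))$, with constants $A,c>0$ depending only on the transition kernel and not on $Q$. A union bound over the $M^d$ cells then gives $\mathbb{P}(\exists \text{ empty cell})\leq A\,M^d\exp(-c\,\pi_{\min}\,n\,M^{-d})$.

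With these two ingredients, I would split the expectation using the monotonicity of $v$ together with the fact that $h_X\leq\mathrm{diam}(\mathcal{X})=\sqrt{d}$ always holds:
\begin{equation*}
\mathbb{E}_X[v(h_X)] \;\leq\; v\!\left(\sqrt{d}/M\right) \;+\; v(\sqrt{d})\,\mathbb{P}(\exists \text{ empty cell}).
\end{equation*}
It remains to choose the resolution $M=M(n)$. Taking $M\asymp n^{1/d-\epsilon}$ makes $\sqrt{d}/M\asymp n^{-1/d+\epsilon}$, so the first term is $O(v(n^{-1/d+\epsilon}))$ after absorbing the constant $\sqrt{d}$ and a harmless logarithmic factor into $\epsilon$ via monotonicity of $v$. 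For this choice the second term is at most $v(\sqrt{d})\,M^d\exp(-c\,\pi_{\min}\,n^{d\epsilon})$, which decays super-polynomially in $n$; the hypothesis $\lim_{x\downarrow 0}v(x)\exp(x^{-3d})=\infty$ — which precisely forbids $v$ from decaying faster than $\exp(-x^{-3d})$ near the origin — is what is used to guarantee that this atypical contribution is dominated by the main term $v(n^{-1/d+\epsilon})$, so that the whole expectation inherits the rate $O(v(n^{-1/d+\epsilon}))$.

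The hard part will be the occupancy estimate of the second step. For i.i.d.\ states the non-visit probability is the elementary $(1-\pi(Q))^n\leq\exp(-n\pi(Q))$, but here the states are dependent, and recovering an exponential bound that is uniform over all $M^d$ cells requires the minorisation/regeneration machinery for uniformly ergodic chains and careful tracking of how the mixing constants enter. This is exactly the estimate supplied by Lemma 2 of \cite{Oates2016CF2}, of which the present statement is the special case $\mathcal{X}=[0,1]^d$; the remaining delicate point is the interplay, in the final balancing step, between the attainable rate and the growth condition imposed on $v$.
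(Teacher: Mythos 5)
The paper does not actually prove this lemma --- it is imported verbatim as ``a special case of Lemma 2 of \cite{Oates2016CF2}'' --- so there is no internal argument to compare against; your covering-plus-occupancy skeleton is the natural reconstruction of the cited proof. Its first three steps are sound: the deterministic implication (all cells occupied $\Rightarrow h_X\le\sqrt{d}/M$), the occupancy estimate, and the split $\mathbb{E}_X[v(h_X)]\le v(\sqrt{d}/M)+v(\sqrt{d})\,\mathbb{P}(\exists\,\text{empty cell})$. The only caveat there is that uniform ergodicity gives $P^m(\bm{x},Q)\ge\pi(Q)/2$ only after a lag $m\asymp\log(1/\pi(Q))\asymp\log n$, so the exponent in your occupancy bound is really $c\,n\,\pi(Q)/\log n$ rather than $c\,n\,\pi(Q)$; this is harmless and is absorbed into $\epsilon$.

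The genuine gap is in your last sentence. You assert that the hypothesis $\lim_{x\downarrow 0}v(x)\exp(x^{-3d})=\infty$ guarantees that the atypical term $v(\sqrt{d})\,M^d\exp(-c\,\pi_{\min}\,n^{d\epsilon})$ is dominated by $v(n^{-1/d+\epsilon})$, but the hypothesis only yields $v(x)\ge\exp(-x^{-3d})$ for small $x$, i.e.\ $v(n^{-1/d+\epsilon})\ge\exp(-n^{3-3d\epsilon})$, whereas the atypical term is of order $\exp(-c\,n^{d\epsilon})$. The required comparison $\exp(-c\,n^{d\epsilon})\lesssim v(n^{-1/d+\epsilon})$ therefore follows from the hypothesis only when $d\epsilon\ge 3-3d\epsilon$, i.e.\ $\epsilon\ge 3/(4d)$, not for arbitrarily small $\epsilon$. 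The step is not merely unverified but unfixable for general admissible $v$: the function $v(x)=\exp(-x^{-2d})$ satisfies every hypothesis of the lemma, yet already for i.i.d.\ sampling from $\pi$ (a uniformly ergodic chain) one has $\mathbb{E}_X[v(h_X)]\ge v(1/4)\,\mathbb{P}(h_X\ge 1/4)\ge v(1/4)\,e^{-Cn}$, which is not $O\big(\exp(-n^{2-2d\epsilon})\big)=O\big(v(n^{-1/d+\epsilon})\big)$ once $\epsilon<1/(2d)$. So the balancing step requires either a stronger growth condition on $v$ (e.g.\ $\lim_{x\downarrow 0}v(x)\exp(x^{-\gamma})=\infty$ for every $\gamma>0$, which polynomial $v$ satisfies) or a weaker rate such as $v(n^{-1/(4d)})$. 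Since the lemma is only ever invoked downstream with $v(h)=h^\alpha$ (in the proof of Theorem 1), your argument does close in the one case that is actually used, but as a proof of the statement as written it is incomplete at precisely the point you yourself flag as delicate.
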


\begin{proof}[Proof of Thm. \ref{prop:consistency_BMCMC}]
Initially consider fixed states $X = \{\bm{x}_i\}_{i=1}^n$ (i.e. fixing the random seed) and $\mathcal{H} = \mathcal{H}_\alpha$.
From a standard result in functional approximation due to \cite{Wu1993}, see also \citet[][Thm. 11.13]{Wendland2005}, there exists $C>0$ and $h_0 > 0$ such that, for all $\bm{x} \in \mathcal{X}$ and $h_X < h_0$, $|f(\bm{x}) - m_n(\bm{x})| \leq C h_X^\alpha \|f\|_{\mathcal{H}}$.
\citep[For other kernels, alternative bounds are well-known;][Table 11.1]{Wendland2005}.
We augment $X$ with a finite number of states $Y = \{\bm{y}_i\}_{i=1}^m$ to ensure that $h_{X \cup Y} < h_0$ always holds.
Then from the regression bound (Lemma \ref{theo:_GPs}),
\begin{eqnarray*}
\big|\hat{\Pi}_{\text{BMCMC}}[f] - \Pi[f]\big| & \leq & \|f - m_n\|_2 \; = \; \left(\int (f(\bm{x}) - m_n(\bm{x}))^2 \; \mathrm{d} \pi(\bm{x}) \right)^{1/2} \\
& \leq & \left(\int (C h_{X \cup Y}^\alpha \|f\|_{\mathcal{H}})^2 \; \mathrm{d}\pi(\bm{x}) \right)^{1/2} \; = \; C h_{X \cup Y}^\alpha \|f\|_{\mathcal{H}}.
\end{eqnarray*}
It follows that $\|\hat{\Pi}_{\text{BMCMC}} - \Pi \|_{\mathcal{H}_\alpha^*}  \leq  C h_{X \cup Y}^\alpha$. Now, taking an expectation $\mathbb{E}_{X}$ over the sample path $X = \{\bm{x}_i\}_{i=1}^n$ of the Markov chain, we have that 
\begin{eqnarray}
\mathbb{E}_{X} \|\hat{\Pi}_{\text{BMCMC}} - \Pi \|_{\mathcal{H}_\alpha^*} & \leq & C \mathbb{E}_{X} h_{X \cup Y}^\alpha \; \leq \; C \mathbb{E}_{X} h_X^\alpha. \label{vdvp1}
\end{eqnarray} 
From Lemma \ref{lemma:lemma1CF2} above, we have a scaling relationship such that, for $h_{X \cup Y} < h_0$, we have $\mathbb{E}_{X} h_X^\alpha =  O(n^{- \alpha/d  + \epsilon} )$ for $\epsilon > 0$ arbitrarily small.
From Markov's inequality, convergence in mean implies convergence in probability and thus, using Eqn. \ref{vdvp1}, we have $\|\hat{\Pi}_{\text{BMCMC}} - \Pi \|_{\mathcal{H}_\alpha^*} = O_P ( n^{- \alpha / d + \epsilon} )$. This completes the proof for $\mathcal{H} = \mathcal{H}_\alpha$.
More generally, if $\mathcal{H}$ is norm-equivalent to $\mathcal{H}_\alpha$ then the result follows from the fact that $\|\hat{\Pi}_{\text{BMCMC}} - \Pi \|_{\mathcal{H}^*} \leq \lambda \|\hat{\Pi}_{\text{BMCMC}} - \Pi \|_{\mathcal{H}_\alpha^*}$ for some $\lambda > 0$.
\end{proof}

\begin{proof}[Proof of Thm. \ref{theo:BQMC_sobolev}]
From Theorem 15.21 of \cite{Dick2010}, which assumes $\alpha \geq 2$, $\alpha \in \mathbb{N}$, the QMC rule $\hat{\Pi}_{\text{QMC}}$ based on a higher-order digital $(t,\alpha,1,\alpha m\times m,d)$ net over $\mathbb{Z}_b$ for some prime $b$ satisfies $\|\hat{\Pi}_{\text{BQMC}} - \Pi \|_{\mathcal{H}^*} \leq C_{d,\alpha} (\log n)^{d\alpha} n^{-\alpha} = O(n^{-\alpha+\epsilon})$ for $\mathcal{S}_\alpha$ the Sobolev space of dominating mixed smoothness order $\alpha$, where $C_{d,\alpha} > 0$ is a constant that depends only on $d$ and $\alpha$ (but not on $n$). 
The result follows immediately from norm equivalence and Lemma \ref{theo:_existing}.
The contraction rate follows from Lemma \ref{theo:contraction_rates}.
\end{proof}

\textcolor{black}{
\begin{proof}[Proof of Prop. \ref{Stdt prop}]
Denote by $\mathbb{P}_{n,\lambda}$ the posterior distribution on the integral conditional on a value of $\lambda$. 
Following Prop. \ref{prop: derive mean and var}, this is a Gaussian distribution with mean and variance given by:
\begin{eqnarray*}
\mathbb{E}_{n,\lambda}[\Pi[g]] 
& = & \Pi[c_0(\cdot,X)]\bm{C}_0^{-1}\bm{f} \\
\mathbb{V}_{n,\lambda}[\Pi[g]] 
& = & \lambda \{ \Pi\Pi[c_0(\cdot,\cdot)] - \Pi[c_0(\cdot,X)]\bm{C}_0^{-1}\Pi[c_0(X,\cdot)] \}
\end{eqnarray*}
Furthermore, the posterior on the amplitude parameter satisfies
\begin{eqnarray*}
p(\lambda|\bm{f}) 
& \propto & p(\bm{f}|\lambda) p(\lambda)\\
& = & \frac{1}{(2\pi)^{n/2} \lambda^{\frac{n}{2}+1}|\bm{C}_0|^{\frac{1}{2}}} \exp\left(-\frac{1}{2 \lambda} \bm{f}^\top \bm{C}_0^{-1} \bm{f}\right)
\end{eqnarray*}
which corresponds to an inverse-gamma distribution with parameters $\alpha = \frac{n}{2}$ and $\beta = \frac{1}{2}\bm{f}^\top \bm{C}_0^{-1} \bm{f}$. We therefore have that $(\Pi[g],\lambda)$ is distributed as normal-inverse-gamma and the marginal distribution for $\Pi[g]$ is a Student-t distribution, as claimed.
\end{proof}
}

\section{Kernel Means} \label{scalability appendix}

In this section we propose \emph{approximate Bayesian cubature}, $_a\hat{\Pi}_{\text{BC}}$, where the weights $_a\bm{w}_{\text{BC}} = \bm{K}^{-1} {_a}\Pi[\bm{k}(X,\cdot)]$ are an approximation to the optimal BC weights based on an approximation $_a\Pi[\bm{k}(X,\cdot)]$ of the kernel mean \citep[see also Prop. 1 in][]{Sommariva2006}.
The following lemma demonstrates that we can bound the contribution of this error and inflate our posterior $\mathbb{P}_n \mapsto {_a} \mathbb{P}_n$ to reflect the additional uncertainty due to the approximation, so that uncertainty quantification is still provided. 

\begin{lemma}[Approximate kernel mean]\label{lemma:approx_kernel_mean} 
Consider an approximation $_a\pi$ to $\pi$ of the form $_a\pi = \sum_{j=1}^m {_a}w_j \delta_{{_a}\bm{x}_j}$.
Then BC can be performed analytically with respect to $_a\pi$; denote this estimator by $_a\hat{\Pi}_{\text{BC}}$.
Moreover, $\|{_a}\hat{\Pi}_{\text{BC}} - \Pi\|_{\mathcal{H}^*}^2 \leq \|\hat{\Pi}_{\text{BC}} - \Pi\|_{\mathcal{H}^*}^2 + \sqrt{n} \|{_a}\Pi - \Pi \|_{\mathcal{H}^*}^2$.
\end{lemma}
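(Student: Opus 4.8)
The plan is to first dispose of the tractability claim and then reduce the inequality to a single quadratic form in the kernel-mean error, which I will control by a minimum-norm interpolation (equivalently, orthogonal projection) argument in $\mathcal{H}$. First I would record that since ${}_a\pi = \sum_{j=1}^m {}_aw_j \delta_{{}_a\bm{x}_j}$ is a finite atomic measure, every ingredient of BC with respect to ${}_a\pi$ reduces to a finite sum: the approximate kernel mean is $\mu({}_a\pi)(\bm{x}) = \sum_j {}_aw_j k({}_a\bm{x}_j,\bm{x})$ and the initial error is $\sum_{j,k} {}_aw_j {}_aw_k k({}_a\bm{x}_j,{}_a\bm{x}_k)$. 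Hence ${}_a\hat{\Pi}_{\text{BC}}$, with weights ${}_a\bm{w}_{\text{BC}} = \bm{K}^{-1}{}_a\bm{z}$ where ${}_az_i := \mu({}_a\pi)(\bm{x}_i)$, is well-defined and computable. Write $\bm{z}$ for the exact kernel-mean vector, $z_i = \mu(\pi)(\bm{x}_i)$.

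Next I would apply Fact~\ref{prop:WCE_kernelmean_equivalence} to both cubature rules, which share the states $X$ and therefore the same matrix $\bm{K}$ and the same constant $\Pi[\mu(\pi)]$. For the exact rule $\bm{w} = \bm{K}^{-1}\bm{z}$ this gives $\|\hat{\Pi}_{\text{BC}} - \Pi\|_{\mathcal{H}^*}^2 = \Pi[\mu(\pi)] - \bm{z}^\top\bm{K}^{-1}\bm{z}$, while for the approximate rule ${}_a\bm{w}_{\text{BC}} = \bm{K}^{-1}{}_a\bm{z}$ it gives $\|{}_a\hat{\Pi}_{\text{BC}} - \Pi\|_{\mathcal{H}^*}^2 = \Pi[\mu(\pi)] + {}_a\bm{z}^\top\bm{K}^{-1}{}_a\bm{z} - 2\,{}_a\bm{z}^\top\bm{K}^{-1}\bm{z}$. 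Subtracting and completing the square yields the clean identity
\[
\|{}_a\hat{\Pi}_{\text{BC}} - \Pi\|_{\mathcal{H}^*}^2 = \|\hat{\Pi}_{\text{BC}} - \Pi\|_{\mathcal{H}^*}^2 + ({}_a\bm{z} - \bm{z})^\top \bm{K}^{-1} ({}_a\bm{z} - \bm{z}).
\]

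The crux is to bound the quadratic form by $\|{}_a\Pi - \Pi\|_{\mathcal{H}^*}^2$. Set $\Delta := \mu({}_a\pi) - \mu(\pi) \in \mathcal{H}$; by the reproducing property the $i$th entry of ${}_a\bm{z} - \bm{z}$ is exactly $\Delta(\bm{x}_i) = \langle \Delta, k(\cdot,\bm{x}_i)\rangle_{\mathcal{H}}$. The minimum-norm interpolant $s = \sum_i \alpha_i k(\cdot,\bm{x}_i)$ with $\bm{\alpha} = \bm{K}^{-1}({}_a\bm{z} - \bm{z})$ reproduces these values, is the orthogonal projection of $\Delta$ onto the span of $\{k(\cdot,\bm{x}_i)\}_{i=1}^n$, and satisfies $\|s\|_{\mathcal{H}}^2 = ({}_a\bm{z} - \bm{z})^\top\bm{K}^{-1}({}_a\bm{z} - \bm{z})$. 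Since orthogonal projection is norm-nonincreasing, $\|s\|_{\mathcal{H}} \leq \|\Delta\|_{\mathcal{H}}$, and Fact~\ref{mmd and mean element} identifies $\|\Delta\|_{\mathcal{H}} = \|\mu({}_a\pi) - \mu(\pi)\|_{\mathcal{H}} = \|{}_a\Pi - \Pi\|_{\mathcal{H}^*}$. Hence the quadratic form is at most $\|{}_a\Pi - \Pi\|_{\mathcal{H}^*}^2$.

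Finally, combining these gives $\|{}_a\hat{\Pi}_{\text{BC}} - \Pi\|_{\mathcal{H}^*}^2 \leq \|\hat{\Pi}_{\text{BC}} - \Pi\|_{\mathcal{H}^*}^2 + \|{}_a\Pi - \Pi\|_{\mathcal{H}^*}^2$, and since $n \geq 1$ we have $1 \leq \sqrt{n}$, which yields the stated inequality (indeed with the sharper constant $1$ in place of $\sqrt{n}$). The only nontrivial step is the quadratic-form bound, and the projection viewpoint is what makes it immediate; I would flag that this is precisely where the argument could instead be run more crudely, via $({}_a\bm{z}-\bm{z})^\top\bm{K}^{-1}({}_a\bm{z}-\bm{z}) \le \lambda_{\min}(\bm{K})^{-1}\|{}_a\bm{z}-\bm{z}\|_2^2$ together with a componentwise Cauchy--Schwarz estimate of each $\Delta(\bm{x}_i)$, which is presumably the route that produces the looser $\sqrt{n}$ factor appearing in the statement.
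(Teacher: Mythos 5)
Your proof is correct, and it diverges from the paper's at exactly the step you identified as the crux. Both arguments begin identically: expanding the worst-case error of the perturbed rule via Fact~\ref{prop:WCE_kernelmean_equivalence} and completing the square to obtain the exact identity $\|{}_a\hat{\Pi}_{\text{BC}} - \Pi\|_{\mathcal{H}^*}^2 = \|\hat{\Pi}_{\text{BC}} - \Pi\|_{\mathcal{H}^*}^2 + \bm{\epsilon}^\top \bm{K}^{-1}\bm{\epsilon}$ with $\bm{\epsilon} = {}_a\bm{z} - \bm{z}$. To bound the quadratic form, the paper writes $\bm{\epsilon}^\top\bm{K}^{-1}\bm{\epsilon}$ as an inner product of $\Delta\otimes\Delta$ with $\sum_{i,i'}[\bm{K}^{-1}]_{i,i'}\,k(\cdot,\bm{x}_i)\otimes k(\cdot,\bm{x}_{i'})$ in $\mathcal{H}\otimes\mathcal{H}$ and applies Cauchy--Schwarz there; the norm of the second factor is computed as $(\operatorname{tr}[\bm{K}\bm{K}^{-1}\bm{K}\bm{K}^{-1}])^{1/2} = \sqrt{n}$, which is the source of the $\sqrt{n}$ in the statement (not the $\lambda_{\min}(\bm{K})^{-1}$ route you speculated about). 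Your argument instead recognises $\bm{\epsilon}^\top\bm{K}^{-1}\bm{\epsilon}$ as the squared $\mathcal{H}$-norm of the minimum-norm interpolant of $\Delta = \mu({}_a\pi)-\mu(\pi)$ at the states $X$, i.e.\ the orthogonal projection of $\Delta$ onto $\operatorname{span}\{k(\cdot,\bm{x}_i)\}$, and uses that projections are norm-nonincreasing together with Fact~\ref{mmd and mean element}. This is both shorter and strictly sharper: it yields the inequality with constant $1$ in place of $\sqrt{n}$, showing the paper's tensor-product Cauchy--Schwarz is loose by a factor that grows with $n$. Since $1 \le \sqrt{n}$, the stated bound follows, so your proof establishes everything claimed (and more).
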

\begin{proof}
Define $\bm{z} = \Pi[\bm{k}(X,\cdot)]$ and ${_a}\bm{z} = {_a}\Pi[\bm{k}(X,\cdot)]$.
Let $\bm{\epsilon} = {_a}\bm{z} - \bm{z}$, write $_a\hat{\Pi}_{\text{BC}} = \sum_{i=1}^n {_a}w_i^{\text{BC}} \delta_{\bm{x}_i}$ and consider 
\begin{eqnarray*}
\|{_a}\hat{\Pi}_{\text{BC}} - \Pi \|_{\mathcal{H}^*}^2 & = & \| \mu({_a}\hat{\pi}_{\text{BC}}) - \mu(\pi) \|_{\mathcal{H}}^2 \\
& & \hspace{-90pt} = \; \left\langle \sum_{i=1}^n {_a}w_i^{\text{BC}} k(\cdot,\bm{x}_i) - \int k(\cdot,\bm{x}) \mathrm{d}\pi(\bm{x}) , \sum_{i=1}^n {_a}w_i^{\text{BC}} k(\cdot,\bm{x}_i) - \int k(\cdot,\bm{x}) \mathrm{d}\pi(\bm{x}) \right\rangle_{\mathcal{H}} \\
& = & {_a}\bm{w}_{\text{BC}}^\top  \bm{K} {_a}\bm{w}_{\text{BC}} - 2 {_a}\bm{w}_{\text{BC}}^\top  \bm{z} + \Pi[\mu(\pi)] \\
& = & (\bm{K}^{-1} {_a}\bm{z})^\top  \bm{K} (\bm{K}^{-1} {_a}\bm{z}) - 2 (\bm{K}^{-1} {_a}\bm{z})^\top  \bm{z} + \Pi[\mu(\pi)] \\
& = & (\bm{z} + \bm{\epsilon})^\top  \bm{K}^{-1} (\bm{z} + \bm{\epsilon}) - 2 (\bm{z} + \bm{\epsilon})^\top  \bm{K}^{-1} \bm{z} + \Pi[\mu(\pi)] \\
& = & \|\hat{\Pi}_{\text{BC}} - \Pi \|_{\mathcal{H}^*}^2 + \bm{\epsilon}^\top  \bm{K}^{-1} \bm{\epsilon}.
\end{eqnarray*}
Use $\otimes$ to denote the tensor product of RKHS.
Now, since $\epsilon_i = {_a}z_i - z_i = \mu({_a}\hat{\pi})(\bm{x}_i) - \mu(\pi)(\bm{x}_i) = \langle \mu({_a}\hat{\pi}) - \mu(\pi) , k(\cdot, \bm{x}_i) \rangle_{\mathcal{H}}$, we have:
\begin{eqnarray*}
\bm{\epsilon}^\top  \bm{K}^{-1} \bm{\epsilon} & = & \sum_{i,i'} [\bm{K}^{-1}]_{i,i'} \big\langle \mu({_a}\hat{\pi}) - \mu(\pi) , k(\cdot, \bm{x}_i) \big\rangle_{\mathcal{H}} \big\langle \mu({_a}\hat{\pi}) - \mu(\pi) , k(\cdot, \bm{x}_{i'}) \big\rangle_{\mathcal{H}}  \\
& = & \Big\langle  \big(\mu({_a}\hat{\pi}) - \mu(\pi) \big) \otimes \big(\mu({_a}\hat{\pi}) - \mu(\pi) \big) , \sum_{i,i'} [\bm{K}^{-1}]_{i,i'} k(\cdot, \bm{x}_i) \otimes k(\cdot, \bm{x}_{i'}) \Big\rangle_{\mathcal{H} \otimes \mathcal{H}}  \\
& \leq & \| \mu({_a}\hat{\pi}) - \mu(\pi) \|_{\mathcal{H}}^2 \Big\| \sum_{i,i'} [\bm{K}^{-1}]_{i,i'} k(\cdot, \bm{x}_i) \otimes k(\cdot, \bm{x}_{i'}) \Big\|_{\mathcal{H} \otimes \mathcal{H}}.
\end{eqnarray*}
From Fact \ref{mmd and mean element} we have $\|\mu({_a}\hat{\pi}) - \mu(\pi) \|_\mathcal{H} = \|{_a}\hat{\Pi} - \Pi\|_{\mathcal{H}}$ so it remains to show that the second term is equal to $\sqrt{n}$. Indeed,
\begin{eqnarray*}
& & \hspace{-40pt} \Big\| \sum_{i,i'} [\bm{K}^{-1}]_{i,i'} k(\cdot,\bm{x}_i) \otimes k(\cdot,\bm{x}_{i'})\Big\|_{\mathcal{H}}^2 \\
& = & \sum_{i,i',l,l'} [\bm{K}^{-1}]_{i,i'} [\bm{K}^{-1}]_{l,l'} \big\langle k(\cdot,\bm{x}_i) \otimes k(\cdot,\bm{x}_{i'}), k(\cdot,\bm{x}_l) \otimes k(\cdot,\bm{x}_{l'}) \big\rangle_{\mathcal{H}} \\
& = & \sum_{i,i',l,l'} [\bm{K}^{-1}]_{i,i'} [\bm{K}^{-1}]_{l,l'} [\bm{K}]_{il}[\bm{K}]_{i',l'} \; = \;
\text{tr} [\bm{K}\bm{K}^{-1}\bm{K}\bm{K}^{-1}] \; = \; n.
\end{eqnarray*}
This completes the proof.
\end{proof}

Under this method, the posterior variance ${_a}\mathbb{V}_n[\Pi[g]] := \|{_a}\hat{\Pi}_{\text{BC}} - \Pi \|_{\mathcal{H}^*}^2$ cannot be computed in closed-form, but computable upper-bounds can be obtained and these can then be used to propagate numerical uncertainty through the remainder of our statistical task.
The idea here is to make use of the triangle inequality:
\begin{eqnarray}\label{eq:upper_bound}
\|{_a}\hat{\Pi}_{\text{BC}} - \Pi \|_{\mathcal{H}^*} & \leq & \|{_a}\hat{\Pi}_{\text{BC}} - {_a}\Pi \|_{\mathcal{H}^*} \; + \; \|{_a}\Pi - \Pi \|_{\mathcal{H}^*}.
\end{eqnarray}
The first term on the RHS is now available analytically; from Fact \ref{prop: derive mean and var} its square is ${_a}\Pi {_a}\Pi[k(\cdot,\cdot)] - {_a}\Pi[\bm{k}(\cdot,X)] \bm{K}^{-1} {_a}\Pi[\bm{k}(X,\cdot)]$.
For the second term, explicit upper bounds exist in the case where states $_a\bm{x}_i$ are independent random samples from $\pi$.
For instance, from \cite[Thm. 27]{Song2008} we have, for a radial kernel $k$, uniform $_aw_j = m^{-1}$ and independent $_a\bm{x}_i \sim \pi$,
\begin{eqnarray}\label{eq:upper_bound2}
\|{_a}\Pi - \Pi \|_{\mathcal{H}^*} & \leq & \frac{2}{\sqrt{m}} \sup_{\bm{x} \in \mathcal{X}} \sqrt{k(\bm{x},\bm{x})} + \sqrt{\frac{\log(2 / \delta)}{2m}}
\end{eqnarray}
with probability at least $1 - \delta$.
(For dependent $_a\bm{x}_j$, the $m$ in Eqn. \ref{eq:upper_bound2} can be replaced with an estimate for the effective sample size.)
Write $C_{n,\gamma,\delta}$ for a $100(1-\gamma)\%$ credible interval for $\Pi[f]$ defined by the conservative upper bound described in Eqns. \ref{eq:upper_bound} and \ref{eq:upper_bound2}.
Then we conclude that $C_{n,\gamma,\delta}$ is $100(1-\gamma)\%$ credible interval with probability at least $1 - \delta$.

Note that, even though the credible region has been inflated, it still contracts to the truth, since the first term on the RHS in Lemma \ref{lemma:approx_kernel_mean} can be bounded by the sum of $\|{_a}\hat{\Pi}_{\text{BC}} - \Pi \|_{\mathcal{H}^*}$ and $\|{_a}\Pi - \Pi \|_{\mathcal{H}^*}$, both of which vanish as $n,m \rightarrow \infty$.
The resulting (conservative) posterior ${_a}\mathbb{P}_n$ can be viewed as a updating of beliefs based on an approximation to the likelihood function; the statistical foundations of such an approach are made clear in the recent work of \cite{Bissiri2016}.



\begin{figure}[t]
\centering
\includegraphics[width = 0.45\textwidth,trim={1cm 0.2cm 1cm 0.5cm},clip]{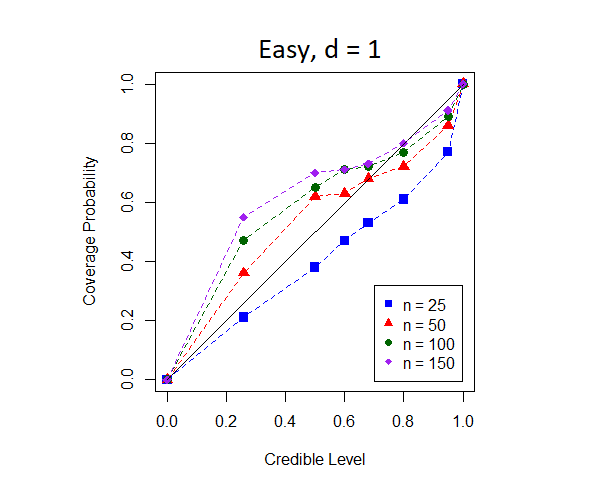}
\includegraphics[width = 0.45\textwidth,trim={1cm 0.2cm 1cm 0.5cm},clip]{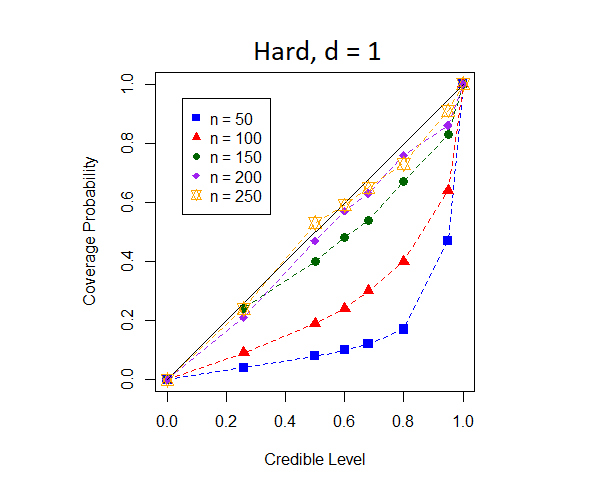}
\includegraphics[width = 0.45\textwidth,trim={1cm 0.2cm 1cm 0.5cm},clip]{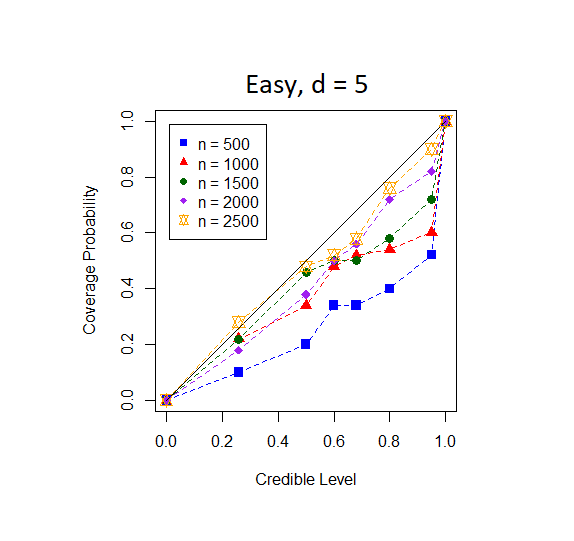}
\includegraphics[width = 0.45\textwidth,trim={1cm 0.2cm 1cm 0.5cm},clip]{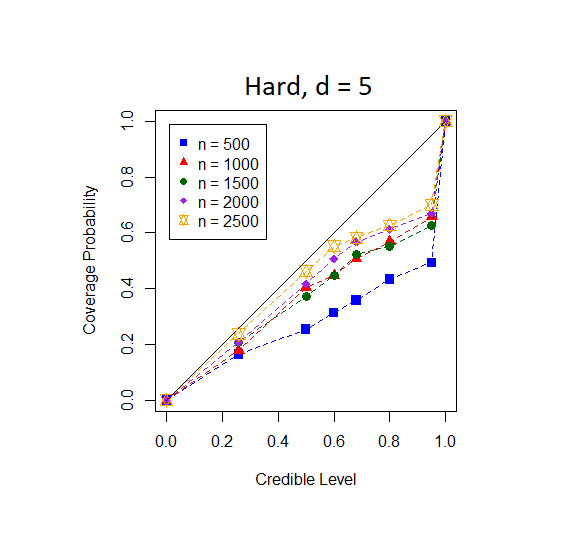}
\caption{Evaluation of uncertainty quantification provided by EB for both $\bm{\sigma}$ and $\lambda$.
Results are shown for $d=1$ (top) and $d=5$ (bottom). 
Coverage frequencies $C_{n,\gamma}$ (computed from $100$ (top) or $50$ (bottom) realisations) were compared against notional $100(1-\gamma)\%$ Bayesian credible regions for varying level $\gamma$. 
\textit{Left:} ``Easy'' test function $f_1$. 
\textit{Right:} ``Hard'' test function $f_2$.
}
\label{fig:calibration_additional}
\end{figure}

\begin{figure}[t!]
\centering
\includegraphics[width = 0.37\textwidth]{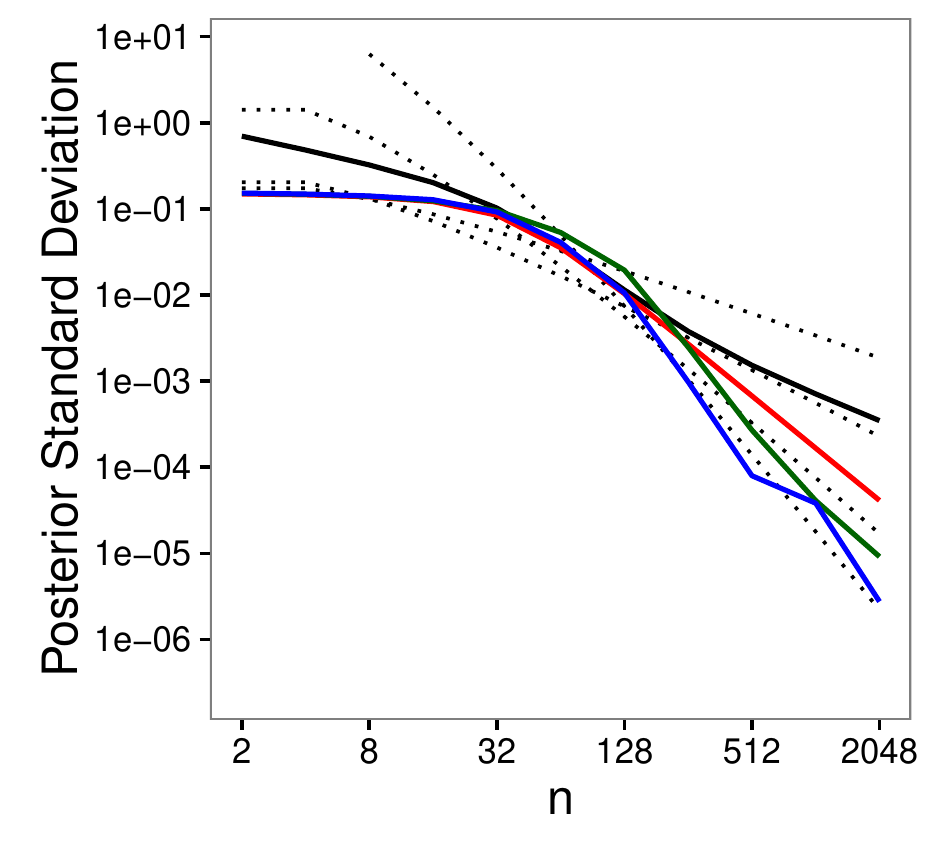}
\includegraphics[width = 0.37\textwidth]{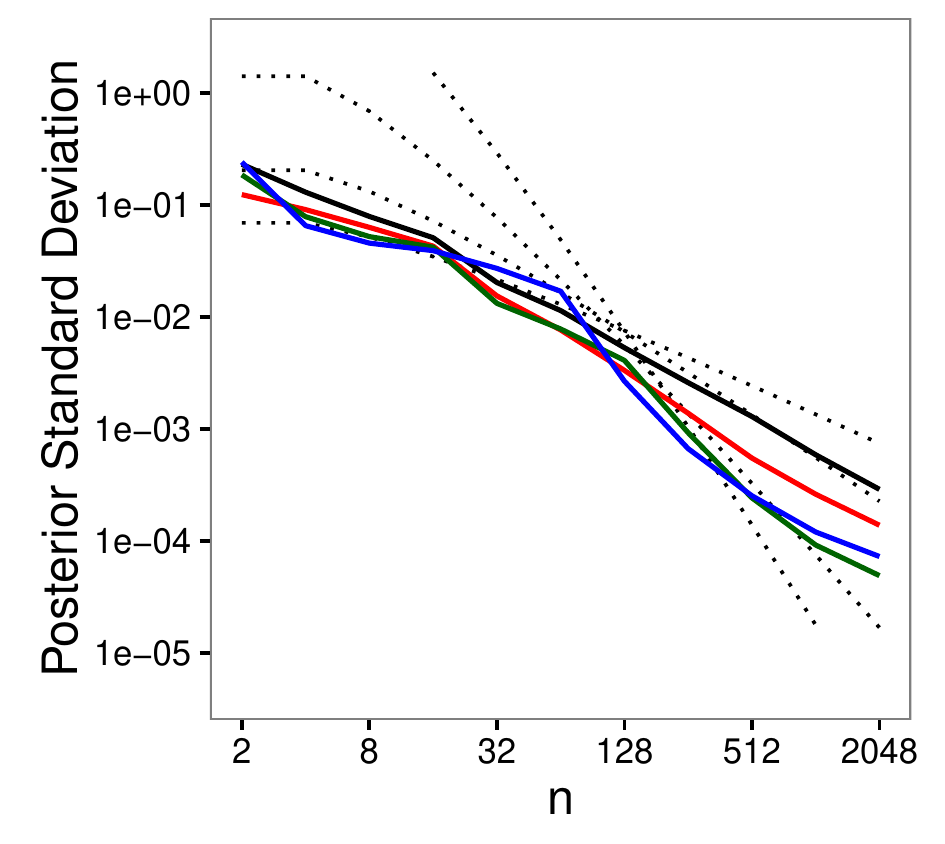}
\caption{Empirical investigation of BQMC in $d=1$ (left) and $d=5$ (right) dimensions and a Sobolev space of mixed dominating smoothness $\mathcal{S}_\alpha$.  
The results are obtained using tensor product Mat\'{e}rn kernels of smoothness $\alpha = 3/2$ (red), $\alpha = 5/2$ (green) and $\alpha=7/2$ (blue). 
Dotted lines represent the theoretical convergence rates established for each kernel. 
The black line represents standard QMC. 
Kernel parameters were fixed to $(\sigma_i,\lambda) = (0.005,1)$ (left) and $(\sigma_i,\lambda) = (1,0.5)$ (right).}
\label{fig:BMC_rates}
\end{figure}

\section{Additional Numerics} \label{extra numerics appendix}

This section presents additional numerical results concerning the calibration of uncertainty for multiple parameters and in higher dimensions.

\paragraph{Calibration in $d = 1$:} In Fig. \ref{fig:calibration_additional} (top row) we study the quantification of uncertainty provided by EB in the same setup as in the main text, but optimising over both length-scale parameter $\sigma_1$ and magnitude parameter $\lambda$. For both ``easy'' and ``hard'' test functions, we notice that EB led to over-confident inferences in the ``low $n$" regime, but attains approximately correct frequentist coverage for larger $n$. 

\paragraph{Calibration in $d=5$:} The experiments of Sec. \ref{sec:test_functions}, based on BMC, were repeated in dimension $d=5$. Results are shown in Fig. \ref{fig:calibration_additional} (bottom row). Clearly more integrand evaluations are required for EB to attain a good frequentist coverage of the credible intervals, due to the curse of dimension. 
However, the frequentist coverage was reasonable for large $n$ in this task.

\paragraph{Empirical convergence assessment:} 
The convergence of BQMC was studied based on higher-order digital nets. 
The theoretical rates provided in Sec. \ref{sec:theory_BQMC} for this method are $O(n^{-\alpha + \epsilon})$ for any $\alpha>1/2$. 
Figure \ref{fig:BMC_rates} gives the results obtained for $d=1$ (left) and $d=5$ (right). In the one dimensional case, the $O(n^{-\alpha + \epsilon})$ theoretical convergence rate is attained by the method in all cases $p = \alpha + 1/2 \in \{3/2,5/2,7/2\}$ considered. However, in the $d=5$ case, the rates are not observed for the number $n$ of evaluations considered. This helps us demonstrate the important point that (in addition to numerical conditioning) the rates we provide are asymptotic, and may require large values of $n$ before being observed.


\FloatBarrier

\section{Supplemental Information for Case Studies} \label{appendix:case_studies}

\subsection{Case Study \#1}
\label{TI importance}

\paragraph{MCMC:}

In this paper we used the manifold Metropolis-adjusted Langevin algorithm \citep{Girolami2011} in combination with population MCMC.
Population MCMC shares information across temperatures during sampling, yet previous work has not leveraged evaluation of the log-likelihood $f$ from one sub-chain $t_i$ to inform estimates derived from other sub-chains $t_{i'}$, $i' \neq i$.
In contrast, this occurs naturally in the probabilistic integration framework, as described in the main text.

Here MCMC was used to generate a small number, $n = 200$, of samples on a per-model basis, in order to simulate a scenario where numerical error in computation of marginal likelihood will be non-negligible.
A temperature ladder with $m = 10$ rungs was employed, for the same reason, according to the recommendation of \cite{Calderhead2009}.
No convergence issues were experienced; the same MCMC set-up has previously been successfully used in \cite{Oates2016}.

\paragraph{Prior elicitation:} 

Here we motivate a prior for the unknown function $g$ based on the work of \cite{Calderhead2009}, who advocated the use of a power-law schedule $t_i = (\frac{i-1}{m-1})^5$, $i = 1,\dots,m$, based on an extensive empirical comparison of possible schedules.
A ``good'' temperature schedule approximately satisfies the criterion $|g(t_i) (t_{i+1} - t_i)| \approx m^{-1}$, on the basis that this allocates equal area to the portions of the curve $g$ that lie between $t_i$ and $t_{i+1}$, controlling bias for the trapezium rule.
Substituting $t_i = (\frac{i-1}{m-1})^5$ into this optimality criterion produces $|g(t_i)| ((i+1)^5 - i^5) \approx m^4$.
Now, letting $i = \theta m$, we obtain $|g(\theta^5)| (5 \theta^4 m^4 + o(m^4)) \approx m^4$.
Formally treating $\theta$ as continuous and taking the $m \rightarrow \infty$ limit produces $|g(\theta^5)| \approx 0.2 \theta^{-4}$ and so $|g(t)| \approx 0.2 t^{-4/5}$.
From this we conclude that the transformed function $h(t) = 5 t^{4/5} g(t)$ is approximately stationary and can reasonably be assigned a stationary GP prior.
However, in an importance sampling transformation we require that $\pi(t)$ has support over $[0,1]$.
For this reason we took $\pi(t) = 1.306/(0.01 + 5t^{4/5})$ in our experiment.

\paragraph{Variance computation:}

The covariance matrix $\bm{\Sigma}$ cannot be obtained in closed-form due to intractability of the kernel mean $\Pi_{t_i}[k_f(\cdot,\bm{\theta})]$.
We therefore explored an approximation ${_a}\bm{\Sigma}$ such that plugging in ${_a}\bm{\Sigma}$ in place of $\bm{\Sigma}$ provides an approximation to the posterior variance $\mathbb{V}_n[\log p(\bm{y})]$ for the log-marginal likelihood.
This took the form
\begin{eqnarray*}
{_a}\Sigma_{i,j} & := & {_a}\Pi_{t_i} {_a}\Pi_{t_j}[k_f(\cdot,\cdot)] - {_a}\Pi_{t_i}[\bm{k}_f(\cdot,X)] \bm{K}_f^{-1} {_a}\Pi_{t_j}[\bm{k}_f(X,\cdot)]
\end{eqnarray*}
where an empirical distribution ${_a}\pi = \frac{1}{100} \sum_{i=1}^{100} \delta_{\bm{x}_i}$ was employed based on the first $m = 100$ samples, while the remaining samples $X = \{\bm{x}_i\}_{i=101}^{200}$ were reserved for the kernel computation.
This heuristic approach becomes exact as $m \rightarrow \infty$, in the sense that ${_a}\Sigma_{i,j} \rightarrow \Sigma_{i,j}$, but under-estimates covariance at finite $m$.

\paragraph{Kernel choice:}

In experiments below, both $k_f$ and $k_h$ were taken to be Gaussian covariance functions; for example: 
$
k_f(\bm{x},\bm{x}') = \lambda_f \exp\big(- \|\bm{x} - \bm{x}'\|_2^2/2\sigma_f^2 \big)
$
parametrised by $\lambda_f$ and $\sigma_f$.
This choice was made to capture smoothness of both integrands $f$ and $h$ involved.
For this application we found that, while the $\sigma$ parameters were possible to learn from data using EB, the $\lambda$ parameters required a large number of data to pin down.
Therefore, for these experiments we fixed $\lambda_f = 0.1 \times \text{mean}(f_{i,j})$ and $\lambda_h = 0.01 \times \text{mean}(h_i)$.
In both cases the remaining kernel parameters $\sigma$ were selected using EB.

\paragraph{Data generation:}

As a test-bed that captures the salient properties of model selection discussed in the main text, we considered variable selection for logistic regression:
\begin{eqnarray*}
p(\bm{y} | \bm{\beta}) & = & \prod_{i=1}^N p_i(\bm{\beta})^{y_i} [1 - p_i(\bm{\beta})]^{1 - y_i} \\
\text{logit}(p_i(\bm{\beta})) & = & \gamma_1 \beta_1 x_{i,1} + \dots \gamma_d \beta_d x_{i,d}, \quad \gamma_1,\dots,\gamma_d \in \{0,1\}
\end{eqnarray*}
where the model $\mathcal{M}_k$ specifies the active variables via the binary vector $\bm{\gamma} =  (\gamma_1,\dots,\gamma_d)$.
A model prior $p(\bm{\gamma}) \propto d^{-\|\bm{\gamma}\|_1}$ was employed.
Given a model $\mathcal{M}_k$, the active parameters $\beta_j$ were endowed with independent priors $\beta_j \sim \mathcal{N}(0,\tau^{-1})$, where here $\tau = 0.01$.

A single dataset of size $N = 200$ were generated from model $\mathcal{M}_1$ with parameter $\bm{\beta} = (1,0,\dots,0)$; as such the problem is under-determined (there are in principle $2^{10} = 1024$ different models) and the true model is not well-identified.
The selected model is thus sensitive to numerical error in the computation of marginal likelihood.
In practice we limited the model space to consider only models with $\sum \gamma_i \leq 2$; this speeds up the computation and, in this particular case, only rules out models that have much lower posterior probability than the actual MAP model.
There were thus 56 models being compared.


\subsection{Case Study \#2} \label{teal appendix}

\paragraph{Background on the model:}
The Teal South model is a PDE computer model for an oil reservoir. The model studied is on an $11 \times 11$ grid with 5 layers. It has 9 parameters representing physical quantities of interest. 
These include horizontal permeabilities for each of the 5 layers, the vertical to horizontal permeability ratio, aquifer strength, rock compressibility and porosity. For our experiments, we used an emulator of the likelihood model documented in \cite{Lan2016} in order to speed up MCMC; however this might be undesirable in general due to the additional uncertainty associated with the approximation in the results obtained.

\paragraph{Kernel choice:} The numerical results in Sec. \ref{computer section} were obtained using a Mat\'{e}rn $\alpha=3/2$ kernel given by $k(r)= \lambda^2 \big(1 + \sqrt{3}r/\sigma\big) \exp \big(-\sqrt{3}r/\sigma \big)$ where $r=\|\bm{x}-\bm{y}\|_2$, which corresponds to the Sobolev space $\mathcal{H}_{3/2}$. 
We note that $f\in \mathcal{H}_{3/2}$ is satisfied. 
We used EB over the length-scale parameter $\sigma$, but fixed the amplitude parameter to $\lambda=1$.

\paragraph{Variance computation:} Due to intractability of the posterior distribution, the kernel mean $\mu(\pi)$ is unavailable in closed form.
To overcome this, the methodology in Supplement \ref{scalability appendix} was employed to obtain an empirical estimate of the kernel mean (half of the MCMC samples were used with BC weights to approximate the integral and the other half with MC weights to approximate the kernel mean).
Eqn. \ref{eq:upper_bound} was used to upper bound the intractable BC posterior variance. For the upper bound to hold, states ${_a}\bm{x}_j$ must be independent samples from $\pi$, whereas here they were obtained using MCMC and were therefore not independent.
In order to ensure that MCMC samples were ``as independent as possible'' we employed sophisticated MCMC methodology developed by \cite{Lan2016}.
Nevertheless, we emphasise that there is a gap between theory and practice here that we hope to fill in future research. For the results in this paper we fixed $\delta = 0.05$ in Eqn. \ref{eq:upper_bound2}, so that $C_{n,\gamma} = C_{n,\gamma,0.05}$ is essentially a $95(1-\gamma)\%$ credible interval.
A formal investigation into the theoretical properties of the uncertainty quantification studied by these methods is not provided in this paper.


\subsection{Case Study \#3} \label{logistic appendix}

\paragraph{Kernel choice:}

The (canonical) \emph{weighted} Sobolev space $\mathcal{S}_{\alpha,\bm{\gamma}}$ is defined by taking each of the component spaces $\mathcal{H}_u$ to be Sobolev spaces of dominating mixed smoothness $\mathcal{S}_\alpha$. i.e. the space $\mathcal{H}_u$ is norm-equivalent to a tensor product of $|u|$ one-dimensional Sobolev spaces, each with smoothness parameter $\alpha$.
Constructed in this way, $\mathcal{S}_{\alpha,\bm{\gamma}}$ is an RKHS with kernel
\begin{equation*}
k_{\alpha,\bm{\gamma}}(\bm{x},\bm{x}') = \sum_{u \subseteq \mathcal{I}} \gamma_u \prod_{i \in u} \left( \sum_{k=1}^\alpha \frac{B_k(x_i)B_k(x_i')}{(k!)^2} - (-1)^\alpha \frac{B_{2\alpha}(|x_i-x_i'|)}{(2\alpha)!} \right),
\end{equation*}
where the $B_k$ are Bernoulli polynomials.

\paragraph{Theoretical results:}

In finite dimensions $d < \infty$, we can construct a higher-order digital net that attains optimal QMC rates for weighted Sobolev spaces:
\begin{theorem} \label{WSS}
Let $\mathcal{H}$ be an RKHS that is norm-equivalent to $\mathcal{S}_{\alpha,\bm{\gamma}}$. Then BQMC based on a digital $(t,\alpha,1,\alpha m \times m,d)$-net over $\mathbb{Z}_b$ attains the optimal rate $\|\hat{\Pi}_{\textnormal{BQMC}} - \Pi \|_{\mathcal{H}^*} = O(n^{-\alpha+\epsilon})$ for any $\epsilon > 0$, where $n = b^m$.
\end{theorem}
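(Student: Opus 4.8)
The plan is to follow the same three-step template used in the proof of Thm.~\ref{theo:BQMC_sobolev}, adapting each step to the weighted kernel. First I would invoke the quasi-Monte Carlo theory for weighted Sobolev spaces of dominating mixed smoothness: the weighted-space analogue of Theorem 15.21 of \cite{Dick2010} asserts that, for $\alpha \geq 2$, $\alpha \in \mathbb{N}$, the equal-weight rule $\hat{\Pi}_{\text{QMC}}$ based on a higher-order digital $(t,\alpha,1,\alpha m \times m,d)$-net over $\mathbb{Z}_b$ satisfies $\|\hat{\Pi}_{\text{QMC}} - \Pi\|_{\mathcal{S}_{\alpha,\bm{\gamma}}^*} \leq C_{d,\alpha,\bm{\gamma}} (\log n)^{d\alpha} n^{-\alpha} = O(n^{-\alpha+\epsilon})$, with a constant that now also depends on the weights $\bm{\gamma}$.

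The structural fact that makes this adaptation painless is the additive decomposition of the weighted kernel. Since $d < \infty$, the kernel $k_{\alpha,\bm{\gamma}} = \sum_{u \subseteq \mathcal{I}} \gamma_u k_u$ is a finite sum over the $2^d$ subsets of $\mathcal{I}$. By Fact~\ref{prop:WCE_kernelmean_equivalence}, the squared WCE of any cubature rule with weights $\bm{w}$ is linear in the kernel, so it decomposes as $\|\hat{\Pi} - \Pi\|_{\mathcal{S}_{\alpha,\bm{\gamma}}^*}^2 = \sum_{u \subseteq \mathcal{I}} \gamma_u (\bm{w}^\top \bm{K}_u \bm{w} - 2 \bm{w}^\top \bm{z}_u + \Pi[\mu_u])$, where $\bm{K}_u$, $\bm{z}_u$ and $\mu_u$ are the kernel matrix, kernel-mean vector and kernel mean associated with the component space $\mathcal{H}_u$. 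Each bracketed term is the squared WCE of the same rule viewed in $\mathcal{H}_u$, which is norm-equivalent to $\mathcal{S}_\alpha$ on the active coordinates $u$. Because the higher-order digital net projects to a good net in each of these coordinate subspaces, the unweighted result of Thm.~\ref{theo:BQMC_sobolev} gives each component WCE at rate $O(n^{-\alpha+\epsilon})$ uniformly in $u$; summing over the finitely many $u$ with bounded weights $\gamma_u$ preserves the rate.

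With the QMC bound in hand, the remaining two steps are immediate. Norm-equivalence of $\mathcal{H}$ to $\mathcal{S}_{\alpha,\bm{\gamma}}$ yields a constant $\lambda > 0$ with $\|\cdot\|_{\mathcal{H}^*} \leq \lambda \|\cdot\|_{\mathcal{S}_{\alpha,\bm{\gamma}}^*}$, so the same rate holds in $\mathcal{H}^*$. Finally, Lemma~\ref{theo:_existing} (Bayesian re-weighting) shows that the BQMC rule, which employs the RKHS-optimal weights $\bm{w}^{\text{BC}} = \bm{K}^{-1}\bm{z}$ on the digital-net points, has WCE no larger than that of the equal-weight QMC rule on the same points; hence $\|\hat{\Pi}_{\text{BQMC}} - \Pi\|_{\mathcal{H}^*} = O(n^{-\alpha+\epsilon})$ as claimed.

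I expect the main obstacle to be bookkeeping rather than conceptual: one must confirm that the higher-order digital net construction is genuinely tailored to the weighted space, so that the $\bm{\gamma}$-dependence enters only through the constant and not the exponent, and that the projection of the net onto each coordinate subset $u$ retains the $(t,\alpha,\dots)$-net property needed to invoke the unweighted bound. In finite dimensions this is guaranteed and the constant $C_{d,\alpha,\bm{\gamma}}$ stays finite; the only place where real care would be required is the infinite-dimensional limit (not treated here), where summability conditions on $\bm{\gamma}$ would be needed to keep $\sum_{u} \gamma_u$ bounded.
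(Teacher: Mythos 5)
Your proof follows the same route as the paper's: the weighted-space case of Theorem 15.21 of \cite{Dick2010} supplies the $O(n^{-\alpha+\epsilon})$ bound for the equal-weight QMC rule on the higher-order digital net, and Lemma~\ref{theo:_existing} (together with norm-equivalence) transfers this to the optimally re-weighted BQMC rule. The additive decomposition of the squared WCE over the subsets $u \subseteq \mathcal{I}$ is a correct but unnecessary elaboration, since Dick's theorem already covers the weighted space directly.
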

\begin{proof}
This follows by combining Thm. 15.21 of \cite{Dick2010} with Lemma \ref{theo:_existing}.
\end{proof}

The QMC rules in Theorem \ref{WSS} do not explicitly take into account the values of the weights $\bm{\gamma}$.
An algorithm that tailors QMC states to specific weights $\bm{\gamma}$ is known as the \emph{component by component} (CBC) algorithm; further details can be found in \citep{Kuo2003}.
In principle the CBC algorithm can lead to improved rate constants in high dimensions, because effort is not wasted in directions where $f$ varies little, but the computational overheads are also greater.
We did not consider CBC algorithms for BQMC in this paper.

Note that the weighted Hilbert space framework allows us to bound the WCE \emph{independently of dimension} providing that $\sum_{u \in \mathcal{I}} \gamma_u < \infty$ \citep{Sloan1998}.
This justifies the use of ``high-dimensional'' in this context.
Further details are provided in Sec. 4.1 of \cite{Dick2013}.


\subsection{Case Study \#4} \label{appendix illumination}

\paragraph{Kernel choice:} 

The function spaces that we consider are Sobolev spaces $\mathcal{H}_{\alpha}(\mathbb{S}^d)$ for $\alpha > d/2$, obtained using the reproducing kernel $k(\bm{x},\bm{x}') = \sum_{l=0}^\infty \lambda_l P_l^{(d)} (\bm{x}^\top \bm{x}')$, $\bm{x},\bm{x}' \in \mathbb{S}^d$, where $\lambda_l \asymp (1+l)^{-2\alpha}$ and $P_l^{(d)}$ are normalised Gegenbauer polynomials \citep{Brauchart2012}.
A particularly simple expression for the kernel in $d=2$ and Sobolev space $\alpha=3/2$ can be obtained by taking $\lambda_0 = 4/3$ along with $\lambda_l = - \lambda_0 \times (-1/2)_l/(3/2)_l$ where $(a)_l = a(a+1)\dots(x+l-1) = \Gamma(a+l) / \Gamma(a)$ is the Pochhammer symbol.
Specifically, these choices produce $k(\bm{x},\bm{x}') = 8/3 - \|\bm{x} - \bm{x}'\|_2$, $\bm{x},\bm{x}' \in \mathbb{S}^2$.
This kernel is associated with a tractable kernel mean $\mu(\pi)(\bm{x}) =  \int_{\mathbb{S}^2} k(\bm{x},\bm{x}') \mathrm{d}\pi(\bm{x}') = 4/3$
and hence the initial error is also available $\Pi[\mu(\pi)]= \int_{\mathbb{S}^2} \mu(\pi)(\bm{x}) \mathrm{d}\pi(\bm{x}') = 4/3$.

\begin{figure}[t!]
\centering
\includegraphics[trim= 0 -3cm 0 0, clip=true,width = 0.25\textwidth]{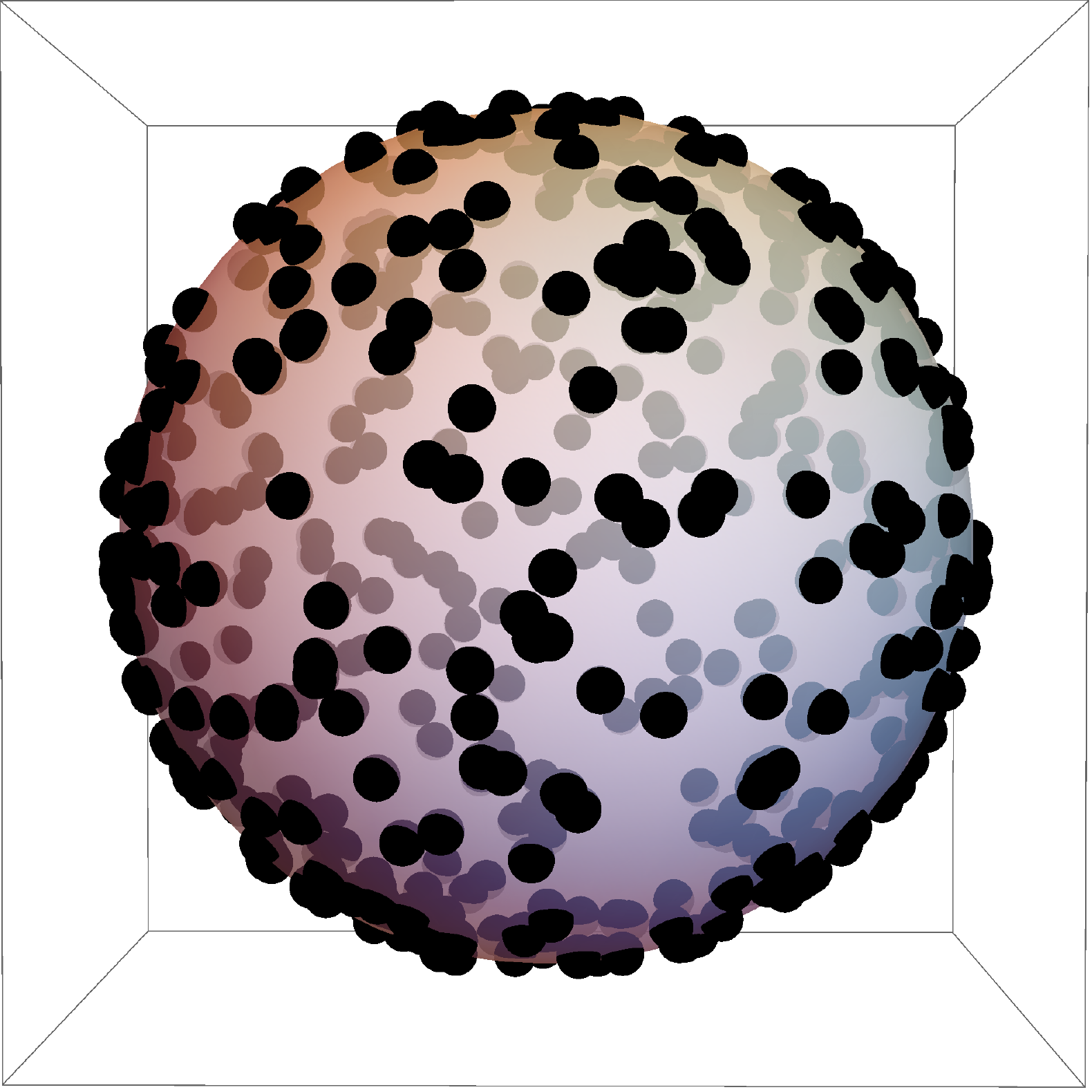}
\hspace{30pt}
\includegraphics[trim= 0 0 0 0.35cm, clip=true, width = 0.65\textwidth]{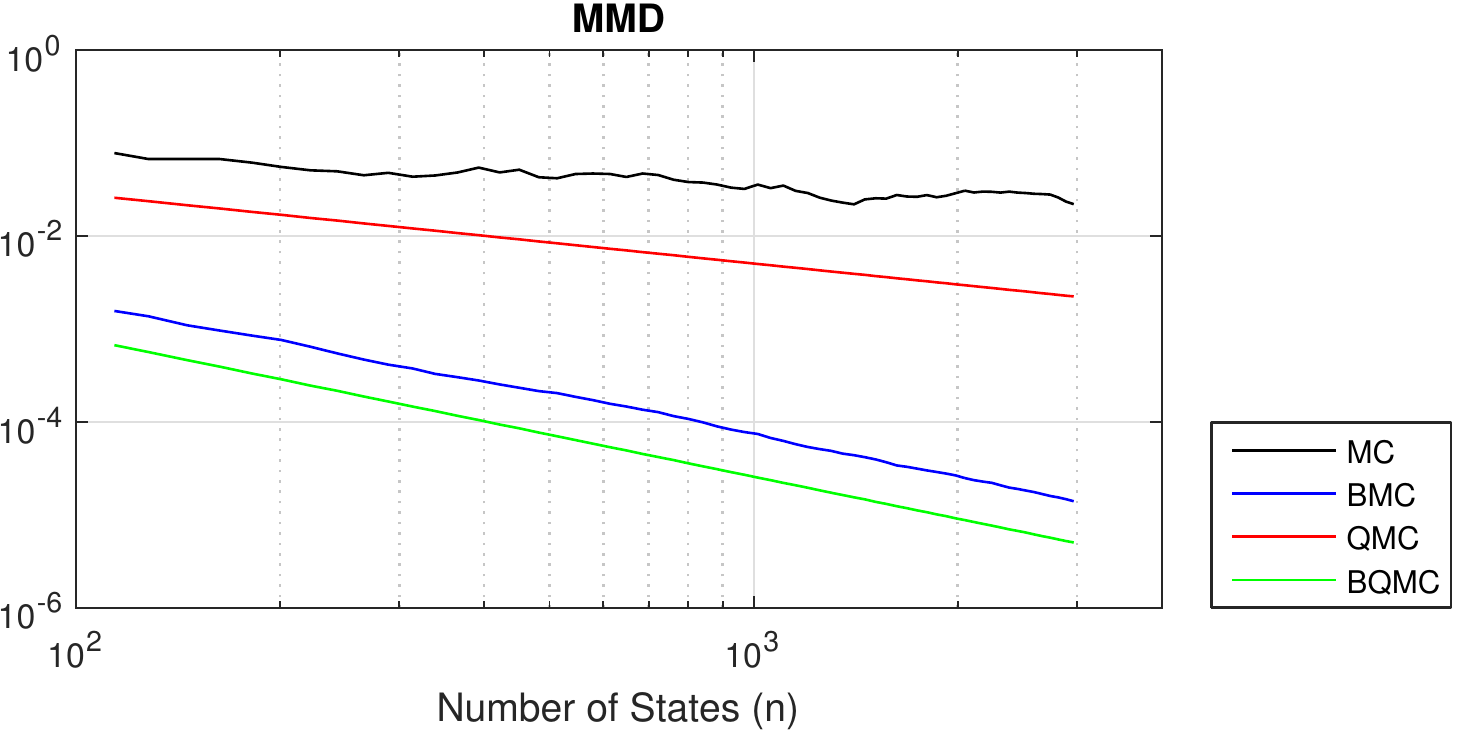}
\caption{Application to global illumination integrals in computer graphics. 
\textit{Left:} A spherical $t$-design over $\mathbb{S}^2$. \textit{Right} The WCE, or worst-case-error, for Monte Carlo (MC), Bayesian MC (BMC), Quasi MC (QMC) and Bayesian QMC (BQMC).
}
\label{realpictures_sphere SI}
\end{figure}

\paragraph{Theoretical results:} 

The states $\{\bm{x}_i\}_{i=1}^n$ could be generated with MC.
In that case, analogous results to those obtained in Sec. \ref{sec:theory_BMC} can be obtained.
Specifically, from Thm. 7 of \cite{Brauchart2012} and Bayesian re-weighting (Lemma \ref{theo:_existing}), classical MC leads to slow convergence $\|\hat{\Pi}_{\text{MC}} - \Pi \|_{\mathcal{H}^*} = O_P(n^{-1/2})$.
The regression bound argument (Lemma \ref{theo:_GPs}) together with a functional approximation result in \citet[][Thm. 3.2]{LeGia2012}, gives a faster rate for BMC of $\|\hat{\Pi}_{\text{BMC}} - \Pi \|_{\mathcal{H}^*} = O_P(n^{-3/4})$ in dimension $d = 2$.

Rather than focus on MC methods, we present results based on spherical QMC point sets.
We briefly introduce the concept of a \emph{spherical $t$-design} \citep{Bondarenko2013} which is define as a set $\{\bm{x}_i\}_{i=1}^n \subset \mathbb{S}^d$ satisfying $\int_{\mathbb{S}^d} f \mathrm{d}\pi = \frac{1}{n} \sum_{i=1}^n f(\bm{x}_i)$ for all polynomials $f : \mathbb{S}^d \rightarrow \mathbb{R}$ of degree at most $t$.
(i.e. $f$ is the restriction to $\mathbb{S}^d$ of a polynomial in the usual Euclidean sense $\mathbb{R}^{d+1} \rightarrow \mathbb{R}$). 
\begin{theorem} \label{sphere statement}
For all $d \geq 2$ there exists $C_d$ such that for all $n \geq C_d t^d$ there exists a spherical $t$-design on $\mathbb{S}^d$ with $n$ states.
Moreover, for $\alpha = 3/2$ and $d = 2$, the use of a spherical $t$-designs leads to a rate $\|\hat{\Pi}_{\textnormal{BQMC}} - \Pi \|_{\mathcal{H}^*} = O(n^{-3/4})$.
\end{theorem}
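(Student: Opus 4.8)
The statement splits into an existence claim and a rate claim, which I would treat separately. For the existence claim the plan is simply to invoke the theorem of Bondarenko, Radchenko and Viazovska \cite{Bondarenko2013}: they proved that there is a dimensional constant $C_d$ such that spherical $t$-designs on $\mathbb{S}^d$ exist for every $n \geq C_d t^d$. Combined with the classical lower bound $n \gtrsim t^d$, this yields the two-sided relation $n \asymp t^d$, equivalently $t \asymp n^{1/d}$, which is the only consequence of the existence result that the rate argument needs. I would not attempt to reprove this deep fact.

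For the rate claim I would mirror the proof of Thm. \ref{prop:consistency_BMCMC}, replacing the random fill distance by the deterministic one of the design. Since a spherical $t$-design assigns equal weights $1/n$, it is a cubature rule in the sense of Eqn. \ref{eq:quadrature}; hence Bayesian re-weighting (Lemma \ref{theo:_existing}) gives $\|\hat{\Pi}_{\textnormal{BQMC}} - \Pi\|_{\mathcal{H}^*} \leq \|\hat{\Pi}_{\textnormal{QMC}} - \Pi\|_{\mathcal{H}^*}$, so it suffices to bound the equal-weight error. I would then apply the regression bound (Lemma \ref{theo:_GPs}), $|\Pi[f] - \hat{\Pi}_{\textnormal{BC}}[f]| \leq \|f - m_n\|_2$, together with a scattered-data approximation estimate on the sphere \citep[Thm. 3.2]{LeGia2012}, which gives $\|f - m_n\|_2 \leq C h_X^\alpha \|f\|_{\mathcal{H}}$ for the kernel interpolant $m_n$ in $\mathcal{H}_\alpha(\mathbb{S}^d)$, where $h_X$ is the fill distance of the design. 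Taking the supremum over $\|f\|_{\mathcal{H}} \leq 1$ then yields $\|\hat{\Pi}_{\textnormal{BQMC}} - \Pi\|_{\mathcal{H}^*} \leq C h_X^\alpha$.

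The remaining step, and the main obstacle, is to control $h_X$ for the designs supplied by \cite{Bondarenko2013}. The key fact is that an optimal-order spherical $t$-design, with $n \asymp t^d$, is quasi-uniform, with fill distance $h_X \asymp t^{-1} \asymp n^{-1/d}$: the covering bound $h_X \lesssim t^{-1}$ holds because a $t$-design cannot leave a spherical cap of radius much larger than $t^{-1}$ uncovered without a suitable nonnegative degree-$t$ polynomial concentrated on that cap violating exact integration. Substituting $h_X \asymp n^{-1/d}$ gives $\|\hat{\Pi}_{\textnormal{BQMC}} - \Pi\|_{\mathcal{H}^*} = O(n^{-\alpha/d})$, and setting $\alpha = 3/2$, $d = 2$ produces the claimed $O(n^{-3/4})$. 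An alternative route, avoiding the fill-distance estimate entirely, is to quote the worst-case-error bound $O(t^{-\alpha})$ established directly for positive-weight rules that integrate all polynomials up to degree $t$ exactly in Sobolev spaces on the sphere \citep{Brauchart2012}, and then pass to BQMC via Lemma \ref{theo:_existing}. In either route I expect the delicate point to be the sharp high-frequency (equivalently, fill-distance) estimate, since the naive termwise bound on the Gegenbauer tail $\sum_{l > t} \lambda_l$ using $|P_l^{(d)}| \leq 1$ is too crude and returns a suboptimal exponent; the exactness of the design on low frequencies, or its quasi-uniformity, must be exploited to recover the full order $\alpha$.
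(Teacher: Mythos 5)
Your proposal is correct, and in fact your ``alternative route'' \emph{is} the paper's proof: the paper simply combines the existence result of \cite{Bondarenko2013} (so that $n \asymp t^d$), the worst-case-error bound $O(t^{-\alpha})$ for equal-weight, polynomially exact cubature on $\mathbb{S}^2$ in $\mathcal{H}_{3/2}(\mathbb{S}^2)$ due to \cite{Hesse2005} (you cite \cite{Brauchart2012} for the same bound, which also works), and Lemma \ref{theo:_existing} to transfer the bound from the uniform weights to the Bayesian weights. Your primary route is genuinely different: it mirrors the proof of Thm.~\ref{prop:consistency_BMCMC}, using the regression bound (Lemma \ref{theo:_GPs}) plus the spherical scattered-data estimate of \cite{LeGia2012} and the quasi-uniformity $h_X \asymp t^{-1} \asymp n^{-1/d}$ of optimal-order designs. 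This is the argument the paper uses for \emph{BMC} on the sphere (with the random fill distance), and it would also prove the theorem, but it requires the covering-radius bound $h_X \lesssim t^{-1}$ for $t$-designs as an additional ingredient not present in the paper's toolkit; your cap-based sketch of that bound is correct in spirit but is itself a nontrivial result that would need a citation or a full proof. One small logical tidy-up in that route: once you invoke the regression bound you are bounding the error of the optimally weighted (Bayesian) rule directly, so the preliminary reduction via Lemma \ref{theo:_existing} to the equal-weight rule is redundant there --- Lemma \ref{theo:_existing} is only needed in the Hesse--Sloan/Brauchart route, where the external bound is stated for the equal-weight rule. You also correctly identify why the naive Gegenbauer tail bound fails and exactness on low degrees must be exploited, which is precisely what the cited worst-case-error results do.
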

\begin{proof}
This property of spherical $t$-designs follows from combining \cite{Hesse2005,Bondarenko2013} and Lemma \ref{theo:_existing}.
\end{proof}
The rate in Thm. \ref{sphere statement} is best-possible for a deterministic method in $\mathcal{H}_{3/2}(\mathbb{S}^2)$ \citep{Brauchart2012}.
Although explicit spherical $t$-designs are not currently known in closed-form, approximately optimal point sets have been computed\footnote{our experiments were based on such point sets provided by R. Womersley on his website \url{http://web.maths.unsw.edu.au/~rsw/Sphere/EffSphDes/sf.html} [Accessed 24 Nov. 2015].} numerically to high accuracy.
Additional theoretical results on point estimates can be found in \cite{Fuselier2014}.
In particular they consider the conditioning of the associated linear systems that must be solved to obtain BC weights.

\paragraph{Numerical results:}
In Fig. \ref{realpictures_sphere SI}, the value of the WCE is plotted\footnote{the environment map used in this example is freely available at: \url{http://www.hdrlabs.com/sibl/archive.html} [Accessed 23 May 2017].} for each of the four methods considered (MC, QMC, BMC, BQMC) as the number of states increases. 
Both BMC and BQMC appear to attain the same rate for $\mathcal{H}_{3/2}(\mathbb{S}^2)$, although BQMC provides a constant factor improvement over BMC. Note that $O(n^{-3/4})$ was shown by \cite{Brauchart2012} to be best-possible for a deterministic method in the space $\mathcal{H}_{3/2}(\mathbb{S}^2)$.

\FloatBarrier



\end{document}